\newcommand{\Bismarck}{\small\textsf{Bismarck}\normalsize}
\newcommand{\PSGD}{\small\textsf{PSGD}\normalsize}
\newcommand{\eat}[1]{}
\title{
  Bolt-on Differential Privacy for Scalable \\Stochastic Gradient Descent-based Analytics
}
\author{
  Xi Wu$^{1}$\thanks{\scriptsize Work done while at UW-Madison.\vskip 1pt} \hspace{10mm}
  Fengan Li$^{1}\footnotemark[1]$ \hspace{10mm}
  Arun Kumar$^2$ \hspace{10mm}
  Kamalika Chaudhuri$^2$ \hspace{10mm}\\
  \vspace{1mm}
  Somesh Jha$^3$ \hspace{15mm}
  Jeffrey Naughton$^{1}\footnotemark[1]$ \\
  \vspace{2mm}
  $^{1}$Google\hspace{10mm}
  $^{2}$University of California, San Diego \hspace{10mm}
  $^{3}$University of Wisconsin-Madison \\
  \vspace{2mm}
  $^{1}$\{wuxi, fenganl, naughton\}@google.com,
  $^2$\{arunkk, kamalika\}@eng.ucsd.edu, \\
  $^3$jha@cs.wisc.edu
}
\begin{document}
\maketitle

\begin{abstract}
  While significant progress has been made separately on analytics systems
  for scalable stochastic gradient descent (SGD) and private SGD,
  none of the major scalable analytics frameworks have incorporated
  differentially private SGD.
  There are two inter-related issues for this disconnect between research
  and practice: (1) low model accuracy due to added noise to guarantee privacy,
  and (2) high development and runtime overhead of the private algorithms.
  This paper takes a first step to remedy this disconnect and
  proposes a private SGD algorithm to address {\em both} issues
  in an integrated manner.
  In contrast to the white-box approach adopted by previous work,
  we revisit and use the classical technique of {\em output perturbation} to 
  devise a novel ``bolt-on'' approach to private SGD.
  While our approach trivially addresses (2), it makes (1) even more challenging.
  We address this challenge by providing a novel analysis of the $L_2$-sensitivity of SGD,
  which allows, under the same privacy guarantees, better convergence of SGD 
  when only a constant number of passes can be made over the data.
  We integrate our algorithm, as well as other state-of-the-art differentially private SGD,
  into \Bismarck{}, a popular scalable SGD-based analytics system on top of an RDBMS. 
  Extensive experiments show that  our algorithm can be easily integrated, incurs virtually 
  no overhead, scales well, and most importantly, yields substantially better (up to 4X)
  test accuracy than the state-of-the-art algorithms on many real datasets.
\end{abstract}


\section{Introduction}
\label{sec:intro}
The past decade has seen significant interest from both the data management industry and 
academia in integrating machine learning (ML) algorithms into scalable data processing systems 
such as RDBMSs~\cite{madlib,FKRR12}, Hadoop~\cite{mahout}, and Spark~\cite{spark}. 
In many data-driven applications such as personalized medicine, finance, web search, 
and social networks, there is also a growing concern about the privacy of individuals. 
To this end, {\em differential privacy}, a cryptographically motivated notion,
has emerged as the gold standard for protecting data privacy.
{\em Differentially private ML} has been extensively studied by researchers from 
the database, ML, and theoretical computer science 
communities~\cite{BST14, CMS11, DJW13, JT13, KST12, ZXYZW13, ZZXYW12}.

In this work, we study differential privacy for {\em stochastic gradient descent 
(SGD)}, which has become the optimization algorithm of choice in many scalable
ML systems, especially in-RDBMS analytics systems. For example, \Bismarck~\cite{FKRR12} offers a 
highly efficient in-RDBMS implementation of SGD to provide a single framework 
to implement many convex analysis-based ML techniques. Thus, creating a private 
version of SGD would automatically provide private versions of all these ML techniques.

While previous work has {\em separately} studied in-RDBMS SGD and differentially private SGD,
our conversations with developers at several database companies revealed that none 
of the major in-RDBMS ML tools have incorporated differentially private SGD.
There are two inter-related reasons for this disconnect between research and practice:
(1) low model accuracy due to the noise added to guarantee privacy,
and (2) high development and runtime overhead of the private algorithms.
One might expect that more sophisticated private algorithms might be needed to address 
issue (1) but then again, such algorithms might in turn exacerbate issue (2)!
   

To understand these issues better, we integrate two state-of-the-art differentially private 
SGD algorithms -- Song, Chaudhuri and Sarwate (SCS13~\cite{SCS13}) and Bassily, Smith and Thakurta
(BST14 \cite{BST14}) -- into the in-RDBMS SGD architecture of \Bismarck. SCS13 adds noise
{\em at each iteration} of SGD, enough to make the iterate differentially private.
BST14 reduces the amount of noise per iteration by subsampling and can guarantee
optimal convergence using $O(m)$ passes over the data (where $m$ is the
training set size); however, in many real applications, we can only afford a constant number
of passes, and hence, we derive and implement a version for $O(1)$ passes.
Empirically, we find that both algorithms suffer from both issues (1) and (2):
their accuracy is much worse than the accuracy of non-private SGD, while their 
``white box'' paradigm requires deep code changes that require modifying the 
gradient update steps of SGD in order to inject noise. In turn, these changes for 
repeated noise sampling lead to a significant runtime overhead. 


In this paper, we take a first step towards mitigating both issues in an
integrated manner.  In contrast to the white box approach of prior work, 
we consider a new approach to differentially private SGD in which we treat the 
SGD implementation as a ``black box'' and inject noise \textit{only at the end}.
In order to make this \textit{bolt-on} approach feasible, we revisit and use the 
classical technique of {\em output perturbation}~\cite{DMNS06}.  
An immediate consequence is that our approach can be trivially integrated into 
\textit{any} scalable SGD system, including in-RDBMS analytics systems such 
as \Bismarck{}, with no changes to the internal code. 
Our approach also incurs virtually no runtime overhead and preserves the 
scalability of the existing system.

While output perturbation obviously addresses the runtime and integration challenge,
it is unclear what its effect is on model accuracy. 
In this work, we provide a novel analysis that leads to an output perturbation procedure with {\em{higher model accuracy}} than the state-of-the-art private SGD algorithms. 
The essence of our solution is a new bound on the $L_2$-sensitivity of SGD which allows, under the same privacy
guarantees, better convergence of SGD when only a constant number of passes over
the data can be made. As a result, our algorithm produces private models that
are significantly more accurate than both SCS13 and BST14 for practical problems. 
Overall, this paper makes the following contributions:

\begin{itemize}
\item We propose a novel bolt-on differentially private algorithm for SGD 
  based on output perturbation. An immediate consequence of our approach is 
  that our algorithm directly inherits many desirable properties of SGD, while 
  allowing easy integration into existing scalable SGD-based analytics systems.


\item We provide a novel analysis of the $L_2$-sensitivity of SGD that leads to 
an output perturbation procedure with higher model accuracy than the 
state-of-the-art private SGD algorithms. Importantly, our analysis allows better 
convergence when one can only afford running a {\em constant number of passes} over the data,
which is the typical situation in practice.
Key to our analysis is the use of the well-known {\em expansion properties} of gradient operators~\cite{Nesterov04, Polyak87}.


\item We integrate our private SGD algorithms, SCS13, and BST14 into \Bismarck{} and 
  conduct a comprehensive empirical evaluation.
  We explain how our algorithms can be easily integrated with little development effort.
  Using several real datasets, we demonstrate that our algorithms run significantly faster, 
  scale well, and yield substantially better test accuracy
  (up to 4X) than SCS13 or BST14 for the same settings.
\end{itemize}

The rest of this paper is organized as follows:
In Section~\ref{sec:preliminaries} we present preliminaries.
In Section~\ref{sec:psgd}, we present our private SGD algorithms
and analyze their privacy and convergence guarantees. Along the way,
we extend our main algorithms in various ways to incorporate common practices of SGD.
We then perform a comprehensive empirical study in Section~\ref{sec:experiments}
to demonstrate that our algorithms satisfy key desired properties for in-RDBMS analytics:
ease of integration, low runtime overhead, good scalability, and high accuracy.
We provide more remarks on related theoretical work in Section~\ref{sec:related}
and conclude with future directions in Section~\ref{sec:conclusion}.


\section{Preliminaries}
\label{sec:preliminaries}
This section reviews important definitions and existing results.

\noindent\textbf{Machine Learning and Convex ERM}.
Focusing on supervised learning, we have a sample space $Z = X \times Y$,
where $X$ is a space of feature vectors and $Y$ is a label space.
We also have an ordered training set $((x_i,y_i))_{i=1}^m$.
Let $\calW \subseteq \bbR^d$ be a hypothesis space equipped with the standard
inner product and 2-norm $\|\cdot\|$.
\eat{\footnote{\small Using standard results
  in machine learning, our results easily extend to the case
  when the hypothesis $w$ lies in a Hilbert Space.}}
We are given a loss function $\ell: \calW \times Z \mapsto \Real$
which measures the how well a $w$ classifies an example $(x, y) \in Z$,
so that given a hypothesis $w \in \calW$ and a sample $(x,y) \in Z$,
we have a loss $\ell(w, (x,y))$.
Our goal is to minimize the {\em empirical risk}
over the training set $S$ (i.e., the empirical risk minimization, or ERM),
defined as $L_S(w) = \frac{1}{m}\sum_{i=1}^m \ell(w, (x_i, y_i))$.
Fixing $S$, $\ell_i(w) = \ell(w, (x_i, y_i))$ is a function of $w$.
In both in-RDBMS and private learning, {\em convex} ERM problems are common, 
where every $\ell_i$ is {\em convex}.
\red{We start by defining some basic properties of loss functions that will be 
needed later to present our analysis.}

\begin{definition}
  Let $f: \calW \mapsto \bbR$ be a function:
  \begin{itemize}
  \item $f$ is {\em convex} if for any $u, v \in \calW$,\\
    $f(u) \ge f(v) + \langle \nabla f(v), u-v \rangle$
  \item $f$ is {\em $L$-Lipschitz} if for any $u, v \in \calW$,\\
    $\|f(u) - f(v)\| \le L\|u-v\|$
  \item   $f$ is {\em $\gamma$-strongly convex} if\\
    $f(u) \ge f(v) + \langle \nabla f(v), u-v \rangle + \frac{\gamma}{2}\|u-v\|^2$
  \item $f$ is {\em $\beta$-smooth} if\\
    $\|\nabla f(u) - \nabla f(v)\| \le \beta\|u - v\|$
  \end{itemize}
\end{definition}

\red{
\noindent\textbf{Example: Logistic Regression}.
The above three parameters ($L$, $\gamma$, and $\beta$) are derived by analyzing the loss 
function. We give an example using the popular $L_2$-regularized logistic regression model
with the $L_2$ regularization parameter $\lambda$.
This derivation is standard in the optimization literature (e.g., see~\cite{BV04}).
We assume some {\em preprocessing} that normalizes each feature vector,
i.e., each $\|x\| \le 1$ \final{(this assumption is common
for analyzing private optimization~\cite{BST14,CMS11,SCS13}.
In fact, such preprocessing are also common for general
machine learning problems~\cite{ml-preprocessing}, not just private ones).}
Recall now that for $L_2$-regularized logistic regression
the loss function on an example $(x, y)$ with $y \in \{\pm 1\}$) is defined as follows:
\begin{align}
  \label{eq:logistic-loss}
  \ell(w, (x, y)) = \ln\big(1 + \exp(-y \langle w, x \rangle)\big) + \frac{\lambda}{2}\|w\|^2
\end{align}
Fixing $\lambda \ge 0$, we can obtain $L$, $\gamma$, and $\beta$ by looking at the expression 
for the gradient ($\nabla\ell(w)$) and the Hessian (${\bf H}(\ell(w))$).
$L$ is chosen as a tight upper bound on $\|\nabla \ell(w)\|$, $\beta$ is chosen as a tight 
upper bound on $\|{\bf H}(\ell(w))\|$, and $\gamma$ is chosen such that ${\bf H}(\ell(w)) \succeq \gamma I$, 
i.e., ${\bf H}(\ell(w)) - \gamma I$ is positive semidefinite).

Now there are two cases depending on whether $\lambda > 0$ or not.
If $\lambda = 0$ we do not have strong convexity (in this case it is only convex),
and we have  $L = \beta = 1$ and $\gamma = 0$.
If $\lambda > 0$, we need to assume a bound on the norm of the hypothesis $w$.
(which can be achieved by rescaling). In particular, suppose $\|w\| \le R$,
then together with $\|x\| \le 1$, we can deduce that
$L = 1 + \lambda R$, $\beta = 1 + \lambda$, and $\gamma = \lambda$.
We remark that these are indeed standard values in the literature
for $L_2$-regularized logistic loss~\cite{BV04}.

The above assumptions and derivation are common in the optimization literature~\cite{BV04,Bubeck15}.
In some ML models, $\ell$ is not differentiable, e.g., the hinge loss for the linear SVM~\cite{hinge-loss}. 
The standard approach in this case is to approximate it with a differentiable and smooth function. For example, for the hinge loss, there is a body of work on the so-called
\emph{Huber SVM}~\cite{hinge-loss}.
In this paper, we focus primarily on logistic regression as our example but we also discuss the Huber SVM
and present experiments for it in the appendix.
}

\noindent\textbf{Stochastic Gradient Descent}.
\red{SGD is a simple but popular optimization algorithm that performs many incremental gradient updates
instead of computing the full gradient of $L_S$.
At step $t$, given $w_t$ and a random example $(x_t, y_t)$, SGD's update rule is as follows:}
\begin{align}
  \label{gradient-update-rule}
  w_{t+1} = G_{\ell_t, \eta_t}(w_t) = w_t - \eta_t\ell_t'(w_t)
\end{align}
where $\ell_t(\cdot) = \ell(\cdot; (x_t, y_t))$ is the loss function
and $\eta_t \in \Real$ is a parameter called the {\em learning rate},
or {\em step size}. We will denote $G_{\ell_t, \eta_t}$ as $G_t$.
A form of SGD that is commonly used in practice is permutation-based SGD (PSGD):
first sample a random permutation $\tau$ of $[m]$
($m$ is the size of the training set $S$),
and then repeatedly apply (\ref{gradient-update-rule})
by cycling through $S$ according to $\tau$.
In particular, if we cycle through the dataset $k$ times,
it is called $k$-pass PSGD.

\red{We now define two important properties of gradient updates that are needed to understand 
the analysis of SGD's convergence in general, as well as our new technical results on 
differentially private SGD: {\em expansiveness} and {\em boundedness}.
Specifically, we use these definitions to introduce a simple but important recent optimization-theoretical
result on SGD's behavior by~\cite{HRS15} that we adapt and apply to our problem setting.
Intuitively, expansiveness tells us how much $G$ can expand or contract the distance between two 
hypotheses, while boundedness tells us how much $G$ modifies a given hypothesis. 
We now provide the formal definitions (due to~\cite{Nesterov04, Polyak87}).}

\begin{definition}[Expansiveness]
  \label{def:expansiveness}
  Let $G: \calW \mapsto \calW$ be an operator that maps a hypothesis
  to another hypothesis. $G$ is said to be $\rho$-expansive if
  $\sup_{w,w'} \frac{\| G(w) - G(w') \|}{\| w - w' \|} \le \rho.$
\end{definition}
\begin{definition}[Boundedness]
  \label{def:boundedness}
  Let $G: \calW \mapsto \calW$ be an operator that maps a hypothesis
  to another hypothesis. $G$ is said to be $\sigma$-bounded if
  $\sup_{w \in \calW} \| G(w) - w \| \le \sigma.$
\end{definition}

\begin{lemma}[Expansiveness (\cite{Nesterov04,Polyak87})]
  \label{lemma:expansiveness}
  Assume that $\ell$ is $\beta$-smooth. Then, the following hold.
  \begin{enumerate}[ref={\thelemma.\arabic*}]
  \item\label{lemma:expansiveness:smooth-convex}
    If $\ell$ is convex, then for any $\eta \le 2/\beta$,
    $G_{\ell,\eta}$ is $1$-expansive.
  \item\label{lemma:expansiveness:smooth-strongly-convex}
    If $\ell$ is $\gamma$-strongly convex,
    then for $\eta \le \frac{2}{\beta + \gamma}$,
    $G_{\ell,\eta}$ is $(1 - \frac{2\eta\beta\gamma}{\beta + \gamma})$-expansive.
  \end{enumerate}
\end{lemma}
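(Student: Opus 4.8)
The plan is to prove both parts of Lemma~\ref{lemma:expansiveness} directly from the definition of the gradient step $G_{\ell,\eta}(w) = w - \eta \nabla\ell(w)$ by estimating $\|G_{\ell,\eta}(w) - G_{\ell,\eta}(w')\|^2$ and relating it to $\|w-w'\|^2$. Writing $\Delta = w - w'$ and $g = \nabla\ell(w) - \nabla\ell(w')$, we have
\[
  \|G_{\ell,\eta}(w) - G_{\ell,\eta}(w')\|^2 = \|\Delta - \eta g\|^2 = \|\Delta\|^2 - 2\eta \langle \Delta, g\rangle + \eta^2 \|g\|^2 .
\]
So the whole argument reduces to lower-bounding the cross term $\langle \Delta, g\rangle = \langle w-w', \nabla\ell(w)-\nabla\ell(w')\rangle$ in terms of $\|\Delta\|^2$ and $\|g\|^2$. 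This is exactly where convexity and smoothness enter.

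For part~\ref{lemma:expansiveness:smooth-convex}: the key ingredient is the \emph{co-coercivity} of the gradient of a convex $\beta$-smooth function, namely $\langle w-w', \nabla\ell(w)-\nabla\ell(w')\rangle \ge \frac{1}{\beta}\|\nabla\ell(w)-\nabla\ell(w')\|^2$. I would either invoke this as a standard fact (Nesterov~\cite{Nesterov04}) or derive it quickly: define $h_w(u) = \ell(u) - \langle\nabla\ell(w), u\rangle$, note $h_w$ is convex, $\beta$-smooth, and minimized at $u=w$, so $h_w(w) \le h_w\big(w' - \tfrac1\beta\nabla h_w(w')\big) \le h_w(w') - \tfrac{1}{2\beta}\|\nabla h_w(w')\|^2$, and symmetrize. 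Plugging co-coercivity into the identity above gives
\[
  \|G_{\ell,\eta}(w)-G_{\ell,\eta}(w')\|^2 \le \|\Delta\|^2 - \Big(\tfrac{2\eta}{\beta} - \eta^2\Big)\|g\|^2,
\]
and for $\eta \le 2/\beta$ the coefficient $\tfrac{2\eta}{\beta}-\eta^2 \ge 0$, so the squared norm is at most $\|\Delta\|^2$, i.e. $G_{\ell,\eta}$ is $1$-expansive.

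For part~\ref{lemma:expansiveness:smooth-strongly-convex}: when $\ell$ is additionally $\gamma$-strongly convex, the sharpened estimate is the strong-convexity analogue of co-coercivity: $\langle w-w', \nabla\ell(w)-\nabla\ell(w')\rangle \ge \frac{\beta\gamma}{\beta+\gamma}\|\Delta\|^2 + \frac{1}{\beta+\gamma}\|g\|^2$. This follows by applying part's co-coercivity argument to the convex $(\beta-\gamma)$-smooth function $\ell(u) - \tfrac{\gamma}{2}\|u\|^2$ and rearranging. Substituting into the identity and choosing $\eta \le \frac{2}{\beta+\gamma}$ makes the $\|g\|^2$ coefficient $\eta^2 - \frac{2\eta}{\beta+\gamma} \le 0$, so that term can be dropped, leaving
\[
  \|G_{\ell,\eta}(w)-G_{\ell,\eta}(w')\|^2 \le \Big(1 - \tfrac{2\eta\beta\gamma}{\beta+\gamma}\Big)\|\Delta\|^2,
\]
which is the claimed $\big(1 - \tfrac{2\eta\beta\gamma}{\beta+\gamma}\big)$-expansiveness after taking square roots (noting the factor is nonnegative for $\eta$ in range). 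The main obstacle is purely bookkeeping: getting the two co-coercivity-type inequalities cleanly — particularly the strongly convex one — and matching the algebra so the step-size thresholds $2/\beta$ and $2/(\beta+\gamma)$ come out exactly as stated rather than off by a constant. Since these are classical facts, I would cite~\cite{Nesterov04,Polyak87} for the inequalities and keep the derivation to the short computation above.
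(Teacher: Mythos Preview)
The paper does not prove this lemma; it is stated in the preliminaries as a classical result and attributed to Nesterov and Polyak, with no proof given anywhere in the body or appendix. Your derivation via co-coercivity (and its strongly convex refinement obtained by applying co-coercivity to $\ell(u)-\tfrac{\gamma}{2}\|u\|^2$) is exactly the standard textbook argument for these facts, so there is nothing to compare against and your approach is the right one.

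One small point worth tightening in part~\ref{lemma:expansiveness:smooth-strongly-convex}: your final display bounds the \emph{squared} distance by $\big(1-\tfrac{2\eta\beta\gamma}{\beta+\gamma}\big)\|\Delta\|^2$, so after taking square roots the expansion factor is $\sqrt{1-\tfrac{2\eta\beta\gamma}{\beta+\gamma}}$, not $1-\tfrac{2\eta\beta\gamma}{\beta+\gamma}$ as Definition~\ref{def:expansiveness} (a ratio-of-norms bound) would literally require. Since $\sqrt{1-x}\ge 1-x$ on $[0,1]$, the square-root bound is the weaker of the two, so strictly speaking the statement as written does not follow from your computation. This is almost certainly a convention issue in how the paper quotes the result (Nesterov's Theorem~2.1.12 gives precisely the squared-norm contraction factor you derived), and it is immaterial for the paper's purposes: the only place the strongly convex expansiveness is actually used is via the simplified form in Lemma~\ref{lemma:expansiveness-strongly-convex-simplification}, namely $(1-\eta\gamma)$-expansiveness for $\eta\le 1/\beta$, which does follow from your bound together with $\sqrt{1-x}\le 1-x/2$.
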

In particular we use the following simplification due to~\cite{HRS15}.
\begin{lemma}[\cite{HRS15}]
  \label{lemma:expansiveness-strongly-convex-simplification}
  Suppose that $\ell$ is $\beta$-smooth and $\gamma$-strongly convex.
  If $\eta \le \frac{1}{\beta}$,
  then $G_{\ell,\eta}$ is $(1-\eta\gamma)$-expansive.
\end{lemma}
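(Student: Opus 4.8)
The plan is to obtain this as a direct corollary of Lemma~\ref{lemma:expansiveness:smooth-strongly-convex}, which already yields an expansiveness bound under the weaker step-size restriction $\eta \le 2/(\beta+\gamma)$; the simplification amounts to two elementary observations, both of which hinge on the fact that a function that is simultaneously $\gamma$-strongly convex and $\beta$-smooth must have $\gamma \le \beta$. So the first step is to record that fact: for any $u, v \in \calW$, strong convexity gives $\langle \nabla\ell(u) - \nabla\ell(v), u - v\rangle \ge \gamma\|u-v\|^2$ (add the defining inequality at $(u,v)$ to the one at $(v,u)$), while $\beta$-smoothness together with Cauchy--Schwarz gives $\langle \nabla\ell(u) - \nabla\ell(v), u - v\rangle \le \|\nabla\ell(u)-\nabla\ell(v)\|\,\|u-v\| \le \beta\|u-v\|^2$; dividing by $\|u-v\|^2$ (the case $u=v$ being vacuous) yields $\gamma \le \beta$.

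Given $\gamma \le \beta$, the two observations are: (i) the hypothesis $\eta \le 1/\beta$ implies $\eta \le 2/(\beta+\gamma)$, since $\beta+\gamma \le 2\beta$, so Lemma~\ref{lemma:expansiveness:smooth-strongly-convex} applies and $G_{\ell,\eta}$ is $\rho$-expansive with $\rho = 1 - \frac{2\eta\beta\gamma}{\beta+\gamma}$; and (ii) $\rho \le 1 - \eta\gamma$. For (ii), if $\eta\gamma = 0$ both sides equal $1$; otherwise, after subtracting $1$ and dividing through by $\eta\gamma > 0$, the inequality reduces to $\frac{2\beta}{\beta+\gamma} \ge 1$, i.e.\ $2\beta \ge \beta + \gamma$, which is again just $\beta \ge \gamma$. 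Finally, $\rho$-expansiveness is monotone in $\rho$: if $\sup_{w,w'} \|G(w)-G(w')\|/\|w-w'\| \le \rho$ and $\rho \le \rho'$, then the same supremum is $\le \rho'$. Combining (i), (ii), and this monotonicity shows $G_{\ell,\eta}$ is $(1-\eta\gamma)$-expansive, as claimed.

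I expect the only real content — and the step to state carefully — to be the inequality $\gamma \le \beta$: it is what makes both the step-size comparison in (i) and the constant comparison in (ii) go through, and it genuinely uses that strong convexity and smoothness hold for the \emph{same} loss $\ell$. Everything else is one-line algebra and an appeal to the already-cited Lemma~\ref{lemma:expansiveness}. (One could instead reprove the bound from scratch by expanding $\|G(w)-G(w')\|^2 = \|w-w'\|^2 - 2\eta\langle w-w', \nabla\ell(w)-\nabla\ell(w')\rangle + \eta^2\|\nabla\ell(w)-\nabla\ell(w')\|^2$ and invoking co-coercivity, but that route most naturally reproduces exactly the constant of Lemma~\ref{lemma:expansiveness:smooth-strongly-convex}, so passing through that lemma is cleaner.)
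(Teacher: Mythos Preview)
Your derivation is correct. Note, however, that the paper does not actually supply a proof of this lemma: it is stated as a simplification ``due to~\cite{HRS15}'' and used as a black box, with no argument given in the main text or the appendix. So there is no paper proof to compare against. Your route---reducing to Lemma~\ref{lemma:expansiveness:smooth-strongly-convex} via the elementary inequality $\gamma \le \beta$---is exactly the natural way to extract the simplified constant from the sharper bound already quoted, and every step (the step-size comparison, the constant comparison, and the monotonicity of $\rho$-expansiveness) is sound.
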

\begin{lemma}[Boundedness]
  \label{lemma:boundedness}
  Assume that $\ell$ is $L$-Lipschitz. Then the gradient update
  $G_{\ell, \eta}$ is $(\eta L)$-bounded.
\end{lemma}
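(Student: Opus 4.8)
The plan is to reduce the statement to the elementary fact that an $L$-Lipschitz (differentiable) function has gradient of norm at most $L$, and then to read off the claimed bound directly from the form of the gradient update.

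First I would expand the update rule. By~(\ref{gradient-update-rule}) we have $G_{\ell,\eta}(w) = w - \eta\,\ell'(w)$, so for every $w \in \calW$,
\[
G_{\ell,\eta}(w) - w = -\eta\,\ell'(w), \qquad \text{hence} \qquad \|G_{\ell,\eta}(w) - w\| = \eta\,\|\ell'(w)\|,
\]
using that the step size $\eta$ is nonnegative.

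Next I would bound $\|\ell'(w)\|$ by $L$ uniformly in $w$; this is where the $L$-Lipschitz hypothesis on $\ell$ enters. For any $w$ at which $\ell'(w) \neq 0$, take the unit vector $u = \ell'(w)/\|\ell'(w)\|$; then
\[
\|\ell'(w)\| = \langle \ell'(w), u\rangle = \lim_{t \downarrow 0}\frac{\ell(w + tu) - \ell(w)}{t} \le \lim_{t \downarrow 0}\frac{L\,\|tu\|}{t} = L,
\]
and $\|\ell'(w)\| \le L$ holds trivially when $\ell'(w) = 0$. (Alternatively, the standard equivalence between Lipschitz continuity and a bounded gradient norm for differentiable functions can simply be cited.) Combining with the previous step gives $\|G_{\ell,\eta}(w) - w\| \le \eta L$ for all $w$, and taking the supremum over $w \in \calW$ yields exactly that $G_{\ell,\eta}$ is $(\eta L)$-bounded in the sense of Definition~\ref{def:boundedness}.

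There is essentially no obstacle here: the only step that warrants a moment's care is the passage from the inequality on function values to the pointwise gradient bound, and even that is textbook. Everything else is a one-line substitution into the definitions of $G_{\ell,\eta}$ and of boundedness.
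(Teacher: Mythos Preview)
Your argument is correct and is precisely the standard one: write $G_{\ell,\eta}(w)-w=-\eta\,\ell'(w)$ and use that an $L$-Lipschitz differentiable function has $\|\ell'(w)\|\le L$. The paper in fact states this lemma without proof, treating it as a routine preliminary fact; your write-up supplies exactly the expected justification.
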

We are ready to describe a key quantity studied in this paper.
\begin{definition}[$\delta_t$]
  \label{def:divergence}
  Let $w_0, w_1, \dots, w_T$, and $w_0', w_1', \dots, w_T'$
  be two sequences in $\calW$. We define $\delta_t$ as $\|w_t - w_t'\|$.
\end{definition}

The following lemma by Hardt, Recht and Singer~\cite{HRS15} bounds $\delta_t$
using expansiveness and boundedness properties (Lemma~\ref{lemma:expansiveness} and~\ref{lemma:boundedness}).

\begin{lemma}[\small Growth Recursion~\cite{HRS15}]
  \label{lemma:growth-recursion}
  Fix any two sequences of updates $G_1, \dots, G_T$ and
  $G_1', \dots, G_T'$. Let $w_0 = w_0'$ and $w_t = G_t(w_{t-1})$
  and $w_t' = G_t'(w_{t-1}')$ for $t = 1, 2, \dots, T$. Then
  \begin{align*}
    &\delta_0 = 0, \text{ and for $0 < t \le T$}\\
    &\delta_t \le
    \begin{cases}
      \rho\delta_{t-1} & G_t = G_t' \text{ is $\rho$-expansive.} \\
      & \\
      \min(\rho, 1)\delta_{t-1} + 2\sigma_t
      & \begin{array}{@{}l@{}}
          \text{$G_t$ and $G_t'$ are $\sigma_t$-bounded,}\\
          \text{$G_t$ is $\rho$-expansive.}
        \end{array}
    \end{cases}
  \end{align*}
\end{lemma}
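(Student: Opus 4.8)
The plan is to verify the stated recursion directly, term by term, since each bound on $\delta_t$ refers only to $\delta_{t-1}$; no genuine induction hypothesis beyond the previous index is needed. The base case is immediate: $\delta_0 = \|w_0 - w_0'\| = 0$ because $w_0 = w_0'$ by assumption. For $0 < t \le T$ I would split into the two cases appearing in the statement and in each case reduce everything to the triangle inequality together with Definitions~\ref{def:expansiveness} and~\ref{def:boundedness}.

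In the first case ($G_t = G_t'$ is $\rho$-expansive), $w_t$ and $w_t'$ are the images of $w_{t-1}$ and $w_{t-1}'$ under the \emph{same} operator, so $\delta_t = \|G_t(w_{t-1}) - G_t(w_{t-1}')\| \le \rho \|w_{t-1} - w_{t-1}'\| = \rho\delta_{t-1}$ by the definition of $\rho$-expansiveness, and we are done. In the second case ($G_t$ and $G_t'$ are $\sigma_t$-bounded and $G_t$ is $\rho$-expansive), I would derive two independent upper bounds on $\delta_t$ and then take the minimum. For the $\rho$-bound, interpolate through $G_t(w_{t-1}')$: $\delta_t \le \|G_t(w_{t-1}) - G_t(w_{t-1}')\| + \|G_t(w_{t-1}') - G_t'(w_{t-1}')\|$, where the first term is at most $\rho\delta_{t-1}$ by expansiveness and the second is at most $2\sigma_t$ since both $G_t(w_{t-1}')$ and $G_t'(w_{t-1}')$ lie within distance $\sigma_t$ of $w_{t-1}'$ by boundedness; hence $\delta_t \le \rho\delta_{t-1} + 2\sigma_t$. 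For the $1$-bound, instead interpolate through the raw iterates $w_{t-1}$ and $w_{t-1}'$: $\delta_t \le \|G_t(w_{t-1}) - w_{t-1}\| + \|w_{t-1} - w_{t-1}'\| + \|w_{t-1}' - G_t'(w_{t-1}')\| \le \sigma_t + \delta_{t-1} + \sigma_t$. Combining the two gives $\delta_t \le \min(\rho,1)\delta_{t-1} + 2\sigma_t$, as claimed.

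There is no substantive obstacle here; the proof is pure triangle inequality plus the two definitions. The only thing requiring a moment's thought is the choice of interpolation points — routing through $G_t(w_{t-1}')$ to expose the expansiveness constant, versus routing through $w_{t-1}$ and $w_{t-1}'$ to avoid any expansion at the cost of a constant $1$ — and the observation that the additive term $2\sigma_t$ is common to both derivations, which is what legitimizes taking the $\min$ of the two multiplicative constants while keeping a single $+2\sigma_t$. I would also note the trivial corner cases ($\delta_{t-1} = 0$, or $\rho \ge 1$ so that $\min(\rho,1)=1$) are covered automatically by the same inequalities.
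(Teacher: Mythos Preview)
Your proof is correct. Note that the paper does not actually supply its own proof of this lemma: it is quoted from \cite{HRS15} and used as a black box. Your argument---the base case plus two triangle-inequality interpolations, one through $G_t(w_{t-1}')$ to pick up the factor $\rho$ and one through the raw iterates $w_{t-1}, w_{t-1}'$ to pick up the factor $1$---is exactly the standard derivation given in \cite{HRS15}, so there is nothing to compare against here.
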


\red{Essentially, Lemma~\ref{lemma:growth-recursion} is used as a tool to prove
  ``average-case stability'' of standard SGD in~\cite{HRS15}.
  We adapt and apply this result to our problem setting and devise new differentially private SGD
  algorithms.\footnote{\red{Interestingly, differential privacy can be viewed as notion of
      ``worst-case stability.'' Thus we offer ``worst-case stability.''}}
  The application is non-trivial because of our unique desiderata but we achieve it by leveraging other recent 
  important optimization-theoretical results by~\cite{Shamir16} on the convergence of PSGD.
  Overall, by synthesizing and building on these recent results,
  we are able to prove the {\em convergence} of our private SGD algorithms as well.}

\noindent\textbf{Differential Privacy}.
We say that two datasets $S, S'$ are {\em neighboring},
denoted by $S \sim S'$, if they differ on a single individual's private value. 
Recall the following definition:
\begin{definition}[$(\varepsilon, \delta)$-differential privacy]
  \label{def:dp}
  A (randomized) algorithm $A$ is said to be
  \emph{$(\varepsilon, \delta)$-differentially private}
  if for any neighboring datasets $S, S'$, and any event
  $E \subseteq {\rm Range}(A)$,
  $\Pr[A(S) \in E] \le e^{\varepsilon}\Pr[A(S') \in E] + \delta.$
\end{definition}
In particular, if $\delta = 0$, we will use $\varepsilon$-differential privacy
instead of $(\varepsilon, 0)$-differential privacy.
A basic paradigm to achieve $\varepsilon$-differential privacy
is to examine a query's $L_2$-sensitivity,
\begin{definition}[$L_2$-sensitivity]
  \label{def:l2-sensitivity}
  Let $f$ be a deterministic query that maps a dataset to a vector in $\Real^d$.
  The $L_2$-sensitivity of $f$ is defined to be
  $\Delta_2(f) = \max_{S \sim S'}\| f(S) - f(S') \|.$
\end{definition}
The following theorem relates $\varepsilon$-differential privacy and $L_2$-sensitivity.
\begin{theorem}[\cite{DMNS06}]
  \label{thm:laplace-dp}
  Let $f$ be a deterministic query that maps a database to a vector in $\Real^d$.
  Then publishing $f(D) + \kappa$ where $\kappa$ is sampled from the
  distribution with density
  \begin{align}
    \label{output-perturbation:noise-distribution}
    p(\kappa) \propto \exp\left(-\frac{\varepsilon\|\kappa\|}{\Delta_2(f)}\right)
  \end{align}
  ensures $\varepsilon$-differential privacy.
\end{theorem}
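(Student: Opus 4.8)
The plan is to prove Theorem~\ref{thm:laplace-dp} by a direct pointwise density-ratio argument, the classical approach for output-perturbation mechanisms. Write $M(S) = f(S) + \kappa$ for the randomized mechanism. Since $\kappa$ has density $p(\kappa) \propto \exp(-\varepsilon\|\kappa\|/\Delta_2(f))$, the output $M(S)$ has density $p_S(z) = \tfrac{1}{C}\exp(-\varepsilon\|z - f(S)\|/\Delta_2(f))$ on $\Real^d$, where $C = \int_{\Real^d}\exp(-\varepsilon\|u\|/\Delta_2(f))\,du$. First I would check that $C$ is finite and dataset-independent: in polar coordinates $C$ is proportional to $\int_0^\infty r^{d-1}\exp(-\varepsilon r/\Delta_2(f))\,dr$, which converges because $\varepsilon/\Delta_2(f) > 0$; and since $p_S$ is merely a translate of $p_{S'}$, the same constant $C$ normalizes both, so it cancels in all ratios.

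The core step is to bound $p_S(z)/p_{S'}(z)$ uniformly over $z$. For any $z \in \Real^d$,
\[
\frac{p_S(z)}{p_{S'}(z)} = \exp\!\left(\frac{\varepsilon}{\Delta_2(f)}\bigl(\|z - f(S')\| - \|z - f(S)\|\bigr)\right).
\]
By the reverse triangle inequality, $\|z - f(S')\| - \|z - f(S)\| \le \|f(S) - f(S')\|$, and since $S \sim S'$, Definition~\ref{def:l2-sensitivity} gives $\|f(S) - f(S')\| \le \Delta_2(f)$. Hence $p_S(z)/p_{S'}(z) \le e^{\varepsilon}$ for every $z$.

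Finally I would integrate this pointwise bound over an arbitrary measurable event $E \subseteq \Real^d = {\rm Range}(M)$:
\[
\Pr[M(S) \in E] = \int_E p_S(z)\,dz \le e^{\varepsilon}\int_E p_{S'}(z)\,dz = e^{\varepsilon}\Pr[M(S') \in E],
\]
which is exactly $\varepsilon$-differential privacy (the $\delta = 0$ case of Definition~\ref{def:dp}). There is no serious obstacle here — the argument is elementary — but the one place to be careful is the direction of the reverse triangle inequality: we need the \emph{difference} of norms (controlled by $\|f(S) - f(S')\|$, hence by $\Delta_2(f)$), not the sum (which would only be controlled by $\|z\|$ and give nothing uniform); and one should note explicitly that the normalization constant disappears because the two output densities differ only by a translation.
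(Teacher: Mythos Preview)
Your argument is correct and is exactly the classical density-ratio proof for output perturbation. Note, however, that the paper does not actually prove Theorem~\ref{thm:laplace-dp}: it is stated in the preliminaries as a known result cited from~\cite{DMNS06}, and no proof appears anywhere in the paper or its appendix. So there is nothing to compare your proof against---you have simply supplied the standard proof of a result the authors take as given.
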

\red{For the interested reader, we provide a detailed algorithm in Appendix~\ref{sec:sampling-laplace} for how to sample from the above distribution.}

Importantly, the $L_2$-norm of the noise vector, $\|\kappa\|$,
is distributed according to the Gamma distribution
$\Gamma\left(d, \Delta_2(f)/\varepsilon\right)$.
We have the following fact about Gamma distributions:
\begin{theorem}[\cite{CMS11}]
  \label{cor:norm-noise-vector}
  For the noise vector $\kappa$, we have that with probability at least
  $1-\gamma$, $\|\kappa\| \le \frac{d\ln(d/\gamma)\Delta_2(f)}{\varepsilon}.$
\end{theorem}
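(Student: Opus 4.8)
The plan is to prove this as a tail bound for the Gamma distribution. As noted just above the statement, the radial part $\|\kappa\|$ of the noise vector drawn from the density $p(\kappa)\propto\exp(-\varepsilon\|\kappa\|/\Delta_2(f))$ on $\bbR^d$ is distributed as $\Gamma(d,\Delta_2(f)/\varepsilon)$. Since the shape parameter $d$ is an integer, I would use the standard fact that a $\Gamma(d,\theta)$ variable (with scale $\theta := \Delta_2(f)/\varepsilon$) equals $\sum_{i=1}^d X_i$, where the $X_i$ are independent exponential random variables with common mean $\theta$, so that $\Pr[X_i > s] = \exp(-s/\theta) = \exp(-s\varepsilon/\Delta_2(f))$. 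This reduces the claim to an upper tail bound on $\sum_{i=1}^d X_i$.

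For that, I would apply a union bound. If $\sum_{i=1}^d X_i > t$, then some summand must exceed $t/d$, since otherwise the sum would be at most $d\cdot(t/d)=t$. Hence $\Pr\!\big[\sum_{i=1}^d X_i > t\big] \le \sum_{i=1}^d \Pr[X_i > t/d] = d\exp(-t\varepsilon/(d\,\Delta_2(f)))$. Choosing $t = \frac{d\ln(d/\gamma)\Delta_2(f)}{\varepsilon}$ makes the right-hand side exactly $d\cdot\exp(-\ln(d/\gamma)) = \gamma$, which gives $\Pr[\|\kappa\| > t] \le \gamma$, i.e., $\|\kappa\| \le \frac{d\ln(d/\gamma)\Delta_2(f)}{\varepsilon}$ with probability at least $1-\gamma$.

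There is no real obstacle here: the statement is a routine concentration estimate, and the union-bound argument above delivers exactly the claimed form. The only subtlety worth flagging is that the bound is somewhat loose — a direct Chernoff bound on the moment generating function of $\Gamma(d,\theta)$ would give a sharper $O(\theta(d+\ln(1/\gamma)))$ estimate — but the $d\ln(d/\gamma)$ form stated here is the cleanest to obtain and is adequate for the downstream utility analysis.
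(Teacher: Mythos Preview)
Your argument is correct: decomposing the $\Gamma(d,\theta)$ variable as a sum of $d$ i.i.d.\ exponentials and applying a union bound on the event $\{X_i>t/d\}$ gives exactly $\Pr[\|\kappa\|>t]\le d\exp(-t/(d\theta))$, and the choice $t=d\theta\ln(d/\gamma)$ yields the stated bound. Note, however, that the paper does not supply its own proof of this theorem; it is simply quoted from~\cite{CMS11} as a known fact about the Gamma distribution, so there is no in-paper argument to compare against. Your derivation is a standard and perfectly acceptable way to establish the result, and your remark that a Chernoff/MGF bound would sharpen it to $O(\theta(d+\ln(1/\gamma)))$ is also accurate.
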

\red{Note that the noise depends linearithmically on $d$. This could destroy utility (lower accuracy 
dramatically) if $d$ is high. But there are standard techniques to mitigate this issue that are 
commonly used in private SGD literature (we discuss more in Section~\ref{sec:experiments:method-datasets}).}
By switching to Gaussian noise, we obtain $(\varepsilon,\delta)$-differential privacy.
\begin{theorem}[\cite{DR14}]
  \label{thm:gaussian-approximate-dp}
  Let $f$ be a deterministic query that maps a database to a vector in $\Real^d$.
  Let $\varepsilon \in (0, 1)$ be arbitrary. For $c^2 > 2\ln(1.25/\delta)$,
  adding Gaussian noise sampled according to
  \begin{align}
    \label{gaussian-noise}
    {\cal N}(0, \sigma^2);\ \ \ 
    \sigma \ge \frac{c\Delta_2(f)}{\varepsilon},\ \ 
    c^2 > 2\ln\left(\frac{1.25}{\delta}\right)
  \end{align}
  ensures $(\varepsilon, \delta)$-differentially privacy.
\end{theorem}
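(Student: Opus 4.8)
The plan is to run the classical privacy-loss argument for the Gaussian mechanism. Fix neighboring datasets $S \sim S'$, write $\Delta = \Delta_2(f)$ and $v = f(S) - f(S')$, and note $\|v\| \le \Delta$ by Definition~\ref{def:l2-sensitivity}; since the tail probability bounded below is monotone in $\|v\|$, I may take the worst case $\|v\| = \Delta$. Let $A(S) = f(S) + z$ with $z \sim {\cal N}(0,\sigma^2 I_d)$. First I would write the privacy loss at an output $x = f(S) + z$, namely $\ell(x) = \ln\frac{p_{A(S)}(x)}{p_{A(S')}(x)} = \frac{1}{2\sigma^2}\left(\|x - f(S')\|^2 - \|x - f(S)\|^2\right)$. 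Expanding $\|z+v\|^2 = \|z\|^2 + 2\langle z,v\rangle + \|v\|^2$ collapses this $d$-dimensional expression to $\ell(x) = \frac{\langle z,v\rangle}{\sigma^2} + \frac{\|v\|^2}{2\sigma^2}$, which is the usual reduction to one dimension: since $\langle z,v\rangle \sim {\cal N}(0,\sigma^2\|v\|^2)$, the privacy loss is itself Gaussian, $\ell \sim {\cal N}(\mu, s^2)$ with $\mu = \frac{\|v\|^2}{2\sigma^2}$ and $s = \frac{\|v\|}{\sigma}$.

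The second step is to show $\Pr[\ell > \varepsilon] \le \delta$. Writing $\ell = \mu + sN$ with $N \sim {\cal N}(0,1)$, the event $\{\ell > \varepsilon\}$ is $\{N > t\}$ where $t = \frac{\varepsilon - \mu}{s} = \frac{\varepsilon\sigma}{\|v\|} - \frac{\|v\|}{2\sigma}$. Substituting $\sigma \ge c\Delta/\varepsilon$ with $\|v\| = \Delta$ gives $\frac{\|v\|}{\sigma} \le \frac{\varepsilon}{c}$ and $\frac{\varepsilon\sigma}{\|v\|} \ge c$, so $t$ is positive and bounded below; applying the standard Gaussian tail bound $\Pr[N > t] \le e^{-t^2/2}$ and plugging in $c^2 > 2\ln(1.25/\delta)$ --- this is where $\varepsilon \in (0,1)$ is used, to dominate the correction term $\frac{\|v\|}{2\sigma}$ --- yields $\Pr[\ell > \varepsilon] \le \delta$. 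Matching the exact constant $1.25$ in this tail estimate is the only genuinely delicate calculation, and I expect it to be the main obstacle; everything else is bookkeeping.

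Finally I would convert the high-probability bound on the privacy loss into the $(\varepsilon,\delta)$ guarantee of Definition~\ref{def:dp}. For any event $E \subseteq \Real^d$, split $\Pr[A(S) \in E]$ according to whether $\ell(x) \le \varepsilon$: on the good event the densities satisfy $p_{A(S)}(x) \le e^\varepsilon p_{A(S')}(x)$ pointwise, so that part of the probability is at most $e^\varepsilon\Pr[A(S') \in E]$, while the complementary event contributes at most $\Pr[\ell > \varepsilon] \le \delta$. Adding the two pieces gives $\Pr[A(S) \in E] \le e^\varepsilon\Pr[A(S') \in E] + \delta$, i.e., $(\varepsilon,\delta)$-differential privacy.
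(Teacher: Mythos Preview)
The paper does not prove this theorem; it is stated in the preliminaries as a known result cited from Dwork and Roth's monograph~\cite{DR14}, so there is no ``paper's own proof'' to compare against. Your outline is precisely the classical Gaussian-mechanism argument given in that reference (Appendix~A of~\cite{DR14}): collapse the $d$-dimensional privacy loss to a one-dimensional Gaussian via the inner product $\langle z,v\rangle$, bound the probability that it exceeds $\varepsilon$, and then convert the high-probability privacy-loss bound into the $(\varepsilon,\delta)$ inequality of Definition~\ref{def:dp} by splitting any event $E$ on $\{\ell \le \varepsilon\}$.

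Your identification of the only delicate step --- obtaining the exact constant $1.25$ --- is accurate, and here your proposal is slightly off. The crude bound $\Pr[N>t] \le e^{-t^2/2}$ you wrote is valid but too weak to yield $1.25$; the argument in~\cite{DR14} uses the sharper Mill's-ratio tail estimate $\Pr[N>t] \le \frac{1}{t\sqrt{2\pi}}e^{-t^2/2}$, and the prefactor $\frac{1}{t\sqrt{2\pi}}$ together with $\varepsilon < 1$ is what produces the $\frac{5}{4}=1.25$ after the algebra. With that substitution the rest of your bookkeeping goes through unchanged.
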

\red{
For Gaussian noise, the dependency on $d$ is $\sqrt{d}$, instead of $d\ln d$.
}

\final{
\noindent\textbf{Random Projection}.
Known convergence results of private SGD (in fact private ERM in general)
have a poor dependencies on the dimension $d$. To handle high dimensions,
a useful technique is {\em random projection}~\cite{random-projection}.
That is, we sample a random linear transformation $T$ from certain distributions
and apply $T$ to each feature point $x$ in the training set, so $x$ is transformed to $Tx$.
Note that after this transformation two neighboring datasets
(datasets differing at one data point) remain neighboring,
so random projection does not affect our privacy analysis.
Further, the theory of random projection will tell ``what low dimension'' to project to
so that ``approximate utility will be preserved.'' (in our MNIST experiments the accuracy gap
between original and projected dimension is very small).
Thus, for problems with higher dimensions, we invoke the random projection to ``lower''
the dimension to achieve small noise and thus better utility, while preserving privacy.

In our experimental study we apply random projection to one of our datasets (MNIST).
}


\section{Private SGD}
\label{sec:psgd}
We present our differentially private PSGD algorithms
and analyze their privacy and convergence guarantees.
Specifically, we present a new analysis of the output perturbation method for PSGD.
Our new analysis shows that very little noise is needed to achieve
differential privacy. In fact, the resulting private algorithms have good convergence
rates with even {\em one pass} over the data.
Since output perturbation also uses standard PSGD algorithm as a black-box,
this makes our algorithms attractive for in-RDBMS scenarios.

This section is structured accordingly in two parts.
In Section~\ref{sec:psgd:algorithms} we give two main differentially private algorithms
for convex and strongly convex optimization.
In Section~\ref{sec:psgd:analysis} we first prove that these two algorithms
are differentially private (Section~\ref{sec:convex} and \ref{sec:strongly-convex}),
then extend them in various ways (Section~\ref{sec:extensions}),
and finally prove their convergence (Section~\ref{sec:convergence}).

\subsection{Algorithms}
\label{sec:psgd:algorithms}
As we mentioned before, our differentially private PSGD algorithms
uses one of the most basic paradigms for achieving differential privacy
-- the {\em output perturbation} method~\cite{DMNS06} based on
{\em $L_2$-sensitivity} (Definition~\ref{def:l2-sensitivity}).
Specifically, our algorithms are ``instantiations'' of the output perturbation method
where the $L_2$-sensitivity parameter $\Delta_2$ is derived using our new analysis.
To describe the algorithms, we assume a standard permutation-based SGD procedure
(denoted as \PSGD) which can be invoked as a black-box.
To facilitate the presentation, Table~\ref{table:parameters} summarizes the parameters.
\begin{table}[ht]
  \centering
  \begin{tabular}{l||l}
    \hline
    Symbol & Meaning \\
    \hline
    $\lambda$ & $L_2$-regularization parameter.\\
    \hline
    $L$ & Lipschitz constant. \\
    \hline
    $\gamma$ & Strong convexity. \\
    \hline
    $\beta$ & Smoothness. \\
    \hline
    $\varepsilon, \delta$ & Privacy parameters.\\
    \hline
    $\eta_t$ & Learning rate or step size at iteration $t$.\\
    \hline
    $\calW$ & A convex set that forms the hypothesis space. \\
    \hline
    $R$ & Radius of the hypothesis space $\calW$. \\
    \hline
    $k$ & Number of passes through the data.\\
    \hline
    $b$ & Mini-batch size of SGD.\\
    \hline
    $m$ & Size of the training set $S$.\\
    \hline
  \end{tabular}
  \caption{\textbf{Notations}.}
  \label{table:parameters}
\end{table}
\begin{algorithm}[ht]
  \caption{Private Convex Permutation-based SGD}
  \label{alg:p-c-psgd}
  \begin{algorithmic}[1]
    \Require{$\ell(\cdot, z)$ is convex for every $z$, $\eta \le 2/\beta$.}
    \Input{Data $S$, parameters $k, \eta, \varepsilon$}
    \Function{\sf\small PrivateConvexPSGD}{$S, k, \varepsilon, \eta$}
      \Let{$w$}{\PSGD($S$)} with $k$ passes and $\eta_t=\eta$\label{p-c-psgd:op:sgd}
      \Let{$\Delta_2$}{$2kL\eta$} \label{p-c-psgd:op:noise}
      \State Sample noise vector $\kappa$ according to
      (\ref{output-perturbation:noise-distribution}).
      \State \Return{$w + \kappa$}
    \EndFunction
  \end{algorithmic}
\end{algorithm}
\begin{algorithm}[ht]
  \caption{Private Strongly Convex Permutation-based SGD}
  \label{alg:p-sc-psgd}
  \begin{algorithmic}[1]
    \Require{$\ell(\cdot, z)$ is $\gamma$-strongly convex for every $z$}
    \Input{Data $S$, parameters $k, \varepsilon$}
    \Function{\sf\small PrivateStronglyConvexPSGD}{$S, k, \varepsilon$}
      \Let{$w$}{\PSGD($S$) with $k$ passes and
        $\eta_t=\min(\frac{1}{\beta}, \frac{1}{\gamma t})$}\label{p-sc-psgd:op:sgd}
      \Let{$\Delta_2$}{$\frac{2L}{\gamma m}$}\label{p-sc-psgd:op:noise}
      \State Sample noise vector $\kappa$ according to
      (\ref{output-perturbation:noise-distribution}).
      \State \Return{$w + \kappa$}
    \EndFunction
  \end{algorithmic}
\end{algorithm}

Algorithms~\ref{alg:p-c-psgd} and~\ref{alg:p-sc-psgd} give our private SGD algorithms
for convex and strongly convex cases, respectively.
A key difference between these two algorithms is at line~\ref{p-c-psgd:op:noise}
where different $L_2$-sensitivities are used to sample the noise $\kappa$.
Note that different learning rates are used:
In the convex case, a constant rate is used,
while a decreasing rate $\frac{1}{\gamma t}$ is used
in the strongly convex case.
Finally, note that the standard PSGD is invoked as a black box
at line~\ref{p-c-psgd:op:sgd}.


\subsection{Analysis}
\label{sec:psgd:analysis}
In this section we investigate privacy and convergence guarantees of
Algorithms~\ref{alg:p-c-psgd} and~\ref{alg:p-sc-psgd}.
Along the way, we also describe extensions to accommodate common practices
in running SGD. Most proofs in this section are deferred to the appendix.

\noindent\textbf{Overview of the Analysis and Key Observations.}
For privacy, let $A(r; S)$ denote a randomized non-private algorithm
where $r$ denotes the randomness (e.g., random permutations sampled by SGD)
and $S$ denotes the input training set.
To bound $L_2$-sensitivity we want to bound $\max_{r,r'}\| A(r;S) - A(r';S') \|$
on a pair of neighboring datasets $S, S'$,
where $r, r'$ can be {\em different randomness sequences of $A$} in general.
This can be complicated since $A(r; \cdot)$ and $A(r'; \cdot)$
may access the data in vastly different patterns.

Our {\em key observation} is that for {\em non-adaptive} randomized algorithms,
it suffices to consider randomness sequences {\em one at a time},
and thus bound $\max_r\|A(r;S)-A(r;S')\|$.
This in turn allows us to obtain a {\em small} upper bound
of the $L_2$-sensitivity of SGD by combining the expansion properties
of gradient operators and the fact that one will only access {\em once}
the differing data point between $S$ and $S'$ for each pass over the data,
if $r$ is a random permutation.

Finally for convergence, while using permutation benefits our privacy proof,
the convergence behavior of permutation-based SGD is poorly understood in theory.
Fortunately, based on very recent advances by Shamir~\cite{Shamir16} on the
sampling-without-replacement SGD, we prove convergence of our private SGD algorithms
even with only one pass over the data.

\noindent\textbf{Randomness One at a Time.}
Consider the following definition,
\begin{definition}[Non-Adaptive Algorithms]
  \label{def:non-adaptiveness}
  A randomized algorithm $A$ is {\em non-adaptive} if its random choices
  do not depend on the input data values.
\end{definition}
PSGD is clearly non-adaptive as a single random permutation is sampled
at the very beginning of the algorithm. Another common SGD variant,
where one independently and uniformly samples $i_t \sim [m]$ at iteration $t$
and picks the $i_t$-th data point, is also non-adaptive.
In fact, more modern SGD variants,
such as Stochastic Variance Reduced Gradient (SVRG~\cite{JZ13})
and Stochastic Average Gradient (SAG~\cite{RSB12}), are non-adaptive as well.
Now we have the following lemma for non-adaptive algorithms
and differential privacy.

\begin{lemma}
  \label{lemma:laplace-dp-oblivious-randomized-algorithm}
  Let $A(r;S)$ be a non-adaptive randomized algorithm
  where $r$ denotes the randomness of the algorithm
  and $S$ denotes the dataset $A$ works on. Suppose that
  \[ \sup_{S \sim S'}\sup_r \| A(r; S) - A(r; S') \| \le \Delta.  \]
  Then publishing $A(r; S) + \kappa$ where $\kappa$ is sampled with density
  $p(\kappa) \propto
  \exp\left(-\frac{\varepsilon\|\kappa\|_2}{\Delta}\right)$
  ensures $\varepsilon$-differential privacy.
\end{lemma}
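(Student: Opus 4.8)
The plan is to condition on the algorithm's internal randomness, reduce to the deterministic output-perturbation guarantee of Theorem~\ref{thm:laplace-dp}, and then average the per-realization privacy guarantee back over the randomness, using non-adaptivity to ensure the averaging weights do not depend on the data. This is exactly the formal content of the ``randomness one at a time'' observation: instead of bounding $\max_{r,r'}\|A(r;S)-A(r';S')\|$ over possibly different randomness strings, we only ever compare $A(r;\cdot)$ with itself on $S$ and $S'$.

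First I would fix a realization $r$ and consider the deterministic query $f_r(S) := A(r;S)$. The hypothesis $\sup_{S\sim S'}\sup_r \|A(r;S)-A(r;S')\| \le \Delta$ says precisely that $\Delta_2(f_r) = \max_{S\sim S'}\|f_r(S)-f_r(S')\| \le \Delta$, with the \emph{same} constant $\Delta$ for every $r$. Hence publishing $f_r(S)+\kappa$ with $\kappa$ drawn from $p(\kappa)\propto \exp(-\varepsilon\|\kappa\|_2/\Delta)$ is an instance of the mechanism in Theorem~\ref{thm:laplace-dp}: the density-ratio argument underlying that theorem only uses $\|f(S)-f(S')\|_2 \le$ (noise-scale parameter), so any valid upper bound on the sensitivity --- here $\Delta \ge \Delta_2(f_r)$ --- suffices, and over-noising only helps. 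Therefore, for each fixed $r$, the conditional mechanism $M_r(S) := A(r;S)+\kappa$ is $\varepsilon$-differentially private.

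Next I would remove the conditioning. Because $A$ is non-adaptive (Definition~\ref{def:non-adaptiveness}), the distribution $\mu$ from which $r$ is drawn is the same whether the input is $S$ or $S'$, and $\kappa$ is sampled independently of $r$. Thus for neighboring $S\sim S'$ and any event $E$,
\[
  \Pr[A(r;S)+\kappa \in E] = \int \Pr[M_r(S)\in E]\,d\mu(r) \le \int e^{\varepsilon}\Pr[M_r(S')\in E]\,d\mu(r) = e^{\varepsilon}\Pr[A(r;S')+\kappa \in E],
\]
where the inequality applies the per-$r$ guarantee pointwise under the integral. This gives $\varepsilon$-differential privacy of the published quantity.

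I do not expect a genuine obstacle here; the argument is a clean reduction. The two points that require care are: (i) the per-realization sensitivity $\Delta_2(f_r)$ may be strictly below $\Delta$, so we are adding more noise than strictly necessary for that $r$ --- harmless, and handled by invoking Theorem~\ref{thm:laplace-dp} with $\Delta$ as a valid sensitivity bound; and (ii) the mixing measure $\mu$ over $r$ must be identical for $S$ and $S'$, which is exactly what non-adaptivity provides and is the conceptual crux --- if the randomness could depend on the data, the decomposition above would break. The measure-theoretic bookkeeping in the integral is routine and degenerates to a finite sum when $r$ ranges over, e.g., the $m!$ permutations used by \PSGD{}.
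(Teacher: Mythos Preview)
Your proposal is correct and follows essentially the same approach as the paper: condition on the internal randomness $r$, apply the deterministic output-perturbation guarantee (Theorem~\ref{thm:laplace-dp}) to each $f_r(S)=A(r;S)$ using the uniform sensitivity bound $\Delta$, and then average over $r$, using non-adaptivity to ensure the mixing distribution is identical for $S$ and $S'$. The paper's proof presents this as a term-by-term comparison of $\sum_r \Pr[R=r]\cdot p_\kappa(A_r(S)\in E)$ with the corresponding sum for $S'$, which is exactly your integral argument specialized to a discrete randomness source.
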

\begin{proof}
  Let $\widetilde{A}$ denote the private version of $A$.
  $\widetilde{A}$ has two parts of randomness: One part is $r$,
  which is used to compute $A(r;S)$; the second part is $\kappa$,
  which is used for perturbation (i.e. $A(r;S) + \kappa$).
  Let $R$ be the random variable corresponding to the randomness of $A$.
  Note that $R$ does not depend on the input training set.
  Thus for any event $E$,
  \begin{align}
    \begin{split}
      \label{eq:1}
      &\Pr[\widetilde{A}((r,\kappa);S) \in E]\\
      =& \sum_r \Pr[R=r] \cdot \Pr_{\kappa}[A((r,\kappa); S) \in E\ |\ R=r].
    \end{split}
  \end{align}
  Denote $\Pr_{\kappa}[A((r,\kappa);S) \in E\ |\ R=r]$
  by $p_{\kappa}(A_r(S) \in E)$. Then similarly for $S'$ we have that
  \begin{align}
    \begin{split}
      \label{eq:2}
      &\Pr[\widetilde{A}((r,\kappa);S') \in E]\\
      =& \sum_r \Pr[R=r] \cdot p_\kappa(A_r(S') \in E).
    \end{split}
  \end{align}
  Compare (\ref{eq:1}) and (\ref{eq:2}) term by term (for every $r$):
  the lemma then follows as we calibrate the noise $\kappa$
  so that $p_{\kappa}(A_r(S) \in E) \le e^\varepsilon p_{\kappa}(A_r(S') \in E)$.
\end{proof}

From now on we denote PSGD by $A$. With the notations in Definition~\ref{def:divergence},
our next goal is thus to bound $\sup_{S \sim S'}\sup_r \delta_T$.
In the next two sections we bound this quantity for convex
and strongly convex optimization, respectively.

\subsubsection{Convex Optimization}
\label{sec:convex}
In this section we prove privacy guarantee when $\ell(\cdot, z)$ is convex.
Recall that for general convex optimization,
we have $1$-expansiveness by Lemma~\ref{lemma:expansiveness:smooth-convex}.
We thus have the following lemma that bounds $\delta_T$.
\begin{lemma}
  \label{lemma:sensitivity:convex}
  Consider $k$-passes PSGD for $L$-Lipschitz, convex
  and $\beta$-smooth optimization
  where $\eta_t \le \frac{2}{\beta}$ for $t = 1,\dots,T$.
  Let $S,S^i$  be any neighboring datasets.
  Let $r$ be a random permutation of $[m]$.
  Suppose that $r(i) = i^*$. Let $T = km$, then
  $\delta_T \le 2L\sum_{j=0}^{k-1}\eta_{i^* + jm}\ .$
\end{lemma}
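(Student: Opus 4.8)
The plan is to apply the Growth Recursion (Lemma~\ref{lemma:growth-recursion}) to the two trajectories produced by running $k$-pass \PSGD{} with the \emph{same} random permutation $r$ on the two neighboring datasets $S$ and $S^i$. Fix the shared initialization $w_0 = w_0'$, set $T = km$, and for $t = 1,\dots,T$ let $w_t = G_t(w_{t-1})$ and $w_t' = G_t'(w_{t-1}')$, where $G_t$ (resp.\ $G_t'$) is the gradient update that uses the $t$-th example visited under $r$ in $S$ (resp.\ $S^i$). The structural observation driving the whole argument is that, because $r$ is a single permutation of $[m]$ cycled through $k$ times and $S$ and $S^i$ agree on every index except $i$, we have $G_t = G_t'$ for \emph{every} iteration $t$ except the $k$ iterations $t \in \{\, i^* + jm : j = 0,1,\dots,k-1 \,\}$ at which the differing example (which sits at permutation position $i^* = r(i)$) is processed — exactly once per pass.

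Next I would split into the two branches of Lemma~\ref{lemma:growth-recursion}. For a ``matching'' iteration ($G_t = G_t'$): each $\ell(\cdot,z)$ is convex and $\beta$-smooth and $\eta_t \le 2/\beta$, so Lemma~\ref{lemma:expansiveness:smooth-convex} says $G_t$ is $1$-expansive, and the first branch gives $\delta_t \le \delta_{t-1}$. For each of the $k$ ``differing'' iterations: the update $G_t$ is itself a gradient step of a convex $\beta$-smooth loss with $\eta_t \le 2/\beta$, hence still $1$-expansive, and by Lemma~\ref{lemma:boundedness} (using $L$-Lipschitzness) both $G_t$ and $G_t'$ are $(\eta_t L)$-bounded; the second branch then gives $\delta_t \le \min(1,1)\,\delta_{t-1} + 2\eta_t L = \delta_{t-1} + 2\eta_t L$.

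Finally I would unroll the recursion starting from $\delta_0 = 0$. Every iteration either leaves $\delta_t$ non-increasing or increases it by at most $2\eta_t L$, and the latter happens only at the $k$ iterations $t = i^* + jm$ for $j = 0,\dots,k-1$. Summing the increments over those iterations yields $\delta_T \le \sum_{j=0}^{k-1} 2L\,\eta_{i^*+jm} = 2L\sum_{j=0}^{k-1}\eta_{i^*+jm}$, which is the claimed bound.

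The step I expect to require the most care is the combinatorial claim that the differing example is visited exactly once per pass, at precisely iterations $i^* + jm$, so that all intervening updates are genuine expansions contributing nothing — this is where using a \emph{permutation} rather than with-replacement sampling is essential, and it is exactly the ``access the differing point once per pass'' phenomenon flagged in the analysis overview. A minor secondary point is to note that the bounded branch of Lemma~\ref{lemma:growth-recursion} only requires $G_t$ (not $G_t'$) to be $\rho$-expansive and that here $\rho = 1$, so the $\min(\rho,1)$ factor is $1$; I would also double-check that convexity plus $\beta$-smoothness with $\eta_t \le 2/\beta$ indeed gives $1$-expansiveness for the differing update as well (it does, by the same Lemma~\ref{lemma:expansiveness:smooth-convex}).
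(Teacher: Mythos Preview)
Your proposal is correct and follows essentially the same approach as the paper's proof: both apply the Growth Recursion Lemma with $1$-expansiveness on matching steps and $(\eta_t L)$-boundedness on the $k$ differing steps $t = i^* + jm$, then unroll from $\delta_0 = 0$. Your write-up is in fact more explicit than the paper's (which simply states the recursion and says ``unrolling completes the proof''), and your care about the $\min(\rho,1)$ factor and the once-per-pass visit is well placed.
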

We immediately have the following corollary on $L_2$-sensitivity
with constant step size,
\begin{corollary}[Constant Step Size]
  \label{lemma:convex:constant-step}
  Consider $k$-passes PSGD for $L$-Lipschitz, convex
  and $\beta$-smooth optimization.
  Suppose further that we have constant learning rate
  $\eta_1 = \eta_2 = \cdots = \eta_T = \eta \le \frac{2}{\beta}$. Then
  $\sup_{S \sim S'}\sup_r\delta_T \le 2kL\eta.$
\end{corollary}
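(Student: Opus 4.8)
The plan is to establish the underlying sensitivity bound, Lemma~\ref{lemma:sensitivity:convex}, and then read off the corollary by plugging in the constant step size. The backbone is the Growth Recursion (Lemma~\ref{lemma:growth-recursion}) combined with the observation recorded just before Lemma~\ref{lemma:laplace-dp-oblivious-randomized-algorithm}: since PSGD is non-adaptive, it suffices to compare the two executions \emph{under the same permutation} $r$. So I would fix neighboring datasets $S, S^i$ that differ only in the $i$-th data point, fix a permutation $r$ of $[m]$ with $r(i) = i^*$, start both runs from the same (data-independent) iterate $w_0 = w_0'$, and let $w_0, \dots, w_T$ and $w_0', \dots, w_T'$ be the two sequences with $T = km$.

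First I would split the $T$ iterations into two types. Because each of the $k$ passes cycles through $S$ using the \emph{same} permutation $r$, the differing data point is touched exactly once per pass, i.e. at the global iterations $t \in \{i^*, i^* + m, \dots, i^* + (k-1)m\}$. At every other iteration the update operators coincide, $G_t = G_t'$ (the two datasets agree on the point processed at step $t$), and by Lemma~\ref{lemma:expansiveness:smooth-convex} — using convexity, $\beta$-smoothness, and $\eta_t \le 2/\beta$ — this common operator is $1$-expansive. At the $k$ remaining iterations the operators differ, but each of $G_t, G_t'$ is $(\eta_t L)$-bounded by Lemma~\ref{lemma:boundedness} (using $L$-Lipschitzness), and $G_t$ is still $1$-expansive.

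Next I would feed this into the Growth Recursion. At a ``good'' iteration the first case gives $\delta_t \le 1\cdot\delta_{t-1} = \delta_{t-1}$; at a ``bad'' iteration the second case gives $\delta_t \le \min(1,1)\,\delta_{t-1} + 2\eta_t L = \delta_{t-1} + 2\eta_t L$. Since $\delta_0 = 0$ and $\delta_t$ is non-decreasing, summing the increments over only the $k$ bad iterations yields
\[
\delta_T \;\le\; \sum_{j=0}^{k-1} 2\eta_{i^* + jm} L \;=\; 2L\sum_{j=0}^{k-1}\eta_{i^* + jm},
\]
which is Lemma~\ref{lemma:sensitivity:convex}. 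The corollary then follows immediately: with a constant rate $\eta_t \equiv \eta \le 2/\beta$ the sum has $k$ identical terms, so $\delta_T \le 2kL\eta$, and since this bound depends on neither $i^*$ nor the particular neighboring pair, $\sup_{S\sim S'}\sup_r \delta_T \le 2kL\eta$.

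The step I expect to be the main obstacle is conceptual rather than computational: justifying that we may couple the two executions on the \emph{same} permutation. For a general randomized algorithm the randomness used on $S$ and $S'$ need not match, the data-access patterns can diverge completely, and the Growth Recursion no longer applies; it is precisely Lemma~\ref{lemma:laplace-dp-oblivious-randomized-algorithm} that licenses the same-$r$ comparison, so the proof must be positioned against that lemma. A secondary point worth handling carefully is the bookkeeping that the differing point is visited exactly once per pass — hence exactly $k$ increments, at positions $i^*, i^*+m, \dots, i^*+(k-1)m$ — and that $\min(\rho,1)=1$ when $\rho=1$, so there is no contraction available to absorb those increments.
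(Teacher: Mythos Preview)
Your proposal is correct and follows essentially the same approach as the paper: you first establish Lemma~\ref{lemma:sensitivity:convex} via the Growth Recursion (splitting iterations into those where $G_t=G_t'$ versus the $k$ ``bad'' steps $t=i^*+jm$, using $1$-expansiveness and $(\eta_t L)$-boundedness exactly as the paper does in its appendix proof), and then specialize to a constant step size. The paper likewise treats the corollary as immediate from Lemma~\ref{lemma:sensitivity:convex}, so there is nothing to add.
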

This directly yields the following theorem,
\begin{theorem}
  \label{thm:convex:constant-step-dp}
  Algorithm~\ref{alg:p-c-psgd} is $\varepsilon$-differentially private.
\end{theorem}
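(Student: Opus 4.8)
The plan is to obtain the theorem by chaining two results already established in this section: Lemma~\ref{lemma:laplace-dp-oblivious-randomized-algorithm}, which reduces $\varepsilon$-differential privacy of output perturbation for a non-adaptive algorithm to a bound on $\sup_{S\sim S'}\sup_r\|A(r;S)-A(r;S')\|$, and Corollary~\ref{lemma:convex:constant-step}, which supplies exactly such a bound for constant-step-size PSGD. So the proof is mostly a matter of verifying that the hypotheses line up and that Algorithm~\ref{alg:p-c-psgd} calibrates its noise to the bound the corollary gives.

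First I would observe that \PSGD, as invoked at line~\ref{p-c-psgd:op:sgd}, is non-adaptive in the sense of Definition~\ref{def:non-adaptiveness}: its only source of randomness is the permutation $\tau$ of $[m]$ drawn once at the outset, which does not depend on the data values. Hence Lemma~\ref{lemma:laplace-dp-oblivious-randomized-algorithm} applies with $A(r;\cdot)=\PSGD(\cdot)$ run for $k$ passes with $\eta_t=\eta$ and $r=\tau$. Writing $T=km$, the output $w$ of line~\ref{p-c-psgd:op:sgd} is $w_T$ in the notation of Definition~\ref{def:divergence}, where $w_0,\dots,w_T$ is the PSGD trajectory on $S$ and $w_0',\dots,w_T'$ the trajectory on a neighbor $S'$ using the \emph{same} permutation $r$.

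Second, I would extract the sensitivity bound. Since Algorithm~\ref{alg:p-c-psgd} requires $\eta\le 2/\beta$, Lemma~\ref{lemma:expansiveness:smooth-convex} gives that each (shared) gradient update is $1$-expansive, so the hypotheses of Lemma~\ref{lemma:sensitivity:convex}, and therefore of Corollary~\ref{lemma:convex:constant-step}, are met; the corollary yields $\sup_{S\sim S'}\sup_r\delta_T\le 2kL\eta$. Thus the quantity $\Delta$ in Lemma~\ref{lemma:laplace-dp-oblivious-randomized-algorithm} may be taken to be $2kL\eta$. Finally, line~\ref{p-c-psgd:op:noise} sets $\Delta_2=2kL\eta$ and the subsequent sampling step draws $\kappa$ with density $p(\kappa)\propto\exp(-\varepsilon\|\kappa\|/\Delta_2)$, which is precisely the noise distribution prescribed by Lemma~\ref{lemma:laplace-dp-oblivious-randomized-algorithm} for $\Delta=2kL\eta$. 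Applying that lemma, the released $w+\kappa$ is $\varepsilon$-differentially private, which is the claim.

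The only genuinely delicate point — and it is handled entirely by the lemmas invoked above rather than redone here — is the coupling implicit in ``$\sup_r$'': one compares the runs of PSGD on $S$ and $S'$ under the \emph{identical} permutation, so that the single differing record is touched exactly once per pass and $\delta_T$ grows like $2kL\eta$ rather than uncontrollably. Lemma~\ref{lemma:laplace-dp-oblivious-randomized-algorithm} is exactly what licenses this randomness-by-randomness comparison, and Lemma~\ref{lemma:sensitivity:convex} is what turns the ``touched once per pass'' observation into the numerical bound; with those in hand the present theorem follows with no further work.
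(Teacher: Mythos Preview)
Your proposal is correct and matches the paper's own argument: the theorem is stated as an immediate consequence of Corollary~\ref{lemma:convex:constant-step} (giving $\Delta=2kL\eta$) together with Lemma~\ref{lemma:laplace-dp-oblivious-randomized-algorithm}, exactly as you outline. The hypothesis checks you spell out (non-adaptiveness of PSGD, $\eta\le 2/\beta$ enabling $1$-expansiveness, and the noise calibration at line~\ref{p-c-psgd:op:noise}) are precisely what the paper leaves implicit when it writes ``This directly yields the following theorem.''
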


We now give $L_2$-sensitivity results for two different choices of step sizes,
which are also common for convex optimization.
\begin{corollary}[\small Decreasing Step Size]
  \label{lemma:convex:decreasing-step}
  Let $c \in [0,1)$ be some constant.
  Consider $k$-passes PSGD for $L$-Lipschitz, convex
  and $\beta$-smooth optimization.
  Suppose further that we take decreasing step size
  $\eta_t = \frac{2}{\beta (t + m^c)}$ where $m$ is the training set size.
  Then
  $\sup_{S \sim S'}\sup_r \delta_T
  = \frac{4L}{\beta}\left(\frac{1}{m^c} + \frac{\ln k}{m}\right)$.
\end{corollary}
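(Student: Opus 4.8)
The plan is to reduce the claim directly to Lemma~\ref{lemma:sensitivity:convex}, which already does the heavy lifting via the growth recursion. First I would check that the prescribed step size is admissible: since $\eta_t = \frac{2}{\beta(t+m^c)}$ and $t + m^c \ge 1$ for every $t \ge 1$, we have $\eta_t \le \frac{2}{\beta}$, so the hypothesis $\eta_t \le 2/\beta$ of Lemma~\ref{lemma:sensitivity:convex} holds and it applies with $T = km$. It gives, for any neighboring $S \sim S^i$ and any permutation $r$ with $r(i) = i^*$, the bound $\delta_T \le 2L\sum_{j=0}^{k-1}\eta_{i^* + jm}$.

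Next I would take the supremum over $r$. Because $\eta_t$ is monotonically decreasing in $t$, each summand $\eta_{i^*+jm}$ is decreasing in $i^*$, so the sum is maximized at the smallest admissible position $i^* = 1$. Hence $\sup_{S\sim S'}\sup_r \delta_T \le 2L\sum_{j=0}^{k-1}\eta_{1+jm} = \frac{4L}{\beta}\sum_{j=0}^{k-1}\frac{1}{1 + jm + m^c}$. It then remains to estimate this sum: I would split off the $j = 0$ term, bounding $\frac{1}{1+m^c} \le \frac{1}{m^c}$, and for $j \ge 1$ use the crude but sufficient inequality $1 + jm + m^c \ge jm$ to get $\sum_{j=1}^{k-1}\frac{1}{1+jm+m^c} \le \frac{1}{m}\sum_{j=1}^{k-1}\frac{1}{j} \le \frac{\ln k}{m}$ via the standard harmonic-series estimate. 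Combining the two pieces yields $\sup_{S\sim S'}\sup_r \delta_T \le \frac{4L}{\beta}\left(\frac{1}{m^c} + \frac{\ln k}{m}\right)$, which is the asserted bound (the statement is really an upper bound on the $L_2$-sensitivity).

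There is no deep obstacle here; once Lemma~\ref{lemma:sensitivity:convex} is in hand the whole argument is a few lines of calculation. The only points that need a little care are (i) verifying the step-size constraint $\eta_t \le 2/\beta$, so that the $1$-expansiveness of Lemma~\ref{lemma:expansiveness:smooth-convex}, and hence the growth recursion of Lemma~\ref{lemma:growth-recursion}, is in force; and (ii) correctly identifying $i^* = 1$ as the worst-case position -- intuitively, the differing example being visited as early as possible within each pass is worst because the step sizes are largest then. The exponent $c$ simply trades off the first-pass contribution $\Theta(1/m^c)$ against the cumulative later-pass contribution $\Theta(\ln k / m)$; taking $c$ close to $1$ shrinks the former at the cost of a modest loss in convergence rate.
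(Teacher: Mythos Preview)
Your proposal is correct and follows essentially the same route as the paper: apply Lemma~\ref{lemma:sensitivity:convex}, take the worst case $i^*=1$ (since $\eta_t$ is decreasing), split off the $j=0$ term, and bound the remaining sum by $\frac{1}{m}\sum_{j=1}^{k-1}\frac{1}{j}$. If anything, your write-up is more careful than the paper's, which simply asserts the first displayed inequality without explicitly verifying $\eta_t\le 2/\beta$ or justifying why $i^*=1$ is extremal.
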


\begin{corollary}[\small Square-Root Step Size]
  \label{lemma:convex:square-root-step}
  Let $c \in [0, 1)$ be some constant.
  Consider $k$-passes PSGD for $L$-Lipschitz, convex
  and $\beta$-smooth optimization.
  Suppose further that we take square-root step size
  $\eta_t = \frac{2}{\beta (\sqrt{t} + m^c)}$. Then
  \begin{align*}
  \sup_{S \sim S'}\sup_r \delta_T
    \le & \frac{4L}{\beta}\left(\sum_{j=0}^{k-1}\frac{1}{\sqrt{jm+1} + m^c}\right)\\
    = & O\left(
        \frac{L}{\beta}\left( \frac{1}{m^c}
        + \min\left(\frac{k}{m^c},\sqrt{\frac{k}{m}}\right)
        \right) \right).
  \end{align*}
\end{corollary}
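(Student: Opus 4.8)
The plan is to proceed exactly as in the proofs of Corollaries~\ref{lemma:convex:constant-step} and~\ref{lemma:convex:decreasing-step}: specialize the per-permutation divergence bound of Lemma~\ref{lemma:sensitivity:convex} to the square-root schedule, maximize over the permutation, and then estimate the resulting series. First I would verify the hypothesis of Lemma~\ref{lemma:sensitivity:convex}, namely $\eta_t \le 2/\beta$ for all $t \le T = km$: since $\sqrt{t}\ge 1$ whenever $t\ge 1$ and $m^c \ge 0$, we have $\sqrt{t}+m^c \ge 1$, so $\eta_t = \frac{2}{\beta(\sqrt{t}+m^c)} \le \frac{2}{\beta}$, as required.

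Next, fix neighboring datasets $S \sim S^i$ and a permutation $r$, and let $i^* = r(i)$. Lemma~\ref{lemma:sensitivity:convex} gives $\delta_T \le 2L\sum_{j=0}^{k-1}\eta_{i^*+jm} = \frac{4L}{\beta}\sum_{j=0}^{k-1}\frac{1}{\sqrt{i^*+jm}+m^c}$. Because $t \mapsto \frac{1}{\sqrt{t}+m^c}$ is decreasing and, as $r$ ranges over all permutations of $[m]$, the value $i^* = r(i)$ ranges over $\{1,\dots,m\}$, the right-hand side is largest when $i^* = 1$. Since the bound is independent of which individual differs between $S$ and $S'$, this yields $\sup_{S \sim S'}\sup_r \delta_T \le \frac{4L}{\beta}\sum_{j=0}^{k-1}\frac{1}{\sqrt{jm+1}+m^c}$, which is the first displayed inequality.

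It remains to bound the series. I would peel off the $j=0$ term, which is $\frac{1}{1+m^c}\le \frac{1}{m^c}$, and bound the tail $\sum_{j=1}^{k-1}\frac{1}{\sqrt{jm+1}+m^c}$ in two complementary ways. Dropping $\sqrt{jm+1}$ in each denominator bounds every term by $1/m^c$, so the tail is at most $k/m^c$; alternatively, dropping $m^c$ and using $\sqrt{jm+1}\ge\sqrt{jm}$ bounds the tail by $\frac{1}{\sqrt{m}}\sum_{j=1}^{k-1}j^{-1/2}\le \frac{2\sqrt{k}}{\sqrt{m}}$, where $\sum_{j=1}^{k-1}j^{-1/2}\le\int_{0}^{k}x^{-1/2}\,dx = 2\sqrt{k}$ by the integral comparison. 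Taking the better of the two, the tail is at most $\min(k/m^c,\, 2\sqrt{k/m})$, hence the whole series is $\le \frac{1}{m^c} + \min\!\big(\tfrac{k}{m^c},\, 2\sqrt{\tfrac{k}{m}}\big) = O\!\big(\tfrac{1}{m^c} + \min(\tfrac{k}{m^c},\, \sqrt{\tfrac{k}{m}})\big)$; multiplying through by $4L/\beta$ gives the claim.

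There is no serious obstacle here: the statement is essentially bookkeeping on top of Lemma~\ref{lemma:sensitivity:convex}. The only points needing minor care are (i) justifying that the supremum over permutations is attained at $i^* = 1$ via monotonicity of the step-size schedule, and (ii) in the series estimate, controlling $\sum_{j\ge 1}j^{-1/2}$ by $2\sqrt{k}$ (an integral comparison) rather than accidentally introducing a logarithmic factor as in the harmonic sum $\sum 1/j$, while also noting that the additive $\frac{1}{m^c}$ term is genuinely needed: it dominates when $k$ is small relative to $m^{2-2c}$ and is not absorbed by the $\sqrt{k/m}$ branch.
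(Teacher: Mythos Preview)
Your proposal is correct and follows exactly the pattern the paper uses for the companion Corollary~\ref{lemma:convex:decreasing-step}: specialize Lemma~\ref{lemma:sensitivity:convex} to the given schedule, take the worst permutation via monotonicity of $\eta_t$ (yielding $i^*=1$), and then estimate the resulting sum. The paper does not spell out a separate proof for this corollary, so your argument---including the two-way bound on the tail and the integral comparison $\sum_{j=1}^{k-1}j^{-1/2}\le 2\sqrt{k}$---is precisely the intended one.
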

\red{
\noindent\text{\bf Remark on Constant Step Size}.
In Lemma~\ref{lemma:convex:constant-step} the step size is named ``constant'' for the SGD.
However, one should note that Constant step size for SGD can depend
on the size of the training set, and in particular can vanish to zero
as training set size $m$ increases. For example, a typical setting of step size is $\frac{1}{\sqrt{m}}$
(In fact, in typical convergence results of SGD, see, for example in~\cite{Bubeck15, NY83},
the constant step size $\eta$ is set to $1/T^{O(1)}$ where $T$ is the total number of iterations).
This, in particular, implies a sensitivity $O(k\eta) = O(k/\sqrt{m})$,
which vanishes to $0$ as $m$ grows to infinity.
}



\subsubsection{Strongly Convex Optimization}
\label{sec:strongly-convex}
Now we consider the case where $\ell(\cdot, z)$ is $\gamma$-strongly convex.
In this case the sensitivity is smaller because the gradient operators are
$\rho$-expansive for $\rho < 1$ so in particular they become contractions.
We have the following lemmas.
\begin{lemma}[Constant Step Size]
  \label{lemma:strongly-convex:constant-step}
  Consider PSGD for $L$-Lipschitz,
  $\gamma$-strongly convex and $\beta$-smooth optimization
  with constant step sizes $\eta \le \frac{1}{\beta}$.
  Let $k$ be the number of passes.
  Let $S, S'$  be two neighboring datasets differing at the $i$-th data point.
  Let $r$ be a random permutation of $[m]$.
  Suppose that $r(i) = i^*$. Let $T = km$, then
  $\delta_T \le 2L\eta\sum_{j=0}^{k-1}(1-\eta\gamma)^{(k-j)m - i^*}.$
  In particular,
  \[\sup_{S \sim S'}\sup_r \delta_T
  \le \frac{2\eta L}{1 - (1 - \eta\gamma)^m}.\]
\end{lemma}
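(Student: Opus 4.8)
The plan is to apply the Growth Recursion (Lemma~\ref{lemma:growth-recursion}) along the $T = km$ iterations of $k$-pass PSGD, tracking the divergence $\delta_t$ between the runs on $S$ and $S'$. Since $S$ and $S'$ differ only at the $i$-th data point, and $r$ is a fixed permutation with $r(i) = i^*$, the two runs use the \emph{same} gradient update $G_t = G_t'$ at every iteration \emph{except} those iterations where the current example is the differing point; by the permutation structure these ``bad'' iterations occur exactly once per pass, namely at iterations $i^*, i^* + m, i^* + 2m, \dots, i^* + (k-1)m$. First I would record that, by the choice $\eta \le \frac{1}{\beta}$ and $\gamma$-strong convexity, Lemma~\ref{lemma:expansiveness-strongly-convex-simplification} gives that every $G_t$ is $(1-\eta\gamma)$-expansive, and by $L$-Lipschitzness Lemma~\ref{lemma:boundedness} gives that every $G_t$ is $(\eta L)$-bounded.

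Next I would unwind the recursion. Starting from $\delta_0 = 0$: at a ``good'' iteration ($G_t = G_t'$) we get $\delta_t \le (1-\eta\gamma)\delta_{t-1}$; at a ``bad'' iteration (iteration $i^* + jm$ for $j = 0, \dots, k-1$, where the updates differ but are both $(\eta L)$-bounded and $(1-\eta\gamma)$-expansive) we get $\delta_t \le (1-\eta\gamma)\delta_{t-1} + 2\eta L$. Since there are no bad iterations before iteration $i^*$, $\delta$ stays $0$ until the first injection of $2\eta L$ at iteration $i^*$; thereafter each of the $k$ injections gets multiplied by a factor $(1-\eta\gamma)$ once for each of the remaining iterations. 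Counting: the injection at iteration $i^* + jm$ is followed by $(k-j)m - i^*$ further iterations up to $T = km$, each contributing a contraction factor $(1-\eta\gamma)$. Summing the contributions of all $k$ injections yields
\begin{align*}
  \delta_T \le 2L\eta \sum_{j=0}^{k-1} (1-\eta\gamma)^{(k-j)m - i^*},
\end{align*}
which is the first claim. For the supremum bound, I would note $1-\eta\gamma \in [0,1)$, so the exponent $(k-j)m - i^*$ is minimized over $j \in \{0,\dots,k-1\}$ and $i^* \in \{1,\dots,m\}$ when $j = k-1$ and $i^* = m$, but rather than optimizing termwise I would reindex the sum (let $\ell = k-j$, so the exponents are $\ell m - i^*$ for $\ell = 1, \dots, k$) and bound it by the infinite geometric-type series: $\sum_{\ell=1}^{k}(1-\eta\gamma)^{\ell m - i^*} \le (1-\eta\gamma)^{-i^*}\sum_{\ell=1}^{\infty}(1-\eta\gamma)^{\ell m} = \frac{(1-\eta\gamma)^{m-i^*}}{1-(1-\eta\gamma)^m} \le \frac{1}{1-(1-\eta\gamma)^m}$, using $(1-\eta\gamma)^{m-i^*} \le 1$ since $i^* \le m$. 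Multiplying by $2\eta L$ gives $\sup_{S\sim S'}\sup_r \delta_T \le \frac{2\eta L}{1-(1-\eta\gamma)^m}$.

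The main obstacle I anticipate is the bookkeeping of exactly \emph{which} iterations are ``bad'' and how many contraction steps follow each of them — it is easy to be off by one in the exponent $(k-j)m - i^*$ (e.g. whether iteration $i^*+jm$ itself contributes a factor, or whether the count runs to $T$ inclusive). I would pin this down by a small explicit check for $k=1$ (exponent should be $m - i^*$, matching a single injection at step $i^*$ followed by $m - i^*$ contractions to reach step $m$) and $k=2$ before trusting the general formula. A secondary point worth stating carefully is the justification that it suffices to fix the permutation $r$ and only vary the data: this is exactly the ``randomness one at a time'' reduction of Lemma~\ref{lemma:laplace-dp-oblivious-randomized-algorithm}, so the $\sup_r$ here is benign and the only real work is the deterministic recursion above.
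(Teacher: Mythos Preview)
Your proposal is correct and follows essentially the same approach as the paper: set up the recursion $\delta_t \le (1-\eta\gamma)\delta_{t-1}$ at ``good'' steps and $\delta_t \le (1-\eta\gamma)\delta_{t-1} + 2\eta L$ at the $k$ ``bad'' steps $t = i^* + jm$, unroll to get the sum of geometrically-decayed injections, and then bound by a geometric series using $i^* \le m$. The only cosmetic difference is that the paper bounds each term first (replacing the exponent $(k-j)m - i^*$ by $(k-j)m$) and then sums, whereas you sum first and bound $(1-\eta\gamma)^{m-i^*} \le 1$ at the end; these are the same computation.
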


\begin{lemma}[\small Decreasing Step Size]
  \label{lemma:strongly-convex:decreasing-step}
  Consider $k$-passes PSGD for $L$-Lipschitz,
  $\gamma$-strongly convex and $\beta$-smooth
  optimization. Suppose further that we use decreasing step length:
  $\eta_t = \min(\frac{1}{\gamma t}, \frac{1}{\beta})$.
  Let $S, S'$  be two neighboring datasets differing at the $i$-th data point.
  Let $r$ be a random permutation of $[m]$.
  Suppose that $r(i) = i^*$. Let $T = km$, then
  $\sup_{S \sim S'}\sup_r\delta_T \le \frac{2L}{\gamma m}.$
\end{lemma}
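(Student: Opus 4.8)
The plan is to follow the template of Lemma~\ref{lemma:strongly-convex:constant-step}, substituting the decreasing schedule $\eta_t=\min(\tfrac1{\gamma t},\tfrac1\beta)$ and then exploiting the telescoping it produces. By Lemma~\ref{lemma:laplace-dp-oblivious-randomized-algorithm} it is enough to bound $\delta_T$ for a \emph{single} shared permutation $r$ with $r(i)=i^*$, and afterwards take the worst case over $i^*\in\{1,\dots,m\}$. First I would record the combinatorial fact that, across $k$ passes ($T=km$ iterations), the runs on $S$ and $S'$ apply identical update operators at every iteration except the $k$ ``hit'' iterations $t_j=i^*+jm$ for $j=0,\dots,k-1$, where the single differing data point is touched (once per pass); at every other iteration $G_t=G_t'$.

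Next I would invoke the Growth Recursion (Lemma~\ref{lemma:growth-recursion}). Since $\eta_t\le\tfrac1\beta$ throughout, Lemma~\ref{lemma:expansiveness-strongly-convex-simplification} makes each $G_t$ $(1-\eta_t\gamma)$-expansive, and Lemma~\ref{lemma:boundedness} makes both $G_t$ and $G_t'$ $(\eta_t L)$-bounded. Hence $\delta_t\le(1-\eta_t\gamma)\delta_{t-1}$ on non-hit iterations and $\delta_t\le(1-\eta_t\gamma)\delta_{t-1}+2\eta_t L$ on the $k$ hit iterations. Unrolling from $\delta_0=0$ gives
\[
  \delta_T\;\le\;2L\sum_{j=0}^{k-1}\eta_{t_j}\prod_{s=t_j+1}^{T}\bigl(1-\eta_s\gamma\bigr).
\]

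The crux is to show each summand is at most $\tfrac1{\gamma T}$, so that the sum is at most $\tfrac{2Lk}{\gamma T}=\tfrac{2L}{\gamma m}$ regardless of $i^*$. Writing $\eta_s\gamma=\min(\tfrac1s,\tfrac\gamma\beta)$, the schedule switches at the threshold $q=\lfloor\beta/\gamma\rfloor\ge1$: for $s>q$ one has $1-\eta_s\gamma=\tfrac{s-1}{s}$, so the tail of the product telescopes; for $s\le q$ one simply uses $1-\eta_s\gamma=1-\tfrac\gamma\beta\le1$ (discarding this contraction can only enlarge the bound). Then if $t_j>q$ the summand equals $\tfrac1{\gamma t_j}\cdot\tfrac{t_j}{T}=\tfrac1{\gamma T}$ exactly; if $t_j\le q<T$ it is at most $\tfrac1\beta\cdot\tfrac{q}{T}\le\tfrac1{\gamma T}$ since $\gamma q\le\beta$; and in the degenerate regime $T\le q$ (the whole run stays in the constant-$\tfrac1\beta$ phase) the same estimate works since $\gamma T\le\gamma q\le\beta$. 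Summing the $k$ terms, and noting the resulting bound $\tfrac{2L}{\gamma m}$ does not depend on $i^*$ or $r$, concludes the argument.

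The only real obstacle I anticipate is the bookkeeping around the transition point $q$: making the ``telescoping plus contraction-dropping'' step precise enough to cover the boundary cases ($t_j=T$, $t_j\le q<T$, and $T\le q$) cleanly, rather than as a thicket of special cases. Everything else is a routine application of the Growth Recursion, exactly as in the constant-step-size lemma.
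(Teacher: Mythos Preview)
Your proposal is correct and follows essentially the same approach as the paper: both apply the Growth Recursion with the $(1-\eta_t\gamma)$-expansiveness bound, unroll to a sum over the $k$ hit times, and then telescope the product $\prod_s\tfrac{s-1}{s}$ in the $\eta_t=\tfrac{1}{\gamma t}$ regime to show each summand is at most $\tfrac{1}{\gamma T}=\tfrac{1}{\gamma km}$. Your case split on $t_j$ versus the threshold $q=\lfloor\beta/\gamma\rfloor$ is in fact a bit cleaner and more uniform than the paper's, which singles out the first pass and invokes $1$-expansiveness (from convexity) in the early $\eta_t=\tfrac{1}{\beta}$ phase rather than bounding $1-\gamma/\beta\le 1$ as you do; your explicit treatment of the degenerate case $T\le q$ is also more careful than the paper's.
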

In particular, Lemma~\ref{lemma:strongly-convex:decreasing-step}
yields the following theorem,
\begin{theorem}
  \label{thm:strongly-convex:decreasing-step-dp}
  Algorithm~\ref{alg:p-sc-psgd} is $\varepsilon$-differentially private.
\end{theorem}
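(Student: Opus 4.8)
The plan is to derive Theorem~\ref{thm:strongly-convex:decreasing-step-dp} as an immediate consequence of two facts already available: the non-adaptive output-perturbation lemma (Lemma~\ref{lemma:laplace-dp-oblivious-randomized-algorithm}) and the sensitivity bound of Lemma~\ref{lemma:strongly-convex:decreasing-step}. Concretely, $k$-pass PSGD is non-adaptive in the sense of Definition~\ref{def:non-adaptiveness}, since its only source of randomness is a permutation drawn once, independently of the data values. Writing $A(r;S)$ for its output on dataset $S$ with permutation $r$, Lemma~\ref{lemma:strongly-convex:decreasing-step} states that $\sup_{S\sim S'}\sup_r\|A(r;S)-A(r;S')\| = \sup_{S\sim S'}\sup_r\delta_T \le \frac{2L}{\gamma m}$, which is exactly the value $\Delta_2$ set at line~\ref{p-sc-psgd:op:noise} of Algorithm~\ref{alg:p-sc-psgd}. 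Since the algorithm then adds noise $\kappa$ drawn from density~(\ref{output-perturbation:noise-distribution}) with this $\Delta_2$, Lemma~\ref{lemma:laplace-dp-oblivious-randomized-algorithm} (applied with $\Delta=\frac{2L}{\gamma m}$) yields that the released $w+\kappa$ is $\varepsilon$-differentially private. So the entire content of the theorem lies in establishing Lemma~\ref{lemma:strongly-convex:decreasing-step}.

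To prove that sensitivity bound I would couple the two PSGD runs so that they use the \emph{same} permutation $r$ on the neighboring datasets $S$ and $S'$, which differ only at index $i$; by hypothesis $i$ sits at position $i^*$ of $r$, so it is processed exactly at the global iterations $t=jm+i^*$ for $j=0,\dots,k-1$, and at every other iteration the two runs apply identical updates. Now apply the Growth Recursion (Lemma~\ref{lemma:growth-recursion}) step by step over $t=1,\dots,T=km$, starting from $\delta_0=0$. At an ``ordinary'' iteration, $G_t=G_t'$; since $\ell(\cdot,z)$ is $\gamma$-strongly convex, $\beta$-smooth, and $\eta_t\le\frac1\beta$, Lemma~\ref{lemma:expansiveness-strongly-convex-simplification} makes $G_t$ $(1-\eta_t\gamma)$-expansive, so $\delta_t\le(1-\eta_t\gamma)\delta_{t-1}$. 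At a ``differing'' iteration $t=jm+i^*$ the updates $G_t,G_t'$ differ but are each $(\eta_tL)$-bounded (Lemma~\ref{lemma:boundedness}) and $G_t$ is still $(1-\eta_t\gamma)$-expansive, so $\delta_t\le(1-\eta_t\gamma)\delta_{t-1}+2\eta_tL$. Unrolling, each ``fresh'' perturbation $2\eta_{jm+i^*}L$ is afterwards multiplied by all later contraction factors, giving
\[
\delta_T \le \sum_{j=0}^{k-1} 2\eta_{jm+i^*} L \prod_{t=jm+i^*+1}^{km}\bigl(1-\eta_t\gamma\bigr).
\]

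The final step is to bound this sum for the schedule $\eta_t=\min(\frac1\beta,\frac1{\gamma t})$. When an index $t$ lies in the regime $\eta_t=\frac1{\gamma t}$ we have $1-\eta_t\gamma=\frac{t-1}{t}$, and the telescoping identity $\prod_{t=a+1}^{b}\frac{t-1}{t}=\frac ab$ collapses the product in the $j$-th summand to $\frac{jm+i^*}{km}$, so that summand is about $\frac{2L}{\gamma(jm+i^*)}\cdot\frac{jm+i^*}{km}=\frac{2L}{\gamma km}$; summing the $k$ of them gives the claimed $\frac{2L}{\gamma m}$, uniformly in both $k$ and the position $i^*$. The main obstacle — requiring care rather than new ideas — is handling the initial block of iterations where $\eta_t=\frac1\beta$ instead of $\frac1{\gamma t}$, so the telescoping identity does not apply verbatim; there one bounds the contraction factors $1-\frac\gamma\beta$ directly (using $1-x\le e^{-x}$) and checks that the resulting geometric decay, together with the worst case over where $i^*$ falls, is still absorbed into $\frac{2L}{\gamma km}$ per summand. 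As a consistency check, the same telescoping-plus-geometric-series argument reproduces the constant-step-size bound $\frac{2\eta L}{1-(1-\eta\gamma)^m}$ of Lemma~\ref{lemma:strongly-convex:constant-step} by summing $\sum_{j\ge0}(1-\eta\gamma)^{jm}$.
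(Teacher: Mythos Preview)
Your proposal is correct and mirrors the paper's approach: the theorem follows immediately from Lemma~\ref{lemma:laplace-dp-oblivious-randomized-algorithm} combined with the sensitivity bound of Lemma~\ref{lemma:strongly-convex:decreasing-step}, and your proof sketch of the latter (Growth Recursion plus the telescoping $\prod\frac{t-1}{t}$) matches the paper's argument. The one minor difference is that in the initial block where $\eta_t=\tfrac{1}{\beta}$ the paper simply bounds those contraction factors by $1$ via convex $1$-expansiveness (Lemma~\ref{lemma:expansiveness:smooth-convex}), which is a bit cleaner than your $1-x\le e^{-x}$ route but lands at the same $\tfrac{2L}{\gamma km}$ per summand.
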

One should contrast this theorem with Theorem~\ref{thm:convex:constant-step-dp}:
In the convex case we bound $L_2$-sensitivity by $2kL\eta$,
while in the strongly convex case we bound it by $2L/\gamma m.$


\subsubsection{Extensions}
\label{sec:extensions}
In this section we extend our main argument in several ways:
$(\varepsilon,\delta)$-differential privacy, mini-batching, model averaging,
fresh permutation at each pass, and finally constrained optimization.
These extensions can be easily incorporated to standard PSGD algorithm,
as well as our private algorithms~\ref{alg:p-c-psgd} and~\ref{alg:p-sc-psgd},
and are used in our empirical study later.

\noindent\textbf{$(\varepsilon, \delta)$-Differential Privacy}.
We can also obtain $(\varepsilon, \delta)$-differential privacy easily
using Gaussian noise (see Theorem~\ref{thm:gaussian-approximate-dp}).
\begin{lemma}
  \label{lemma:gaussian-dp-oblivious-randomized-algorithm}
  Let $A(r;S)$ be a non-adaptive randomized algorithm
  where $r$ denotes the randomness of the algorithm
  and $S$ denote the dataset. Suppose that
  \[\sup_{S \sim S'}\sup_r\| A(r;S) - A(r;S') \| \le \Delta.\]
  Then for any $\varepsilon \in (0,1)$,
  publishing $A(r;S) + \kappa$ where each component of $\kappa$
  is sampled using (\ref{gaussian-noise})
  ensures $(\varepsilon, \delta)$-differential privacy.
\end{lemma}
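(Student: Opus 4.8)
The plan is to reuse the proof of Lemma~\ref{lemma:laplace-dp-oblivious-randomized-algorithm} almost verbatim, replacing the pure-privacy guarantee of the Laplace mechanism (Theorem~\ref{thm:laplace-dp}) with the approximate-privacy guarantee of the Gaussian mechanism (Theorem~\ref{thm:gaussian-approximate-dp}). Write $\widetilde{A}$ for the private version of $A$, with two independent sources of randomness: the internal randomness $r$ of $A$ and the Gaussian perturbation $\kappa$ whose coordinates are drawn as in~(\ref{gaussian-noise}). Since $A$ is non-adaptive (Definition~\ref{def:non-adaptiveness}), the law of $r$ does not depend on the input, so for any event $E$,
\[
\Pr[\widetilde{A}((r,\kappa);S)\in E] = \sum_r \Pr[R=r]\, p_\kappa\big(A_r(S)\in E\big),
\]
exactly as in~(\ref{eq:1}), where $p_\kappa(A_r(S)\in E) = \Pr_\kappa[A(r;S)+\kappa\in E]$ and $R$ is the random variable for $A$'s randomness.

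Next I would fix $r$ and regard $f := A(r;\cdot)$ as a deterministic query into $\Real^d$. Its $L_2$-sensitivity satisfies $\Delta_2(f) = \max_{S\sim S'}\|A(r;S)-A(r;S')\| \le \Delta$ by hypothesis, so the hypotheses of Theorem~\ref{thm:gaussian-approximate-dp} hold with $\sigma \ge c\Delta/\varepsilon \ge c\Delta_2(f)/\varepsilon$ and $c^2 > 2\ln(1.25/\delta)$ (and $\varepsilon\in(0,1)$). Therefore, for every neighboring pair $S\sim S'$ and every event $E$,
\[
p_\kappa\big(A_r(S)\in E\big) \le e^\varepsilon\, p_\kappa\big(A_r(S')\in E\big) + \delta.
\]

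Finally I would average this inequality over $r$ against the data-independent weights $\Pr[R=r]$. Because these weights sum to one, the additive error does not accumulate, and we obtain $\Pr[\widetilde{A}((r,\kappa);S)\in E] \le e^\varepsilon\,\Pr[\widetilde{A}((r,\kappa);S')\in E] + \delta$, which is $(\varepsilon,\delta)$-differential privacy. The only step requiring a moment's care — and the closest thing here to an obstacle — is this last averaging: one must confirm that mixing over the data-independent randomness $r$ preserves $(\varepsilon,\delta)$ rather than inflating $\delta$. This is the single place where the approximate-DP argument genuinely departs from the pure-DP proof of Lemma~\ref{lemma:laplace-dp-oblivious-randomized-algorithm}, but it is immediate since $\sum_r\Pr[R=r]=1$; everything else is a transcription of that proof.
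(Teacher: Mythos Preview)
Your proposal is correct and matches the paper's intended argument: the paper does not give a separate proof for this lemma but presents it as the obvious Gaussian analogue of Lemma~\ref{lemma:laplace-dp-oblivious-randomized-algorithm}, obtained by invoking Theorem~\ref{thm:gaussian-approximate-dp} in place of Theorem~\ref{thm:laplace-dp}. Your observation that the additive $\delta$ survives the averaging over $r$ because $\sum_r\Pr[R=r]=1$ is exactly the one extra step needed beyond the pure-DP case, and it is handled correctly.
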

In particular, combining this with our $L_2$-sensitivity results,
we get the following two theorems,
\begin{theorem}[Convex and Constant Step]
  \label{thm:approximate-dp-convex-const-step}
  Algorithm~\ref{alg:p-c-psgd} is $(\varepsilon, \delta)$-differentially private
  if each component of $\kappa$ at line~\ref{p-c-psgd:op:noise} is sampled
  according to equation (\ref{gaussian-noise}).
\end{theorem}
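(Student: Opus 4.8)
The plan is to assemble three ingredients that are all already in place: the non-adaptivity of the \PSGD{} subroutine, the $L_2$-sensitivity bound for the convex constant-step-size case, and the Gaussian-noise version of the output-perturbation mechanism for non-adaptive algorithms. Concretely, I would apply Lemma~\ref{lemma:gaussian-dp-oblivious-randomized-algorithm} with $A = \PSGD$ and $\Delta = 2kL\eta$, where the bound on $\Delta$ comes from Corollary~\ref{lemma:convex:constant-step}.

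First I would observe that the \PSGD{} routine invoked at line~\ref{p-c-psgd:op:sgd} is non-adaptive in the sense of Definition~\ref{def:non-adaptiveness}: it draws a single random permutation of $[m]$ at the outset, independently of the data values, and all subsequent choices are deterministic functions of that permutation. Hence the hypothesis of Lemma~\ref{lemma:gaussian-dp-oblivious-randomized-algorithm} is met, and it suffices to control $\sup_{S \sim S'}\sup_r \|A(r;S) - A(r;S')\| = \sup_{S \sim S'}\sup_r \delta_T$ with $T = km$. Next I would check that the preconditions declared in Algorithm~\ref{alg:p-c-psgd} — namely that $\ell(\cdot,z)$ is convex for every $z$ (so that, with the standing assumptions, it is also $L$-Lipschitz and $\beta$-smooth) and that a constant step size $\eta \le 2/\beta$ is used — are exactly the hypotheses of Corollary~\ref{lemma:convex:constant-step}. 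Applying that corollary yields $\sup_{S \sim S'}\sup_r \delta_T \le 2kL\eta$, which is precisely the value $\Delta_2$ assigned at line~\ref{p-c-psgd:op:noise}. Finally I would invoke Lemma~\ref{lemma:gaussian-dp-oblivious-randomized-algorithm} with $\Delta = 2kL\eta$: since each coordinate of $\kappa$ is drawn from ${\cal N}(0,\sigma^2)$ with $\sigma \ge c\Delta_2/\varepsilon$ and $c^2 > 2\ln(1.25/\delta)$ as in equation~(\ref{gaussian-noise}) — exactly the calibration that Theorem~\ref{thm:gaussian-approximate-dp} demands inside that lemma — releasing $w + \kappa = A(r;S) + \kappa$ is $(\varepsilon,\delta)$-differentially private, and this is the output of Algorithm~\ref{alg:p-c-psgd}.

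There is no genuinely difficult step here; the theorem is a corollary of machinery already developed, and the "proof" is really a verification that the hypotheses line up. The one point I would make explicit, since it is where all the work is hidden, is that the relevant sensitivity notion is the per-fixed-randomness quantity $\sup_r \delta_T$ rather than the a priori larger $\sup_{r,r'}\|A(r;S)-A(r';S')\|$ in which the two runs could use different permutations and thus access the data in incomparable orders; it is exactly the non-adaptivity of \PSGD{} — which licenses the term-by-term coupling of the randomness $r$ in the proof of Lemma~\ref{lemma:gaussian-dp-oblivious-randomized-algorithm}, mirroring the Laplace argument of Lemma~\ref{lemma:laplace-dp-oblivious-randomized-algorithm} — that makes this reduction valid and lets the small bound $2kL\eta$ from the growth-recursion/expansiveness analysis be used directly as the sensitivity.
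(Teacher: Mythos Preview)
Your proposal is correct and matches the paper's own treatment: the paper does not give a standalone proof but simply states that this theorem follows by ``combining [Lemma~\ref{lemma:gaussian-dp-oblivious-randomized-algorithm}] with our $L_2$-sensitivity results,'' i.e., exactly your combination of Lemma~\ref{lemma:gaussian-dp-oblivious-randomized-algorithm} with Corollary~\ref{lemma:convex:constant-step}. Your additional paragraph making explicit why non-adaptivity is what permits the per-fixed-randomness sensitivity $\sup_r\delta_T$ to stand in for the a priori larger $\sup_{r,r'}\|A(r;S)-A(r';S')\|$ is a welcome clarification that the paper leaves implicit.
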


\begin{theorem}[Strongly Convex and Decreasing Step]
  \label{thm:approximate-dp-convex-sqrt-step}
  Algorithm~\ref{alg:p-sc-psgd} is $(\varepsilon, \delta)$-differentially private
  if each component of $\kappa$ at line~\ref{p-c-psgd:op:noise} is sampled
  according to equation (\ref{gaussian-noise}).
\end{theorem}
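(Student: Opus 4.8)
The plan is to obtain this statement exactly the way Theorem~\ref{thm:strongly-convex:decreasing-step-dp} is obtained for pure $\varepsilon$-differential privacy, but routing the $L_2$-sensitivity bound through the Gaussian mechanism rather than the Laplace-type mechanism. Concretely, I would invoke the generic Gaussian output-perturbation statement, Lemma~\ref{lemma:gaussian-dp-oblivious-randomized-algorithm}, with the non-adaptive algorithm $A(r;S)$ taken to be the $k$-pass \PSGD{} run at line~\ref{p-sc-psgd:op:sgd} of Algorithm~\ref{alg:p-sc-psgd}, where $r$ is the random permutation of $[m]$ and the step size is $\eta_t=\min(\frac{1}{\beta},\frac{1}{\gamma t})$.

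The two things to verify are the hypotheses of that lemma. First, $A$ must be non-adaptive; this holds because \PSGD{} draws its only source of randomness---the permutation---before inspecting any data value, as noted after Definition~\ref{def:non-adaptiveness}. Second, I need the uniform sensitivity bound $\sup_{S\sim S'}\sup_r\|A(r;S)-A(r;S')\|\le\Delta$ with $\Delta$ equal to the value $\Delta_2$ assigned at line~\ref{p-sc-psgd:op:noise}, namely $\Delta_2=\frac{2L}{\gamma m}$. But the precondition of Algorithm~\ref{alg:p-sc-psgd} is precisely that each $\ell(\cdot,z)$ is $\gamma$-strongly convex (and, in our running example, also $L$-Lipschitz and $\beta$-smooth), and the chosen step-size schedule is exactly the one analyzed in Lemma~\ref{lemma:strongly-convex:decreasing-step}, which delivers $\sup_{S\sim S'}\sup_r\delta_T\le\frac{2L}{\gamma m}$. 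So the premise of Lemma~\ref{lemma:gaussian-dp-oblivious-randomized-algorithm} is met verbatim with $\Delta=\Delta_2$.

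Given both hypotheses, Lemma~\ref{lemma:gaussian-dp-oblivious-randomized-algorithm} yields that for any $\varepsilon\in(0,1)$, publishing $A(r;S)+\kappa$ with each coordinate of $\kappa$ drawn from ${\cal N}(0,\sigma^2)$, $\sigma\ge c\Delta_2/\varepsilon$ and $c^2>2\ln(1.25/\delta)$, is $(\varepsilon,\delta)$-differentially private---and this is exactly the output of Algorithm~\ref{alg:p-sc-psgd} once the Gaussian noise of~(\ref{gaussian-noise}) replaces the noise at line~\ref{p-sc-psgd:op:noise}. I expect no genuine obstacle: all the real work already lives in Lemma~\ref{lemma:strongly-convex:decreasing-step} (the growth-recursion argument using the $(1-\eta\gamma)$-expansiveness of the contractive gradient steps to damp the contributions from passes before the differing index $i^*$) and in the term-by-term comparison over the permutation randomness inside the proofs of Lemmas~\ref{lemma:laplace-dp-oblivious-randomized-algorithm} and~\ref{lemma:gaussian-dp-oblivious-randomized-algorithm}. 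The only point to be careful about is that the ``randomness one at a time'' reduction is legitimate precisely because the permutation is sampled independently of the data, so that calibrating noise to $\sup_r\|A(r;S)-A(r;S')\|$ rather than the a priori larger $\sup_{r,r'}\|A(r;S)-A(r';S')\|$ is sound; this is already guaranteed by non-adaptivity, so the rest is a direct composition of results stated above.
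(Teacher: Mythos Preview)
Your proposal is correct and matches the paper's approach exactly: the paper simply states that this theorem follows by ``combining'' Lemma~\ref{lemma:gaussian-dp-oblivious-randomized-algorithm} with the $L_2$-sensitivity bound of Lemma~\ref{lemma:strongly-convex:decreasing-step}, which is precisely the route you spell out.
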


\noindent\textbf{Mini-batching}.
A popular way to do SGD is that at each step,
instead of sampling a single data point $z_t$
and do gradient update w.r.t. it,
we randomly sample a batch $B \subseteq [m]$ of size $b$, and do
\begin{align*}
  w_{t} = w_{t-1} - \eta_t\frac{1}{b}\left(\sum_{i \in B}\ell_i'(w_{t-1})\right)
  = \frac{1}{b}\sum_{i \in B}G_i(w_{t-1}).
\end{align*}
For permutation SGD, a natural way to employ mini-batch is to partition
the $m$ data points into mini-batches of size $b$
(for simplicity let us assume that $b$ divides $m$),
and do gradient updates with respect to each chunk.
In this case, we notice that mini-batch indeed improves the sensitivity
by a factor of $b$.
In fact, let us consider neighboring datasets $S, S'$, and at step $t$,
we have batches $B, B'$ that differ in at most one data point.
Without loss of generality, let us consider the case where $B, B'$
differ at one data point, then on $S$ we have
$w_{t} = \frac{1}{b}\sum_{i \in B}G_i(w_{t-1}),$
and on $S'$ we have
$w'_{t} = \frac{1}{b}\sum_{i \in B}G_i'(w'_{t-1}),$
and so
\begin{align*}
  \delta_{t}
  =& \left\| \frac{1}{b} \sum_{i \in B} G_i(w_{t-1}) - G_i'(w_{t-1}') \right\| \\
  \le& \frac{1}{b}\sum_{i=1}^B\|G_i(w_{t-1}) - G_i'(w_{t-1}')\|.
\end{align*}
We note that for all $i$ except one in $B$, $G_i = G_i'$, and so by
the Growth Recursion Lemma~\ref{lemma:growth-recursion},
$\|G_i(w_{t-1}) - G_i'(w_{t-1}')\| \le \rho\delta_{t-1}$ if $G_i$ is $\rho$-expansive,
and for the differing index $i^*$,
$\|G_{i^*}(w_{t-1}) - G_{i^*}'(w_{t-1}')\| \le \min(\rho, 1)\delta_{t-1} + 2\sigma_t$.
Therefore, for a uniform bound $\rho_t$ on expansiveness and $\sigma_t$ on boundedness
(for all $i \in B$, which is the case in our analysis), we have that
$\delta_t \le \rho_t\delta_{t-1} + \frac{2\sigma_t}{b}.$
This implies a factor $b$ improvement for all our sensitivity bounds.

\noindent\textbf{Model Averaging}.
Model averaging is a popular technique for SGD.
For example, given iterates $w_1, \dots, w_T$, a common way to do model averaging
is either to output $\frac{1}{T}\sum_{t=1}^Tw_t$ or output the average of the last
$\log T$ iterates. We show that model averaging will not affect our sensitivity result,
and in fact it will give a constant-factor improvement when earlier iterates
have smaller sensitivities. We have the following lemma.
\begin{lemma}[Model Averaging]
  \label{lemma:model-averaging}
  Suppose that instead of returning $w_T$ at the end of the optimization,
  we return an averaged model $\bar{w} = \sum_{t=1}^T\alpha_t w_t,$
  where $\alpha_t$ is a sequence of coefficients that only depend on $t, T$. Then,
  \begin{align*}
    \sup_{S \sim S'}\sup_r\| \bar{w} - \bar{w}' \|
    \le \sum_{t=1}^T\alpha_t\| w_t - w_t' \|
    = \sum_{t=1}^T\alpha_t\delta_t.
  \end{align*}
  In particular, we notice that the $\delta_t$'s we derived
  before are non-decreasing, so the sensitivity is bounded by
  $(\sum_{t=1}^T\alpha_t)\delta_T$.
\end{lemma}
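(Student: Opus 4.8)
\noindent The plan is to prove the bound by a direct application of the triangle inequality, exploiting the crucial fact that the averaging coefficients $\alpha_t$ are \emph{data-independent}: this keeps the averaged algorithm non-adaptive, so the two runs on $S$ and $S'$ can be coupled under a common randomness string $r$ exactly as in Lemma~\ref{lemma:laplace-dp-oblivious-randomized-algorithm}, and the quantities $\delta_t$ of Definition~\ref{def:divergence} are the right objects to aggregate.

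First I would fix a pair of neighboring datasets $S \sim S'$ and a realization of the randomness $r$ (e.g., a sampled permutation), and run \PSGD{} with the \emph{same} $r$ on both inputs, obtaining coupled iterate sequences $w_0, \dots, w_T$ and $w_0', \dots, w_T'$ with $w_0 = w_0'$, so that $\delta_t = \|w_t - w_t'\|$ as in Definition~\ref{def:divergence}. Since $\alpha_t$ depends only on $t$ and $T$ and not on the data, the same coefficients form $\bar w = \sum_{t=1}^T \alpha_t w_t$ and $\bar w' = \sum_{t=1}^T \alpha_t w_t'$; subtracting gives $\bar w - \bar w' = \sum_{t=1}^T \alpha_t (w_t - w_t')$. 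Taking norms and using $\alpha_t \ge 0$ (for model averaging $\alpha_t \ge 0$ and $\sum_t \alpha_t = 1$; otherwise one replaces $\alpha_t$ by $|\alpha_t|$) together with the triangle inequality yields $\|\bar w - \bar w'\| \le \sum_{t=1}^T \alpha_t \|w_t - w_t'\| = \sum_{t=1}^T \alpha_t \delta_t$. Finally, taking the supremum over $r$ and over $S \sim S'$ and moving it inside the finite sum (which only enlarges the right-hand side) gives the first displayed inequality, $\sup_{S\sim S'}\sup_r\|\bar w - \bar w'\| \le \sum_{t=1}^T \alpha_t \big(\sup_{S\sim S'}\sup_r \delta_t\big)$.

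For the ``in particular'' claim I would invoke the explicit per-iterate sensitivity bounds from Sections~\ref{sec:convex} and~\ref{sec:strongly-convex}: in each case the bound on $\sup_{S\sim S'}\sup_r \delta_t$ is a non-decreasing function of $t$ (in the convex case it is the partial sum $2L\sum \eta$; in the strongly convex case it is an increasing geometric-type sum). Hence $\alpha_t \delta_t \le \alpha_t \delta_T$ for every $t$, so $\sum_{t=1}^T \alpha_t \delta_t \le \big(\sum_{t=1}^T \alpha_t\big)\delta_T$; for the standard choices $\sum_t \alpha_t = 1$ this recovers exactly the un-averaged sensitivity bound, and it is strictly smaller whenever some earlier iterate has strictly smaller sensitivity, which is the promised constant-factor improvement.

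I expect no genuine obstacle here; the points needing care are conceptual rather than computational: (i) one must note that model averaging preserves non-adaptiveness, so Lemma~\ref{lemma:laplace-dp-oblivious-randomized-algorithm} still licenses bounding the sensitivity one randomness at a time; (ii) $\bar w \in \calW$ because $\calW$ is convex and the $\alpha_t$ form a convex combination, and if averaging is combined with the constrained/projected variant in this section the metric projection is $1$-expansive and therefore leaves the Growth Recursion bounds on the $\delta_t$ intact; (iii) the exchange of $\sup$ and the finite sum is harmless. Everything else is the triangle inequality.
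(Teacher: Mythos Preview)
Your proposal is correct and is exactly the intended argument. The paper does not even give a separate proof of this lemma: the displayed inequality in the statement \emph{is} the triangle inequality applied to $\bar w - \bar w' = \sum_{t}\alpha_t(w_t - w_t')$, and the ``in particular'' clause is the trivial monotone bound $\alpha_t\delta_t \le \alpha_t\delta_T$. Your additional remarks on non-adaptiveness, convexity of $\calW$, and $1$-expansiveness of projection are all accurate and supply useful context, but they go beyond what the paper spells out for this lemma.
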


\noindent\textbf{Fresh Permutation at Each Pass}.
We note that our analysis extends verbatim to the case where in each pass
a new permutation is sampled, as our analysis applies to {\em{any}} fixed permutation.

\noindent\textbf{Constrained Optimization}.
Until now, our SGD algorithm is for unconstrained optimization.
That is, the hypothesis space $\calW$ is the entire $\bbR^d$.
Our results easily  extend to constrained optimization
where the hypothesis space $\calW$ is a convex set $\calC$.
That is, our goal is to compute $\min_{w \in {\cal C}} L_S(w)$. In this case,
we change the original gradient update rule~\ref{gradient-update-rule}
to the {\em projected gradient update rule}:
\begin{align}
  \label{rule:projected-sgd}
  w_t = \prod_{\cal C}\big( w_{t-1} - \eta_t\ell_t'(w_{t-1}) \big),
\end{align}
where $\prod_{\cal C}(w) = \argmin_{v}\|v - w\|$ is the projection of $w$
to $\cal C$. It is easy to see that our analysis carries over verbatim to
the projected gradient descent. In fact, our analysis works as long as the optimization
is carried over a Hilbert space (i.e., the $\| \cdot \|$ is induced
by some inner product). The essential reason is that
projection will not increase the distance
($\| \prod u - \prod v \| \le \| u - v \|$),
and thus will not affect our sensitivity argument.


\subsubsection{Convergence of Optimization}
\label{sec:convergence}
We now bound the optimization error of our private PSGD algorithms.
More specifically, we bound the {\em{excess empirical risk}}
$L_S(w) - L_S^*$ where $L_S(w)$ is the loss of the output $w$
of our private SGD algorithm and $L_S^*$ is the minimum obtained
by any $w$ in the feasible set $\calW$.
Note that in PSGD we sample data points {\em without replacement}.
While sampling without replacement benefits our $L_2$-sensitivity argument,
its convergence behavior is poorly understood in theory.
Our results are based on very recent advances by Shamir~\cite{Shamir16}
on the sampling-without-replacement SGD.

As in Shamir~\cite{Shamir16}, we assume that
the loss function $\ell_i$ takes the form of $\ell_i(\langle w, x_i \rangle) + r(w)$
where $r$ is some fixed function.
Further we assume that the optimization is carried over a convex set $\calC$
of radius $R$ (i.e., $\|w\| \le R$ for $w \in \calC$).
We use projected PSGD algorithm (i.e., we use the
projected gradient update rule~\ref{rule:projected-sgd}).

Finally,  $R(T)$ is a regret bound if for any $w \in {\cal W}$
and convex-Lipschitz $\ell_1, \dots, \ell_T$,
$\sum_{t=1}^T\ell_t(w_t) - \sum_{t=1}^T\ell_t(w) \le R(T)$
and $R(T)$ is sublinear in $T$.
We use the following regret bound,
\begin{theorem}[Zinkevich~\cite{Zinkevich03}]
  \label{thm:Zinkevich-thm1}
  For SGD with constant step size $\eta_1 = \eta_2 = \cdots = \eta_T = \eta$,
  $R(T)$ is bounded by $\frac{R^2}{2\eta} + \frac{L^2T\eta}{2}$.
\end{theorem}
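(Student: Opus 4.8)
The plan is to use the classical online-gradient-descent potential argument of Zinkevich, tracking the squared distance $\|w_t - w\|^2$ from the iterates to an arbitrary fixed comparator $w \in \calW$. Fix such a $w$. Since the projected update is $w_{t+1} = \prod_{\calC}\big(w_t - \eta\,\ell_t'(w_t)\big)$ and Euclidean projection onto a convex set is non-expansive (the fact $\|\prod u - \prod v\| \le \|u-v\|$ already noted in the Extensions), expanding the square gives
\[
\|w_{t+1} - w\|^2 \le \|w_t - \eta\,\ell_t'(w_t) - w\|^2 = \|w_t - w\|^2 - 2\eta\langle \ell_t'(w_t),\, w_t - w\rangle + \eta^2\|\ell_t'(w_t)\|^2 .
\]

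Next I would bring in two basic properties from the Preliminaries. Convexity of $\ell_t$ yields the (sub)gradient inequality $\ell_t(w_t) - \ell_t(w) \le \langle \ell_t'(w_t),\, w_t - w\rangle$, and $L$-Lipschitzness yields $\|\ell_t'(w_t)\| \le L$. Substituting both into the displayed inequality and rearranging gives, for each $t$,
\[
\ell_t(w_t) - \ell_t(w) \le \frac{\|w_t - w\|^2 - \|w_{t+1} - w\|^2}{2\eta} + \frac{\eta L^2}{2}.
\]
Summing over $t = 1,\dots,T$, the first term on the right telescopes and is at most $\frac{\|w_1 - w\|^2}{2\eta}$ after dropping the nonnegative $-\|w_{T+1}-w\|^2$ term; with the standard initialization $w_1 = 0$ (the center of $\calC$) and $\|w\| \le R$ this is at most $\frac{R^2}{2\eta}$, while the second term sums to $\frac{L^2 T \eta}{2}$. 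Hence $\sum_{t=1}^T \ell_t(w_t) - \sum_{t=1}^T \ell_t(w) \le \frac{R^2}{2\eta} + \frac{L^2 T \eta}{2}$, and since $w \in \calW$ was arbitrary this is a valid regret bound $R(T)$.

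There is no real obstacle here; this is the textbook analysis. The only points needing care are applying the subgradient inequality in the correct direction and the choice of starting point: with an arbitrary $w_1 \in \calC$ one only gets $\|w_1 - w\|^2 \le (2R)^2$, a constant factor worse, so I would state explicitly that the iteration is initialized at (or near) the center so that $\|w_1 - w\|^2 \le R^2$, which is what produces the stated constant $\tfrac{1}{2}$ in front of $R^2/\eta$.
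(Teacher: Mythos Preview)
Your argument is correct and is precisely the classical Zinkevich potential analysis; the paper's own proof simply states that one follows Zinkevich's Theorem~1 verbatim except for plugging in a constant step size in the final error summation, which is exactly what you have written out. Your remark about the initialization $w_1 = 0$ is appropriate and matches the paper's convention that $\calW$ has radius $R$.
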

The following lemma is useful in bounding excess empirical risk.
\begin{lemma}[Risk due to Privacy]
  \label{lemma:error-due-to-privacy}
  Consider $L$-Lipschitz and $\beta$-smooth optimization.
  Let $w$ be the output of the non-private SGD algorithm,
  $\kappa$ be the noise of the output perturbation,
  and $\widetilde{w} = w + \kappa$. Then
  $L_S(w) - L_S(\widetilde{w}) \le L\| \kappa \|.$
\end{lemma}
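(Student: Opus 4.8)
The plan is to derive the bound directly from the $L$-Lipschitz assumption on the per-example losses; $\beta$-smoothness is not actually needed for this particular inequality and can be ignored here (it is stated only for uniformity with the surrounding lemmas).

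First I would observe that the empirical risk $L_S$ inherits the Lipschitz constant of the individual losses. Since each $\ell_i = \ell(\cdot, (x_i,y_i))$ is $L$-Lipschitz, for any $u, v \in \calW$ we have $|\ell_i(u) - \ell_i(v)| \le L\|u - v\|$, and hence by the triangle inequality
\[
  |L_S(u) - L_S(v)| = \Big| \frac{1}{m}\sum_{i=1}^m \big(\ell_i(u) - \ell_i(v)\big) \Big|
  \le \frac{1}{m}\sum_{i=1}^m |\ell_i(u) - \ell_i(v)| \le L\|u - v\|.
\]
So $L_S$ is itself $L$-Lipschitz.

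Next I would instantiate this at $u = w$ and $v = \widetilde{w} = w + \kappa$, which gives $|L_S(w) - L_S(\widetilde{w})| \le L\|w - \widetilde{w}\| = L\|\kappa\|$. Dropping the absolute value yields the one-sided statement $L_S(w) - L_S(\widetilde{w}) \le L\|\kappa\|$, as claimed.

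There is essentially no obstacle: the only point worth stating carefully is that averaging $L$-Lipschitz functions preserves the Lipschitz constant (it does not inflate it), so the noise contributes additively with coefficient exactly $L$. This lemma will then be combined downstream with the high-probability bound $\|\kappa\| \le d\ln(d/\gamma)\Delta_2/\varepsilon$ from Theorem~\ref{cor:norm-noise-vector} (together with the relevant $L_2$-sensitivity bound and a regret bound such as Theorem~\ref{thm:Zinkevich-thm1}) to control the total excess empirical risk.
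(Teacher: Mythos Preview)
Your argument is correct and is precisely the intended one: the paper states this lemma without proof, treating it as an immediate consequence of the $L$-Lipschitz assumption on each $\ell_i$ (and hence on the average $L_S$). Your remark that $\beta$-smoothness plays no role here is also accurate.
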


\red{
\noindent\textbf{$\varepsilon$-Differential Privacy}.
We now give convergence result for SGD with $\varepsilon$-differential privacy.
}

\noindent\textbf{\em Convex Optimization}.
If $\ell(\cdot, z)$ is convex, we use the following theorem from Shamir~\cite{Shamir16},
\begin{theorem}[Corollary 1 of Shamir~\cite{Shamir16}]
  \label{thm:Shamir-cor1}
  Let $T \le m$ (that is we take at most $1$-pass over the data).
  Suppose that each iterate $w_t$ is chosen from $\cal W$,
  and the SGD algorithm has regret bound $R(T)$,
  and that $\sup_{t, w \in {\cal W}}|\ell_t(w)| \le R$, and $\|w\| \le R$ for all
  $w \in {\cal W}$. Finally, suppose that each loss function $\ell_t$ takes the form
  $\bar{\ell}(\langle w, x_t \rangle) + r(w)$
  for some $L$-Lipschitz $\bar{\ell}(\cdot, x_t)$ and $\|x_t\| \le 1$,
  and a fixed $r$, then
  \begin{align*}
    \Exp\left[ \frac{1}{T}\sum_{t=1}^TL_S(w_t) - L_S(w^*)\right] \le \frac{R(T)}{T}
    + \frac{2(12+\sqrt{2}L)R}{\sqrt{m}}.
  \end{align*}
\end{theorem}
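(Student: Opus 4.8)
The plan is to recall how Shamir~\cite{Shamir16} obtains this bound, since the statement is quoted from that work and we will reuse it as a black box. The starting point is an online-to-batch decomposition specialized to sampling \emph{without} replacement. Let $\tau$ be the random permutation used by \PSGD, let $\ell_1,\dots,\ell_T$ be the losses in visitation order, and let $w^* = \argmin_{w\in\calW} L_S(w)$ (which depends only on the unordered set $S$, not on $\tau$). For each $t$ write
\begin{align*}
  L_S(w_t) - L_S(w^*)
  = \big( L_S(w_t) - \ell_t(w_t) \big)
  + \big( \ell_t(w_t) - \ell_t(w^*) \big)
  + \big( \ell_t(w^*) - L_S(w^*) \big),
\end{align*}
and average over $t=1,\dots,T$. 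Summing the middle term gives at most $R(T)$ by the definition of a regret bound with comparator $w^*$ (Theorem~\ref{thm:Zinkevich-thm1} supplying the explicit $R(T) = R^2/2\eta + L^2 T\eta/2$), hence a contribution $R(T)/T$. The last term has expectation zero: the expectation is only over $\tau$, and under sampling without replacement each $\ell_t$ is marginally a uniformly random element of $\{\ell_1,\dots,\ell_m\}$, so $\Exp[\ell_t(w^*)] = L_S(w^*)$.

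The real work is the first (``generalization'') term $\frac1T\sum_{t=1}^T \big(L_S(w_t) - \ell_t(w_t)\big)$. Conditioning on the first $t-1$ sampled points, $w_t$ is determined and the $t$-th point is uniform over the remaining $m-t+1$, so $\Exp[\ell_t(w_t)\mid \cdot]$ is the empirical average of $\ell_i(w_t)$ over the unused indices; combining this with $L_S = \tfrac{t-1}{m}\widehat L_{\mathrm{used}} + \tfrac{m-t+1}{m}\widehat L_{\mathrm{unused}}$ yields
\begin{align*}
  \Exp\big[ L_S(w_t) - \ell_t(w_t)\,\big|\,\text{first } t-1 \text{ points} \big]
  = \tfrac{t-1}{m}\big( \widehat L_{\mathrm{used}}(w_t) - \widehat L_{\mathrm{unused}}(w_t) \big)
  \le \tfrac{t-1}{m}\,\sup_{w\in\calW}\big| \widehat L_{\mathrm{used}}(w) - \widehat L_{\mathrm{unused}}(w) \big|.
\end{align*}
Because each loss has the form $\bar\ell(\langle w, x\rangle) + r(w)$ with $r$ fixed, the $r(w)$ part cancels inside this difference, leaving the supremum of a symmetric empirical process over the linear class $\{w\mapsto\bar\ell(\langle w,x\rangle): \|w\|\le R,\ \|x\|\le 1\}$. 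Bounding its expectation by the \emph{transductive} Rademacher complexity of this class, and invoking $L$-Lipschitzness of $\bar\ell$ together with the range bound $\sup_{t,w}|\ell_t(w)|\le R$ for the finite-population concentration constants, produces a per-step deviation of the order $(12+\sqrt2\,L)R/\sqrt{m-t+1}$ together with a companion $\sqrt{t-1}/m$ contribution coming from $\widehat L_{\mathrm{used}}$.

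Finally I would sum these per-step bounds and divide by $T$. The $\sqrt{t-1}/m$ pieces sum to at most $O(R/\sqrt m)$ trivially since $t-1\le m$. The $(m-t+1)^{-1/2}$ pieces look like $\Theta(R)$ for $t$ near $m$, so this is where care is needed: a short case split on $T$ handles it --- for $T\le m/2$ one uses $\frac1T\sum_{t\le T}(m-t+1)^{-1/2}\le(m-T+1)^{-1/2}=O(m^{-1/2})$, and for $T>m/2$ one uses $\frac1T\sum_{t\le m}(m-t+1)^{-1/2}\le 2\sqrt m/T = O(m^{-1/2})$ --- so the averaged generalization term is $O\big((12+\sqrt2\,L)R/\sqrt m\big)$ uniformly over all $T\le m$. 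Adding the three contributions gives the stated inequality. I expect this last step --- controlling the generalization term under sampling \emph{without} replacement, uniformly over all $T$ up to one full pass, rather than via a textbook i.i.d.\ uniform-convergence argument --- to be the main obstacle, and it is precisely what makes Shamir's analysis necessary here.
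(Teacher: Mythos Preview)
The paper does not prove this theorem at all: it is quoted verbatim as ``Corollary~1 of Shamir~\cite{Shamir16}'' and then used as a black box to obtain Lemma~\ref{lemma:convex:nonprivate-optimization-error} and Theorem~\ref{lemma:convex:private-optimization-error}. So there is no ``paper's own proof'' to compare your proposal against.

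That said, your sketch is a faithful outline of Shamir's argument --- the online-to-batch decomposition into regret plus a martingale-zero term plus a generalization term, followed by a transductive Rademacher bound on the latter exploiting the cancellation of the fixed $r(w)$ part --- and you correctly identify the delicate point (controlling $\tfrac1T\sum_t (m-t+1)^{-1/2}$ uniformly in $T\le m$). For the purposes of the present paper, however, none of this is needed: you may simply cite the result and move on, exactly as the authors do.
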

Together with Theorem~\ref{thm:Zinkevich-thm1}, we thus have the following lemma,
\begin{lemma}
  \label{lemma:convex:nonprivate-optimization-error}
  Consider the same setting as in Theorem~\ref{thm:Shamir-cor1},
  and $1$-pass PSGD optimization defined according to rule (\ref{rule:projected-sgd}).
  Suppose further that we have constant learning rate $\eta = \frac{R}{L\sqrt{m}}$.
  Finally, let $\bar{w}_m$ be the model averaging $\frac{1}{m}\sum_{t=1}^Tw_t$. Then,
  \begin{align*}
    \Exp[L_S(\bar{w}_T) - L_S^*] \le \frac{(L + 2(12+\sqrt{L}))R}{\sqrt{m}}.
  \end{align*}
\end{lemma}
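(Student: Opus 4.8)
The plan is to combine three ingredients that are all already in hand: Jensen's inequality for the convex empirical risk, Shamir's one-pass convergence bound (Theorem~\ref{thm:Shamir-cor1}), and Zinkevich's regret bound (Theorem~\ref{thm:Zinkevich-thm1}), where the constant step size $\eta = R/(L\sqrt{m})$ is chosen precisely to balance the two terms of the latter.

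First, since we take exactly one pass we have $T = m$, and because each $\ell_i$ (hence $L_S$) is convex, Jensen's inequality gives
\[
  L_S(\bar w_T) = L_S\!\left(\frac{1}{T}\sum_{t=1}^T w_t\right) \le \frac{1}{T}\sum_{t=1}^T L_S(w_t).
\]
Taking expectations, it therefore suffices to bound $\Exp\!\left[\frac{1}{T}\sum_{t=1}^T L_S(w_t) - L_S^*\right]$.

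Next, I would check that the hypotheses of Theorem~\ref{thm:Shamir-cor1} hold in our setting: $T \le m$ (one pass); each iterate $w_t$ stays in $\calW$ because we run the projected update rule~(\ref{rule:projected-sgd}); and the remaining requirements --- namely $\sup_{t, w \in \calW}|\ell_t(w)| \le R$, $\|w\| \le R$, $\|x_t\| \le 1$, and the form $\ell_t(w) = \bar{\ell}(\langle w, x_t\rangle) + r(w)$ --- are exactly the standing assumptions of the lemma. Invoking the theorem gives
\[
  \Exp\!\left[\frac{1}{T}\sum_{t=1}^T L_S(w_t) - L_S^*\right] \le \frac{R(T)}{T} + \frac{2(12 + \sqrt{2}\,L)R}{\sqrt{m}}.
\]

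Finally, I would substitute Zinkevich's regret bound $R(T) \le \frac{R^2}{2\eta} + \frac{L^2 T \eta}{2}$ with $T = m$ and $\eta = R/(L\sqrt{m})$; a one-line computation gives $R(m)/m \le \frac{R^2}{2\eta m} + \frac{L^2\eta}{2} = \frac{LR}{2\sqrt{m}} + \frac{LR}{2\sqrt{m}} = \frac{LR}{\sqrt{m}}$, and indeed this $\eta$ is exactly the minimizer of $\frac{R^2}{2\eta m} + \frac{L^2\eta}{2}$ over $\eta > 0$, which is how it is chosen. Combining with the previous display yields
\[
  \Exp[L_S(\bar w_T) - L_S^*] \le \frac{LR}{\sqrt{m}} + \frac{2(12 + \sqrt{2}\,L)R}{\sqrt{m}} = \frac{(L + 2(12 + \sqrt{2}\,L))R}{\sqrt{m}},
\]
matching the claim (up to the cosmetic discrepancy between $\sqrt{2}\,L$ and the $\sqrt{L}$ appearing in the displayed constant). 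The main obstacle here is essentially bookkeeping rather than depth: the only point needing care is confirming that the one-pass hypotheses of Theorem~\ref{thm:Shamir-cor1} genuinely apply to the actual trajectory --- in particular that projection onto $\calC$ keeps every iterate feasible, so that both the regret bound and the uniform bound on $|\ell_t|$ are valid --- after which the constants fall out immediately.
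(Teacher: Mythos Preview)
Your proposal is correct and is exactly the approach the paper intends: the lemma is stated immediately after Theorem~\ref{thm:Shamir-cor1} with the phrase ``Together with Theorem~\ref{thm:Zinkevich-thm1}, we thus have the following lemma,'' so the paper's own proof is precisely the combination of Jensen, Shamir's bound, and Zinkevich's regret with the balancing choice $\eta = R/(L\sqrt{m})$ that you carry out. Your observation about the $\sqrt{2}\,L$ versus $\sqrt{L}$ discrepancy is also on point; it is a typographical artifact carried over from Theorem~\ref{thm:Shamir-cor1}.
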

Now we can bound the excess empirical risk as follows,
\begin{theorem}[\small Convex and Constant Step Size]
  \label{lemma:convex:private-optimization-error}
  Consider the same setting as in
  Lemma~\ref{lemma:convex:nonprivate-optimization-error}
  where the step size is constant $\eta = \frac{R}{L\sqrt{m}}$.
  Let $\tilde{w} = \bar{w}_T + \kappa$ be the result of Algorithm~\ref{alg:p-c-psgd}.
  Then
  \begin{align*}
    \Exp[L_S(\tilde{w}) - L_S^*] \le
    \frac{(L+(2(12+\sqrt{L}))R}{\sqrt{m}} + \frac{2dLR}{\varepsilon\sqrt{m}}.
  \end{align*}
\end{theorem}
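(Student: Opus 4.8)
The plan is to decompose the excess empirical risk $L_S(\tilde{w}) - L_S^*$ into two additive pieces: the optimization error of the non-private averaged iterate, $L_S(\bar{w}_T) - L_S^*$, and the additional error incurred by adding the privacy noise $\kappa$, namely $L_S(\tilde{w}) - L_S(\bar{w}_T)$. First I would write
\begin{align*}
  L_S(\tilde{w}) - L_S^* = \bigl(L_S(\bar{w}_T) - L_S^*\bigr) + \bigl(L_S(\tilde{w}) - L_S(\bar{w}_T)\bigr),
\end{align*}
and take expectations. The first term is controlled directly by Lemma~\ref{lemma:convex:nonprivate-optimization-error}, which under the stated constant step size $\eta = R/(L\sqrt{m})$ gives $\Exp[L_S(\bar{w}_T) - L_S^*] \le (L + 2(12+\sqrt{L}))R/\sqrt{m}$.

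Next I would bound the noise term. By Lemma~\ref{lemma:error-due-to-privacy} (risk due to privacy), since the loss is $L$-Lipschitz and $\bar w_T$ is the non-private output while $\tilde w = \bar w_T + \kappa$, we have $L_S(\tilde{w}) - L_S(\bar{w}_T) \le L\|\kappa\|$ (after noting Lipschitzness is symmetric, so the bound in that lemma applies with the roles as needed). It then remains to bound $\Exp[\|\kappa\|]$, or more conveniently a high-probability bound on $\|\kappa\|$. Here I would invoke Theorem~\ref{cor:norm-noise-vector}: with the $L_2$-sensitivity $\Delta_2 = 2kL\eta$ from Algorithm~\ref{alg:p-c-psgd}, and taking $k=1$ pass so $\Delta_2 = 2L\eta = 2L \cdot R/(L\sqrt{m}) = 2R/\sqrt{m}$, the noise satisfies (with high probability, and one can absorb the $\ln(d/\gamma)$ factor or pass to expectation) $\|\kappa\| \le d\,\Delta_2/\varepsilon = 2dR/(\varepsilon\sqrt{m})$ up to the logarithmic factor. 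Multiplying by $L$ yields the second term $2dLR/(\varepsilon\sqrt{m})$. Adding the two contributions gives the claimed bound.

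The main obstacle, and the place I would be most careful, is reconciling the exact form of the constants: Theorem~\ref{cor:norm-noise-vector} carries a $\ln(d/\gamma)$ factor, whereas the statement we are proving has a clean $2dLR/(\varepsilon\sqrt{m})$ with no logarithm. So the step that needs attention is either (a) replacing the high-probability bound with an expectation bound on $\|\kappa\|$ — since $\|\kappa\| \sim \Gamma(d, \Delta_2/\varepsilon)$ has mean exactly $d\Delta_2/\varepsilon$, this removes the logarithm cleanly and matches the statement — or (b) interpreting the inequality as holding in expectation and using $\Exp[\|\kappa\|] = d\Delta_2/\varepsilon$. I would go with route (a): use the fact stated right after Theorem~\ref{thm:laplace-dp} that $\|\kappa\|$ is distributed as $\Gamma(d, \Delta_2(f)/\varepsilon)$, whose expectation is $d\Delta_2/\varepsilon$, so $\Exp[L\|\kappa\|] = dL\Delta_2/\varepsilon = 2dLR/(\varepsilon\sqrt{m})$. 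The remaining routine checks are that $k=1$ is the relevant pass count (consistent with the "1-pass PSGD" hypothesis inherited from Lemma~\ref{lemma:convex:nonprivate-optimization-error}), that $\eta = R/(L\sqrt{m}) \le 2/\beta$ so Algorithm~\ref{alg:p-c-psgd}'s precondition holds (true for $m$ large enough, or absorbed into assumptions), and that linearity of expectation lets us add the two bounds without any independence concerns, since each is a deterministic upper bound on an expectation. Putting these together gives the stated inequality.
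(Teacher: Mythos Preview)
Your proposal is correct and follows essentially the same approach as the paper: decompose into the non-private optimization error (handled by Lemma~\ref{lemma:convex:nonprivate-optimization-error}) plus the privacy error (handled by Lemma~\ref{lemma:error-due-to-privacy}), then use $\Delta_2 = 2L\eta = 2R/\sqrt{m}$ for $k=1$ together with the Gamma-distribution mean $\Exp[\|\kappa\|] = d\Delta_2/\varepsilon$ to get the $2dLR/(\varepsilon\sqrt{m})$ term. Your choice of route (a)---using the exact Gamma expectation rather than the high-probability bound---is exactly what the paper does, and your observation about the symmetry of the Lipschitz bound in Lemma~\ref{lemma:error-due-to-privacy} is a detail the paper leaves implicit.
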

Note that the term $\frac{2dLR}{\varepsilon\sqrt{m}}$ corresponds to
the expectation of $L\|\kappa\|$.

\noindent\textbf{\em Strongly Convex Optimization}.
If $\ell(\cdot, z)$ is $\gamma$-strongly convex, we instead use the following theorem,
\begin{theorem}[Theorem 3 of Shamir~\cite{Shamir16}]
  \label{thm:Shamir-cor3}
  Suppose $\cal W$ has diameter $R$, and $L_S(\cdot)$ is $\gamma$-strongly convex
  on $\cal W$. Assume that each loss function $\ell_t$ takes the for
  $\bar{\ell}(\langle w_t, x_t \rangle) + r(w)$
  where $\| x_i \| \le 1$, $r(\cdot)$ is possibly some regularization term,
  and each $\bar{\ell}(\cdot, x_t)$ is $L$-Lipschitz and $\beta$-smooth.
  Furthermore, suppose $\sup_{w \in {\cal W}}\|\ell'_t(w)\| \le G$.
  Then for any $1 < T \le m$, if we run SGD for $T$ iterations
  with step size $\eta_t = 1/\gamma t$,
  we have
  \begin{align*}
    \Exp\left[ \frac{1}{T}\sum_{t=1}^TL_S(w_t) - L_S(w^*)\right]
    \le c \cdot \frac{((L + \beta R)^2 + G^2)\log T}{\gamma T},
  \end{align*}
  where $c$ is some universal positive constant.
\end{theorem}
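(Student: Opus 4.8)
\noindent\textbf{Proof idea (a plan, following the structure of~\cite{Shamir16}).}
This is Theorem~3 of Shamir~\cite{Shamir16}, so the plan is to sketch the argument behind it rather than re-derive it from scratch; the backbone is an \emph{online-to-batch} conversion tailored to sampling \emph{without} replacement. Write $w^* = \argmin_{w \in \calW} L_S(w)$. Since iterate $w_t$ is a projected gradient step on the single sampled loss $\ell_t = \ell(\cdot;(x_{i_t},y_{i_t}))$, I would first use the identity
\begin{align*}
  \sum_{t=1}^T \big(L_S(w_t) - L_S(w^*)\big)
  ={}& \sum_{t=1}^T \big(\ell_t(w_t) - \ell_t(w^*)\big)
  + \sum_{t=1}^T \big(L_S(w_t) - \ell_t(w_t)\big) \\
  & {}- \sum_{t=1}^T \big(L_S(w^*) - \ell_t(w^*)\big).
\end{align*}
Taking expectation over the random permutation, the last sum vanishes, since $w^*$ is data-independent and each $i_t$ is marginally uniform on $[m]$. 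It therefore remains to (i) bound the \emph{online regret} $\sum_t(\ell_t(w_t)-\ell_t(w^*))$ and (ii) bound the \emph{sampling-bias} term $\Exp\big[\sum_t(L_S(w_t)-\ell_t(w_t))\big]$; dividing through by $T$ and invoking convexity of $L_S$ (Jensen) then transfers the bound to the averaged iterate.

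For (i), I would invoke the standard $O((G^2/\gamma)\log T)$ regret bound for strongly convex online gradient descent with step size $\eta_t = 1/(\gamma t)$ (the strongly convex analogue of the bound in Theorem~\ref{thm:Zinkevich-thm1}), using the hypothesis $\sup_{w\in\calW}\|\ell_t'(w)\|\le G$. This is a deterministic, worst-case bound on the realized regret, hence holds in expectation; after dividing by $T$ it contributes the $O\!\big(G^2\log T/(\gamma T)\big)$ piece of the claimed bound.

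The real work is (ii), and this is precisely where without-replacement sampling departs from the textbook i.i.d.\ analysis: with replacement, $i_t$ would be independent of $w_t = w_t(i_1,\dots,i_{t-1})$, so $\Exp[\ell_t(w_t)\mid i_1,\dots,i_{t-1}] = L_S(w_t)$ exactly; without replacement it is instead a ``leave-the-past-out'' average over the $m-t+1$ unseen points. The plan is to control this gap via \emph{algorithmic stability}: swapping one index among $i_1,\dots,i_{t-1}$ perturbs $w_t$ by only $O(L/(\gamma t))$, because each strongly convex gradient step is $(1-\eta_s\gamma)$-expansive (Lemma~\ref{lemma:expansiveness-strongly-convex-simplification}) and $\eta_s L$-bounded (Lemma~\ref{lemma:boundedness}), so the Growth Recursion (Lemma~\ref{lemma:growth-recursion}) damps the influence of old samples (projection only helps). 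Feeding this sensitivity estimate into an exchangeability argument — e.g.\ coupling the true draw $i_t$ against an independent fresh draw and charging the stability cost, or running a Doob-martingale / Azuma bound over the revealed indices — should yield $\Exp\big[\sum_t(L_S(w_t)-\ell_t(w_t))\big] = O\!\big((L+\beta R)^2\log T/\gamma\big)$, supplying the remaining $(L+\beta R)^2$ part after dividing by $T$. I expect this stability-plus-exchangeability step to be the main obstacle: one must track the $t$-dependence carefully to avoid the naive $O(t/(m-t))$ blow-up as $t\to m$, which is exactly why the statement is restricted to $T\le m$ (at most one pass), keeping the unseen fraction bounded away from zero throughout.
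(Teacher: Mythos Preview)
The paper does not prove this theorem: it is quoted verbatim as Theorem~3 of Shamir~\cite{Shamir16} and used as a black box to derive the subsequent excess-risk bound for the private strongly convex algorithm. There is therefore no ``paper's own proof'' to compare your proposal against.

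That said, your plan is a faithful outline of how Shamir actually establishes the result. The decomposition into an online-regret term and a sampling-bias term, the use of the classical $O((G^2/\gamma)\log T)$ strongly convex OGD regret bound for the first piece, and the control of the second piece via a stability/coupling argument exploiting the contractive gradient steps are exactly the ingredients in~\cite{Shamir16}. One nuance worth flagging: Shamir's control of the bias term does not go through the Growth Recursion Lemma~\ref{lemma:growth-recursion} directly (that lemma is from~\cite{HRS15} and is used in the present paper for the privacy sensitivity analysis, not for convergence); rather, Shamir uses a transductive-learning viewpoint together with a concentration argument tailored to the linear-plus-regularizer structure $\bar\ell(\langle w,x_t\rangle)+r(w)$, which is why the hypotheses on $\|x_i\|\le 1$ and the $(L,\beta)$ constants of $\bar\ell$ enter the bound. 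Your stability route would likely work too, but if you carry it out you should expect the $(L+\beta R)^2$ factor to emerge from bounding how much $\ell_t$ can change across iterates rather than from the growth-recursion machinery per se.
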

By the same argument as in the convex case, we have,
\begin{theorem}[\small Strongly Convex and Decreasing Step Size]
  Consider the same setting as in Theorem~\ref{thm:Shamir-cor3}
  where the step size is $\eta_t = \frac{1}{\gamma t}$.
  Consider $1$-pass PSGD. Let $\bar{w}_T$ be the result of model averaging
  and $\tilde{w} = \bar{w}_T + \kappa$ be the result of output perturbation. Then
  $\Exp[L_S(\tilde{w}) - L_S(w^*)]
  \le c \cdot \frac{((L + \beta R)^2 + G^2)\log m}{\gamma m}
  + \frac{2dG^2}{\varepsilon\gamma m}.$
\end{theorem}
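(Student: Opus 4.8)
The plan is to decompose the excess empirical risk of the released model $\tilde w = \bar w_T + \kappa$ into a non-private \emph{optimization error} and a \emph{perturbation error}, bound each in expectation, and add them. Concretely, I would write
\[
  L_S(\tilde w) - L_S(w^*) = \big(L_S(\tilde w) - L_S(\bar w_T)\big) + \big(L_S(\bar w_T) - L_S(w^*)\big)
\]
and take expectations of both sides: the first bracket is the price of adding noise $\kappa$, and the second is the convergence error of one-pass PSGD with model averaging. Essentially all of the needed machinery has already been assembled earlier in the paper, so this is a matter of combining the right ingredients with consistent constants.

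For the perturbation term I would invoke Lemma~\ref{lemma:error-due-to-privacy}, but reading the uniform gradient-norm bound $G = \sup_{w\in\calW}\|\ell_t'(w)\|$ from the hypotheses of Theorem~\ref{thm:Shamir-cor3} in place of the Lipschitz constant: since $L_S$ has gradients bounded by $G$ it is $G$-Lipschitz, so $L_S(\tilde w) - L_S(\bar w_T) \le G\|\kappa\|$. It then remains to bound $\Exp[\|\kappa\|]$. By Lemma~\ref{lemma:strongly-convex:decreasing-step}, the $L_2$-sensitivity of the final iterate under $\eta_t = 1/(\gamma t)$ and one pass is at most $2G/(\gamma m)$ (again with $G$ in place of $L$), and by Lemma~\ref{lemma:model-averaging} the averaging step with weights $\alpha_t = 1/T$ does not increase this, since the $\delta_t$ are non-decreasing and $\sum_t \alpha_t = 1$. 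Hence $\Delta_2 = 2G/(\gamma m)$, and since $\|\kappa\| \sim \Gamma(d, \Delta_2/\varepsilon)$ we get $\Exp[\|\kappa\|] = d\Delta_2/\varepsilon = 2dG/(\varepsilon\gamma m)$, so $\Exp[L_S(\tilde w) - L_S(\bar w_T)] \le 2dG^2/(\varepsilon\gamma m)$, matching the second summand of the claim.

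For the optimization term, convexity of $L_S$ together with Jensen's inequality gives $L_S(\bar w_T) = L_S\big(\frac1T\sum_{t=1}^T w_t\big) \le \frac1T\sum_{t=1}^T L_S(w_t)$, so $\Exp[L_S(\bar w_T) - L_S(w^*)] \le \Exp\big[\frac1T\sum_{t=1}^T L_S(w_t) - L_S(w^*)\big]$. Applying Theorem~\ref{thm:Shamir-cor3} with $T = m$ (legitimate because one pass over the data means exactly $T = m$ iterations, and the remaining hypotheses — $\gamma$-strong convexity of $L_S$ on $\calW$ of diameter $R$, $\|x_i\|\le 1$, $\bar\ell(\cdot,x_t)$ being $L$-Lipschitz and $\beta$-smooth, step size $\eta_t = 1/(\gamma t)$, and gradient bound $G$ — are precisely those of the theorem) bounds the right-hand side by $c\cdot((L+\beta R)^2 + G^2)\log m/(\gamma m)$. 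Summing the two contributions yields the stated inequality.

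The main obstacle is bookkeeping rather than conceptual: one must be consistent about which Lipschitz-type constant enters where. In Shamir's strongly convex regime the quantity that controls both the noise magnitude (through $\Delta_2$) and the risk increase due to perturbation (through Lemma~\ref{lemma:error-due-to-privacy}) is the uniform gradient-norm bound $G$, not the constant $L$ attached to the scalar loss $\bar\ell$; using $G$ in both places is exactly what produces the $G^2$ in $2dG^2/(\varepsilon\gamma m)$. I would also double-check that the decreasing schedule $\eta_t = \min(1/(\gamma t), 1/\beta)$ satisfies the constraint $\eta \le 1/\beta$ required for the $(1-\eta\gamma)$-expansiveness (Lemma~\ref{lemma:expansiveness-strongly-convex-simplification}) underlying Lemma~\ref{lemma:strongly-convex:decreasing-step}, and confirm that the averaging weights used here are those covered by Lemma~\ref{lemma:model-averaging}.
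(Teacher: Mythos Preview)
Your proposal is correct and follows exactly the approach the paper intends: the paper's entire proof is the sentence ``By the same argument as in the convex case, we have,'' and you have faithfully unpacked that argument --- decompose into optimization error plus perturbation error, bound the former by Jensen plus Theorem~\ref{thm:Shamir-cor3}, and bound the latter by combining Lemma~\ref{lemma:error-due-to-privacy} with the sensitivity bound of Lemma~\ref{lemma:strongly-convex:decreasing-step} and the Gamma expectation $d\Delta_2/\varepsilon$. Your care in replacing $L$ by the uniform gradient bound $G$ (so that both the Lipschitz constant of $L_S$ and the sensitivity $\Delta_2 = 2G/(\gamma m)$ are expressed in terms of $G$, yielding the $G^2$ in the noise term) is exactly the bookkeeping the paper leaves implicit, and your observation about the step-size schedule $\min(1/\beta, 1/(\gamma t))$ versus $1/(\gamma t)$ correctly flags a minor inconsistency between the theorem statement and Lemma~\ref{lemma:strongly-convex:decreasing-step}.
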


\red{
\noindent\textbf{\em Remark}.
Our convergence results for $\varepsilon$-differential privacy is different from
previous work, such as BST14, which only give convergence for
$(\varepsilon, \delta)$-differential privacy for $\delta > 0$.
In fact, BST14 relies in an essential way on the advanced composition of
$(\varepsilon,\delta)$-differential privacy~\cite{DR14} and we are not aware
its convergence for $\varepsilon$-differential privacy.
Note that $\varepsilon$-differential privacy is
{\em qualitatively different} from $(\varepsilon, \delta)$-differential privacy
(see, for example, paragraph 3, pp. 18 in Dwork and Roth~\cite{DR14},
as well as a recent article by McSherry~\cite{eps-dp-vs-eps-delta-dp}).
We believe that our convergence results for $\varepsilon$-differential privacy
is important in its own right.

\noindent\textbf{$(\varepsilon, \delta)$-Differential Privacy}.
By replacing Laplace noise with Gaussian noise, we can derive similar convergence results of our algorithms
for $(\varepsilon, \delta)$-differential privacy for $1$-pass SGD.

It is now instructive to compare our convergence results with BST14 for constant number of passes.
In particular, by plugging in different parameters into the analysis of BST14
(in particular, Lemma 2.5 and Lemma 2.6 in BST14) one can derive variants of their results
for constant number of passes. The following table compares the convergence
in terms of the dependencies on the number of training points $m$, and the number of dimensions $d$.

\begin{table}[!htp]
  \centering
  \begin{tabular}{l||l|l}
    \hline
           & Ours & BST14  \\
    \hline
    Convex & $O\big(\frac{\sqrt{d}}{\sqrt{m}}\big)$ & $O\big(\frac{\sqrt{d}(\log^{3/2}m)}{\sqrt{m}}\big)$ \\
    \hline
    Strongly Convex & $O\big(\frac{\sqrt{d}\log m}{m}\big)$ & $O\big(\frac{d\log^2m}{m}\big)$\\
    \hline
  \end{tabular}
  \caption{
    Convergence for $(\varepsilon, \delta)$-DP and {\em constant number of passes}.
  }
  \label{table:step-size}
\end{table}
In particular, in the convex case our convergence is better with a $\log^{3/2}m$ factor,
and in the strongly convex case ours is better with a $\sqrt{d}\log m$ factor.
{\em These logarithmic factors are inherent in BST14 due to its dependence
  on some optimization results (Lemma 2.5, 2.6 in their paper), which we do not rely on}.
Therefore, this comparison gives
{\em theoretical evidence that our algorithms converge better for constant number passes}.
On the other hand, these logarithmic factors become irrelevant for BST14 with $m$ passes,
as the denominator becomes $m$ in the convex case, and becomes $m^2$ in the strongly case,
giving better dependence on $m$ there.
}




\section{Implementation and Evaluation}
\label{sec:experiments}
In this section, we present a comprehensive empirical study
comparing three alternatives for private SGD: two previously 
proposed state-of-the-art private SGD algorithms, 
SCS13 \cite{SCS13} and BST14 \cite{BST14}, and our algorithms 
which are instantiations of the output perturbation method
with our new analysis.

Our goal is to answer four main questions associated with the key 
desiderata of in-RDBMS implementations of private SGD, viz., ease of 
integration, runtime overhead, scalability, and accuracy:
\begin{enumerate}
\item {\em What is the effort to integrate each algorithm into an in-RDBMS analytics system?}
\item {\em What is the runtime overhead and scalability of the private SGD implementations?}
\item {\em How does the test accuracy of our algorithms compare to SCS13 and BST14?}
\item {\em How do various parameters affect the test accuracy?}
\end{enumerate}
As a summary, our main findings are the following:
{\bf (i)} Our SGD algorithms require almost no changes to \Bismarck,
while both SCS13 and BST14 require deeper code changes.
{\bf (ii)} Our algorithms incur virtually no runtime overhead,
while SCS13 and BST14 run much slower. Our algorithms scale linearly with 
the dataset size. While SCS13 and BST14 also enjoy linear scalability,
the runtime overhead they incur also increases linearly.
{\bf (iii)} Under the same differential privacy guarantees,
our private SGD algorithms yield substantially better accuracy
than SCS13 and BST14, for all datasets and settings of parameters we test.
{\bf (iv)} As for the effects of parameters, our empirical results align well
with the theory. For example, as one might expect, mini-batch sizes are important for
reducing privacy noise. The number of passes is more subtle. For our algorithm,
if the learning task is only convex, more passes result in larger noise
(e.g., see Lemma~\ref{lemma:convex:constant-step}),
and so give rise to potentially worse test accuracy. On the other hand,
if the learning task is strongly convex, the number of passes will not affect
the noise magnitude (e.g., see Lemma~\ref{lemma:strongly-convex:decreasing-step}).
As a result, doing more passes may lead to better convergence
and thus potentially better test accuracy.
Interestingly, we note that slightly enlarging mini-batch size can reduce noise
very effectively so it is affordable to run our private algorithms for more passes
to get better convergence in the convex case.
This corroborates the results of~\cite{SCS13} that mini-batches are helpful
in private SGD settings.

In the rest of this section we give more details of our evaluation.
Our discussion is structured as follows:
In Section~\ref{sec:experiments:previous-approaches}
we first discuss the implemented algorithms.
In particular, we discuss how we modify SCS13 and BST14 to make them better
fit into our experiments. We also give some remarks on other relevant
previous algorithms, and on parameter tuning.
Then in Section~\ref{sec:experiments:systems} we discuss the effort of integrating
different algorithms into \Bismarck.
Next Section~\ref{sec:experiments:method-datasets} discusses
the experimental design and datasets for runtime overhead, scalability and test accuracy.
Then in Section~\ref{sec:experiments:bismarck}, we report runtime overhead and scalability results.
We report test accuracy results for various datasets and parameter settings,
and discuss the effects of parameters in Section~\ref{sec:experiments:accuracy-and-parameter-effect}.
Finally, we discuss the lessons we learned from our experiments~\ref{sec:lessons}.

\subsection{Implemented Algorithms}
\label{sec:experiments:previous-approaches}
We first discuss implementations of our algorithms, SCS13 and BST14.
Importantly, we extend both SCS13 and BST14 to make them better fit into
our experiments. Among these extensions, probably most importantly,
we extend BST14 to support a smaller number of iterations
through the data and {\em reduce the amount of noise} needed for each iteration.
Our extension makes BST14 more competitive in our experiments.

\noindent\textbf{Our Algorithms}.
We implement Algorithms~\ref{alg:p-c-psgd} and~\ref{alg:p-sc-psgd} with the extensions
of mini-batching and constrained optimization (see Section~\ref{sec:extensions}).
Note that \Bismarck{} already supports standard PSGD algorithm with mini-batching and constrained optimization.
Therefore the only change we need to make for Algorithms~\ref{alg:p-c-psgd}
and~\ref{alg:p-sc-psgd} (note that the total number of updates is $T=km$)
is the setting of $L_2$-sensitivity parameter $\Delta_2$ at line~\ref{p-c-psgd:op:noise} of respective algorithms,
which we divide by $b$ if the mini-batch size is $b$.

\noindent\textbf{SCS13~\cite{SCS13}}. We modify~\cite{SCS13},
which originally only supports one pass through the data,
to support multi-passes over the data.

\noindent\textbf{BST14~\cite{BST14}}.
BST14 provides a second solution for private SGD following the same paradigm as SCS13,
but with less noise per iteration. This is achieved by first,
using a novel subsampling technique and second,
relaxing the privacy guarantee to $(\varepsilon, \delta)$-differential privacy
for $\delta > 0$. This relaxation is necessary as they need to use
advanced composition results for $(\varepsilon,\delta)$-differential privacy.

However, the original BST14 algorithm needs $O(m^2)$ iterations to finish,
which is prohibitive for even moderate sized datasets.
We extend it to support $cm$ iterations for some constant $c$.
Reducing the number of iterations means that potentially
we can {\em reduce the amount of noise for privacy} because data is ``less examined.''
This is indeed the case: One can go through the same proof in~\cite{BST14}
with a smaller number of iterations, and show that each iteration only needs
a smaller amount of noise than before (unfortunately this does not give
convergence results). Our extension makes BST14 {\em more competitive}.
In fact it yields significantly better test accuracy compared to the case
where one na\"{i}vely stops BST14 after $c$ passes,
but the noise magnitude in each iteration is the same as in the original paper~\cite{BST14}
(which is for $m$ passes).
The extended BST14 algorithms are given in Algorithm~\ref{alg:convex-bst14-plus}
and~\ref{alg:strongly-convex-bst14-plus}.
Finally, we also make straightforward extensions
so that BST14 supports mini-batching.

\noindent\textbf{Other Related Work.}
We also note the work of Jain, Kothari and Thakurta~\cite{JKT12}
which is related to our setting. In particular their Algorithm 6 is
similar to our private SGD algorithm in the setting of strong convexity and
$(\varepsilon, \delta)$-differential privacy.
However, we note that their algorithm uses
Implicit Gradient Descent (IGD), which belongs to {\em proximal algorithms}
(see for example Parikh and Boyd~\cite{PB14}) and is known to be more difficult to
implement than stochastic gradient methods. Due to this consideration, in this study
we will not compare empirically with this algorithm.
Finally, we also note that~\cite{JKT12} also has an SGD-style algorithm (Algorithm 3)
for strongly convex optimization and $(\varepsilon, \delta)$-differential privacy.
This algorithm adds noise comparable to our algorithm {\em{at each step}}
of the optimization, and thus we do not compare with it either. 

\noindent\textbf{Private Parameter Tuning}.
We observe that for all SGD algorithms considered,
it may be desirable to fine tune some parameters to achieve the best performance. 
For example, if one chooses to do $L_2$-regularization,
then it is customary to tune the parameter $\lambda$.
We note that under the theme of differential privacy, such parameter tunings
must also be done {\em privately}. To the best of our knowledge however,
no previous work have evaluated the effect of private parameter tuning for SGD.
Therefore we take the natural step to fill in this gap.
We note that there are two possible ways to do this.

\noindent{\bf\em Tuning using Public Data}.
Suppose that one has access to a public data set, which is assumed to be drawn
from the {\em same distribution} as the private data set.
In this case, one can use standard methods to tune SGD parameters,
and apply the parameters to the private data.

\noindent{\bf\em Tuning using a Private Tuning Algorithm}.
When only private data is available, we use a {\em private} tuning algorithm
for private parameter tuning. Following the principle on free parameters~\cite{HMMCZ15}
in experimenting with differential privacy, we note free parameters
$\lambda, \varepsilon, \delta, R, k, b$. For these parameters, $\varepsilon,\delta$
are specified as privacy guarantees. Following common practice
for constrained optimization (e.g.~\cite{SCS13})
we set $R = \frac{1}{\lambda}$ for numeric stability.
Thus the parameters we need to tune are $k, b, \lambda$.
We call $k, b, \lambda$ the {\em tuning parameters}.
We use a standard grid search~\cite{gridsearch}
with commonly used values to define the space of parameter values,
from which the tuning algorithm picks values for the parameters to tune.

We use the tuning algorithm described in the original paper of
Chaudhuri, Monteleoni and Sarwate~\cite{CMS11},
though the methodology and experiments in the following are readily extended to
other private tuning algorithms~\cite{CV13}.
Specifically, let $\theta = (k, b, \lambda)$ denote a tuple of the tuning parameters.
Given a space $\Theta = \{ \theta_1, \dots, \theta_l \}$,
Algorithm~\ref{alg:private-tuning} gives the details of the tuning algorithm.

\begin{algorithm}[ht]
  \caption{Private Tuning Algorithm for SGD}
  \label{alg:private-tuning}
  \begin{algorithmic}[1]
    \Input{ Data $S$,
      space of tuning parameters $\Theta=\{ \theta_1, \dots, \theta_l \}$,
      privacy parameters $\varepsilon,\delta$.
    }
    \Function{\sf\small PrivatelyTunedSGD}{$S, \Theta, \varepsilon, \delta$}
      \State Divide $S$ into $l+1$ equal portions $S_1, \dots, S_{l+1}$.
      \State For each $i \in [l]$, train a hypothesis $w_i$
      using any algorithm \ref{alg:p-c-psgd} -- \ref{alg:strongly-convex-bst14-plus}
      with training set $S_i$ and parameters $\theta_i, \varepsilon, \delta$
      and $R = 1/\lambda$ (if needed).
      \State Compute the number of classification errors $\chi_i$ made by $w_i$
      on $S_{l+1}$.
      \State Pick output hypothesis $w = w_i$ with probability
      \begin{align*}
        p_i =
        \frac{e^{-\varepsilon\chi_i/2}}{\sum_{j=1}^l e^{-\varepsilon\chi_j/2}}.
      \end{align*}
    \EndFunction
  \end{algorithmic}
\end{algorithm}


\subsection{Integration with Bismarck}
\label{sec:experiments:systems}
We now explain how we integrate private SGD algorithms in RDBMS.
To begin with, we note that the state-of-the-art way to do in-RDBMS data analysis
is via the User Defined Aggregates (UDA) offered by almost all RDBMSes~\cite{gray}.
Using UDAs enables scaling to larger-than-memory datasets seamlessly
while still being fast.\footnote{\scriptsize The MapReduce abstraction
  is similar to an RDBMS UDA~\cite{mahout}.
  Thus our implementation ideas apply to MapReduce-based systems as well.}
A well-known open source implementation of the UDAs required is
\Bismarck{} \cite{FKRR12}.
\Bismarck{} achieves high performance and scalability through
a unified architecture of in-RDBMS data analytics systems
using the permutation-based SGD.

Therefore, we use \Bismarck{} to experiment with private SGD inside RDBMS.
Specifically, we use \Bismarck{} on top of PostgreSQL,
which implements the UDA for SGD in C to provide high runtime efficiency.
Our results carry over naturally to any other UDA-based implementation of analytics
in an RDBMS. The rest of this section is organized as follows.
We first describe \Bismarck's system architecture.
We then compare the system extensions and the implementation effort
needed for integrating our private PSGD algorithm as well as SCS13 and BST14.
\begin{figure}[!htp]
  \centering
  \includegraphics[width=0.5\columnwidth]{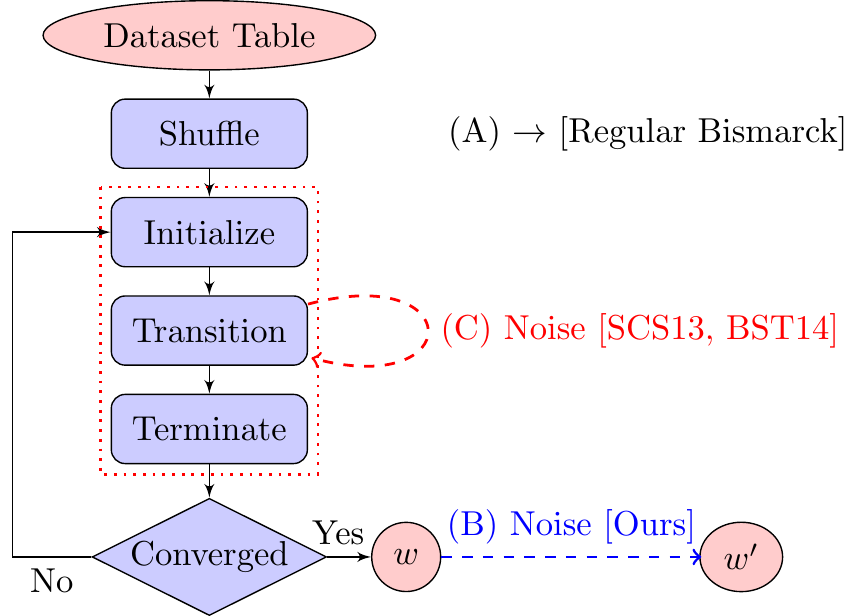}
  \caption{(A) System architecture of regular \Bismarck.
    (B) Extension to implement our algorithms.
    (C) Extension to implement any of SCS13 and BST14.}
  \label{fig:bismarck}
  \centering
\end{figure}

Figure~\ref{fig:bismarck} (A) gives an overview of \Bismarck's architecture.
The dataset is stored as a table in PostgreSQL.
\Bismarck{} permutes the table using an SQL query with a shuffling clause, viz.,
{\tt ORDER BY RANDOM()}. A pass (or epoch, which is used more often in practice)
of SGD is implemented as a C UDA and this UDA is invoked with an SQL query
for each epoch. A front-end controller in Python issues the SQL queries
and also applies the convergence test for SGD after each epoch.
The developer has to provide implementations of three functions in the UDA's C API:
$initialize$, $transition$, and $terminate$,
all of which operate on the $aggregation~state$,
which is the quantity being computed.

To explain how this works, we compare SGD with a standard SQL aggregate:
{\small\tt AVG}. The state for {\small\tt AVG} is the 2-tuple $(sum, count)$,
while that for SGD is the model vector $w$.
The function $initialize$ sets $(sum,count)=(0,0)$
for {\small\tt AVG}, while for SGD,
it sets $w$ to the value given by the Python controller
(the previous epoch's output model).
The function $transition$ updates the state based on a single tuple (one
example). For example, given a tuple with value $x$,
the state update for {\small\tt AVG} is as follows:
$(sum,count)\ \text{+=}\ (x,1)$.
For SGD, $x$ is the feature vector and the update is the update rule
for SGD with the gradient on $x$.
If mini-batch SGD is used, the updates are made to
a temporary accumulated gradient that is part of the aggregation state
along with counters to track the number of examples and mini-batches seen so far.
When a mini-batch is over, the $transition$ function
updates $w$ using the accumulated  gradient for that mini-batch
using an appropriate step size.
The function $terminate$ computes $sum/count$
and outputs it for {\small\tt AVG}, while for SGD,
it simply returns $w$ at the end of that epoch.

It is easy to see that our private SGD algorithm requires
almost no change to \Bismarck{} -- simply add noise to the final $w$ output
after all epochs, as illustrated in Figure~\ref{fig:bismarck} (B).
Thus, our algorithm does not modify any of the RDBMS-related C UDA code.
In fact, we were able to implement our algorithm in about 10 lines of code (LOC)
in Python within  the front-end Python controller.
In contrast, both SCS13 and BST14 require deeper changes to the UDA's
$transition$ function because they need to add noise
at the end of each mini-batch update.
Thus, implementing them required adding dozens of LOC in C
to implement their noise addition procedure
within the $transition$ function,
as illustrated in Figure~\ref{fig:bismarck}  (C).
Furthermore, Python's scipy library already provides the sophisticated distributions
needed for sampling the noise (gamma and multivariate normal), which our algorithm's
implementation exploits. But for both SCS13 and BST14, we need to implement
some of these distributions in C so that it can be used in the UDA.\footnote{\scriptsize
One could use the Python-based UDAs in PostgreSQL but that incurs a significant
runtime performance penalty compared to C UDAs.}


\subsection{Experimental Method and Datasets}
\label{sec:experiments:method-datasets}
\noindent We now describe our experimental method and datasets.

\noindent\textbf{Test Scenarios}.
We consider four main scenarios to evaluate the algorithms:
(1) Convex, $\varepsilon$-differential privacy,
(2) Convex, $(\varepsilon,\delta)$-differential privacy,
(3) Strongly Convex, $\varepsilon$-differential privacy, and finally
(4) Strongly Convex, $(\varepsilon,\delta)$-differential privacy.
Note that BST14 only supports $(\varepsilon,\delta)$-differential privacy.
Thus for tests (1) and (3) we compare non-private algorithm,
our algorithms, and SCS13.
For tests (2) and (4), we compare non-private algorithm, our algorithms,
SCS13 and BST14.
For each scenario, we train models on test datasets
and measure the test accuracy of the resulting models.
We evaluate both {\em logistic regression} and {\em Huber support vector machine}
(Huber SVM) (due to lack of space, the results on Huber SVM
are put to Section~\ref{sec:results-huber-svm}).
We use the standard logistic regression for the convex case (Tests (1) and (2)),
and $L_2$-regularized logistic regression for the strongly convex case (Tests (3) and (4)).
We now give more details.
\begin{table}[!htp]
  \centering
  \begin{tabular}{l||l|l|l|l}
    \hline
    Dataset & Task & Train Size & Test Size & \#Dimensions \\
    \hline
    MNIST
    & 10 classes & 60000 & 10000 & 784 (50) $\textcolor{red}{[\ast]}$\\
    \hline
    Protein
    & Binary & 72876 & 72875 & 74 \\
    \hline
    Forest
    & Binary & 498010 & 83002 & 54 \\
    \hline
  \end{tabular}
  \caption{Datasets. Each row gives the name of the dataset,
    number of classes in the classification task, sizes of training and test sets,
    and finally the number of dimensions.
    $\textcolor{red}{[\ast]}$: For MNIST, it originally has 784 dimensions,
    which is difficult for $\varepsilon$-differential privacy as sampling
    from (\ref{output-perturbation:noise-distribution}) makes the magnitude of noise
    depends linearly on the number of dimensions $d$.
    Therefore we randomly project it to $50$ dimensions.
    All data points are normalized to the unit sphere.
  }
  \label{table:datasets}
\end{table}

\noindent\textbf{Datasets}. We consider three standard benchmark datasets:
MNIST\footnote{\scriptsize\url{http://yann.lecun.com/exdb/mnist/}.},
Protein\footnote{{\scriptsize\url{http://osmot.cs.cornell.edu/kddcup/datasets.html}}.},
and Forest Covertype\footnote{{\scriptsize\url{https://archive.ics.uci.edu/ml/datasets/Covertype}}.}.
MNIST is a popular dataset used for image classification.
MNIST poses a challenge to differential privacy for three reasons:
(1) Its number of dimensions is relatively higher than others.
To get meaningful test accuracy we thus use Gaussian Random Projection to randomly project to $50$ dimensions.
This random projection only incurs very small loss in test accuracy,
and thus the performance of non-private SGD on $50$ dimensions will serve the baseline.
(2) MNIST is of medium size and differential privacy is known to be more difficult
for medium or small sized datasets.
(3) MNIST is a multiclass classification (there are $10$ digits),
we built ``one-vs.-all'' multiclass logitstic regression models.
This means that we need to construct $10$ binary models (one for each digit).
Therefore, one needs to split the privacy budget across sub-models.
We used the simplest composition theorem~\cite{DR14},
and divide the privacy budget evenly.

For Protein dataset, because its test dataset does not have labels,
we randomly partition the training set into halves to form train and test datasets.
Logistic regression models have very good test accuracy on it.
Finally, Forest Covertype is a large dataset with 581012 data points,
almost 6 times larger than previous ones.
We split it to have 498010 training points and 83002 test points.
We use this large dataset for two purposes: First, in this case,
one may expect that privacy will follow more easily.
We test to what degree this holds for different private algorithms.
Second, since training on such large datasets is time consuming,
it is desirable to use it to measure runtime overheads of various private algorithms.

\red{
\noindent\textbf{Settings of Hyperparameters}.
The following describes how hyperparameters are set in our experiments.
There are three classes of parameters: Loss function parameters, privacy parameters,
and parameters for running stochastic gradient descent.

\noindent\textbf{\em Loss Function Parameters}.
Given the loss function and $L_2$ regularization parameter $\lambda$, we can derive $L, \beta, \gamma$
as described in Section~\ref{sec:preliminaries}. We privately tune $\lambda$ in $\{0.0001, 0.001, 0.01\}$.

\noindent\textbf{\em Privacy Parameters}.
$\varepsilon, \delta$ are privacy parameters. We vary $\varepsilon$ in $\{0.1, 0.2, 0.5, 1, 2, 4\}$ for MNIST,
and in $\{0.01, 0.02, 0.05, 0.1, 0.2, \\ 0.4\}$ for Protein and Covertype
(as they are binary classification problems and we do not need to divide by 10).
$\delta$ is set to be $1/m^2$ where $m$ is the size of the training set size.

\noindent\textbf{\em SGD Parameters}.
Now we consider $\eta_t$, $b$, and $k$.

\noindent{\bf\em Step Size $\eta_t$}. Step sizes are derived from theoretical analyses of SGD algorithms.
In particular the step sizes only depend on the loss function parameters and the time stamp $t$ during SGD.
Table~\ref{table:step-size} summarizes step sizes for different settings.

\begin{table}[!htp]
  \centering
  \begin{tabular}{l||l|l|l|l}
    \hline
           & Non-private & Ours & SCS13 & BST14  \\
    \hline
    C + $\varepsilon$-DP
           & $\frac{1}{\sqrt{m}}$ & $\frac{1}{\sqrt{m}}$ & $\frac{1}{\sqrt{t}}$ & $\times$ \\
    \hline
    C + $(\varepsilon,\delta)$-DP
           & $\frac{1}{\sqrt{m}}$ & $\frac{1}{\sqrt{m}}$ & $\frac{1}{\sqrt{t}}$
           & Alg.~\ref{alg:convex-bst14-plus} \\
    \hline
    SC + $\varepsilon$-DP
           & $\frac{1}{\gamma t}$ & $\min(\frac{1}{\beta}, \frac{1}{\gamma t})$
           & $\frac{1}{\sqrt{t}}$ & $\times$ \\
    \hline
    SC + $(\varepsilon, \delta)$-DP
           & $\frac{1}{\gamma t}$ & $\min(\frac{1}{\beta}, \frac{1}{\gamma t})$
           & $\frac{1}{\sqrt{t}}$ & Alg.~\ref{alg:strongly-convex-bst14-plus} \\
    \hline
  \end{tabular}
  \caption{Step Sizes for different settings. C: Convex, SC: Strongly Convex.
    For SCS13 we follow in\protect\cite{SCS13} and set step size to be $\protect 1/\sqrt{t}$.
  }
  \vskip 12pt
  \label{table:step-size}
\end{table}

\noindent{\bf \em Mini-batch Size $b$}.
We are not aware of a first-principled way in literature to set mini-batch size
(note that convergence proofs hold even for $b=1$).
In practice mini-batch size typically depends on the system constraints
(e.g. number of CPUs) and is set to some number from $10$ to $100$.
We set $b = 50$ in our experiments for fair comparisons with SCS13 and BST14,
which shows that our algorithms enjoy both efficiency and substantially better test accuracy.

Note that increasing $b$ could reduce noise but makes the gradient step more expensive 
and might require more passes. In general, a good practice is to set $b$ to be reasonably large
without hurting performance too much. To assess the impact of this setting further,
we include an experiment on varying the batch size in Appendix~\ref{sec:acc-vs-mbsize}.
We leave for future work the deeper questions on formally identifying the sweet spot
among efficiency, noise, and accuracy.

\noindent{\bf \em Number of Passes $k$.} For fair comparisons in the experiments below with SCS13 and BST14,
for all algorithms tested we privately tune $k$ in $\{5, 10\}$.
However, for {\em our algorithms} there is a simpler strategy to set $k$ in the {\em strongly convex} case.
Since our algorithms run vanilla SGD as a black box, one can set a convergence tolerance threshold $\mu$
and set a large $K$ as the threshold on the number of passes. Since in the strongly convex case
{\em the noise injected in our algorithms (Alg.~\ref{alg:p-sc-psgd}) does not depend on $k$},
we can run the vanilla SGD until either the decrease rate of training error is smaller than $\mu$,
or the number of passes reaches $K$, and inject noise at the end.

Note that this strategy does {\em not} work for SCS13 or BST14 because in either convex or strongly convex case,
their noise injected in each step depends on $k$, so they must have $k$ fixed beforehand.
Moreover, since they inject noise at each SGD iteration,
it is likely that they will run out of the pass threshold.

The above discussion demonstrates {\em an additional advantage} of our algorithms using output perturbation:
In the strongly convex case the number of passes $k$ is {\em oblivious} to private SGD.

\final{
\noindent{\bf \em Radius $R$}. Recall that for strongly convex optimization the hypothesis space
needs to a have bounded norm (due to the use of $L_2$ regularization). We adopt the practices
in~\cite{SCS13} and set $R = 1/\lambda$.}
}                               

\noindent\textbf{Experimental Environment}.
All the experiments were run on a machine with Intel Xeon E5-2680 2.50GHz CPUs
($48$-core) and $64$GB RAM running Ubuntu $14.04.4$.


\subsection{Runtime Overhead and Scalability}
\label{sec:experiments:bismarck}
Using output perturbation trivially addresses runtime and scalability concerns.
We confirm this experimentally in this section.

\noindent\textbf{Runtime Overheads}.
We compare the runtime overheads of our private SGD algorithms
against the noiseless version and the other algorithms.
The key parameters that affect runtimes are the number of epochs and the batch sizes.
Thus, we vary each of these parameters, while fixing the others.
The runtimes are the average of $4$ warm-cache runs and all datasets fit in the buffer
cache of PostgreSQL. The error bars represent $90\%$ confidence intervals.
The results are plotted in Figure~\ref{fig:runtimes} (a)--(c) and
Figure~\ref{fig:runtimes} (d)--(f) (only the results of strongly convex,
$(\varepsilon, \delta$)-differential privacy are reported;
the other results are similar and thus, we skip them here for brevity).

The first observation is that our algorithm incurs virtually no runtime overhead over
noiseless \Bismarck, which is as expected because our algorithm only adds noise once
at the end of all epochs.
In contrast, both SCS13 and BST14 incur significant runtime overheads
in all settings and datasets.
In terms of runtime performance for $20$ epochs and a batch size of $10$, 
both SCS13 and BST14 are between {\bf 2X} and {\bf 3X} slower than our algorithm.
The gap grows larger as the batch size is reduced: for a batch size of $1$ and $1$ epoch,
both SCS13 and BST14 are up to {\bf 6X} slower than our algorithm.
This is expected since these algorithms invoke expensive
random sampling code from sophisticated distributions for each mini-batch.
When the batch size is increased to $500$, the runtime gap between these algorithms 
practically disappears as the random sampling code is invoked much less often.
Overall, we find that our algorithms can be significantly faster than the alternatives.

\noindent\textbf{Scalability}.
We compare the runtimes of all the private SGD algorithms as the datasets
are scaled up in size (number of examples). For this experiment, we use the data
synthesizer available in \Bismarck{} for binary classification. We produce two sets 
of datasets for scalability: \textit{in-memory} and \textit{disk-based} (dataset does 
not fit in memory). The results for both are presented in Figure~\ref{fig:scale}. 
We observe linear increase in runtimes for all the algorithms compared in both settings. 
As expected, when the dataset fits in memory, SCS13 and BST14 are much slower
and in particular the runtime overhead increases linearly as data size grows.
This is primarily because CPU costs dominate the runtime. Recall that these algorithms add 
noise to each mini-batch, which makes them computationally more expensive. We also see 
that all runtimes scale linearly with the dataset size even in the disk-based setting. 
An interesting difference is that I/O costs, which are the same for all the algorithms 
compared, dominate the runtime in Figure~\ref{fig:scale}(b). Overall, these results 
demonstrate a key benefit of integrating our private SGD algorithm into an RDBMS-based 
toolkit like \Bismarck: scalability to larger-than-memory data comes for free.

\begin{figure}[!htb]
  \centering
  \begin{tabular}{cc}
    \includegraphics[width=0.3\columnwidth]{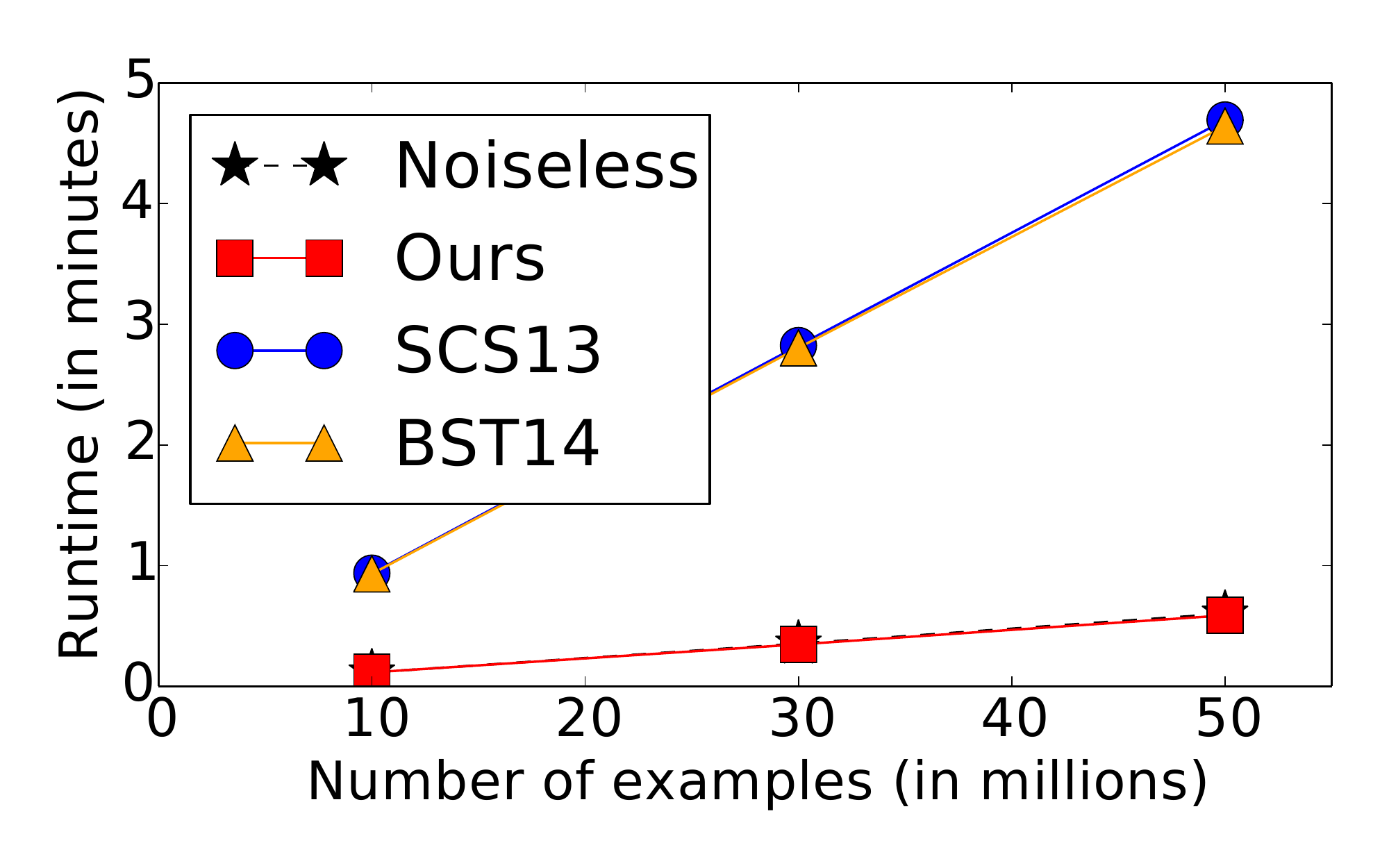} &
    \includegraphics[width=0.3\columnwidth]{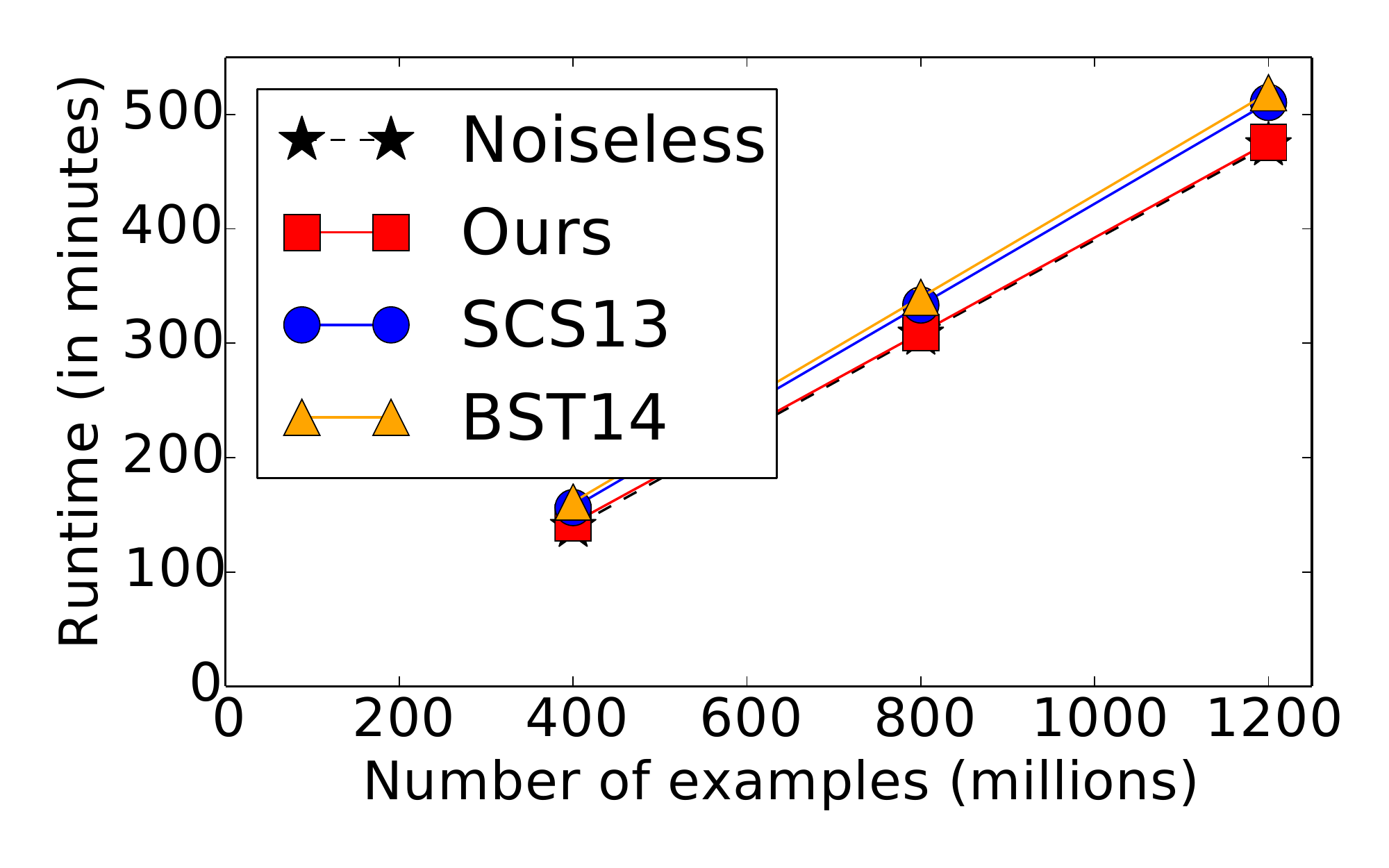} \\
  \end{tabular}
  \caption{{\bf Scalability of $(\epsilon,\delta)$-DP SGD algorithms in \protect\Bismarck{}:
      (a) The dataset fits in memory. (b) The dataset is larger than memory (on disk).
      The runtime per epoch for mini-batch size $=1$ is plotted. All datasets have $d=50$ 
      features. We fix $\epsilon=0.1$ and $\lambda=0.0001$. The dataset sizes 
      vary from $3.7$ GB to $18.6$ GB in (a) and from $149$ GB to $447$ GB in (b).}}
  \label{fig:scale}
\end{figure}


\subsection{Accuracy and Effects of Parameters}
\label{sec:experiments:accuracy-and-parameter-effect}
\begin{figure*}[!htb]
  \centering
  \subfloat{
    \includegraphics[width=0.24\columnwidth]
    {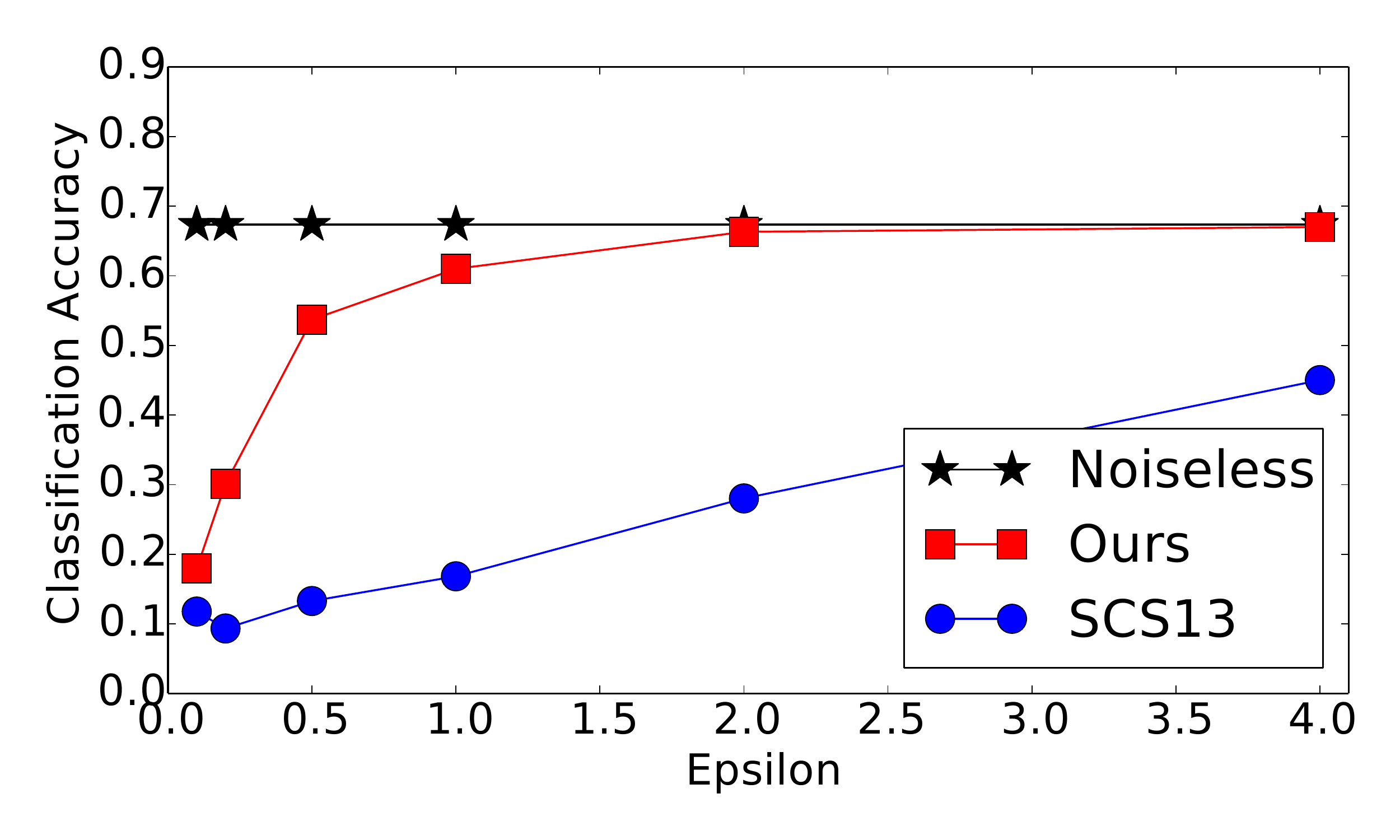}
  }
  \subfloat{
    \includegraphics[width=0.24\columnwidth]
    {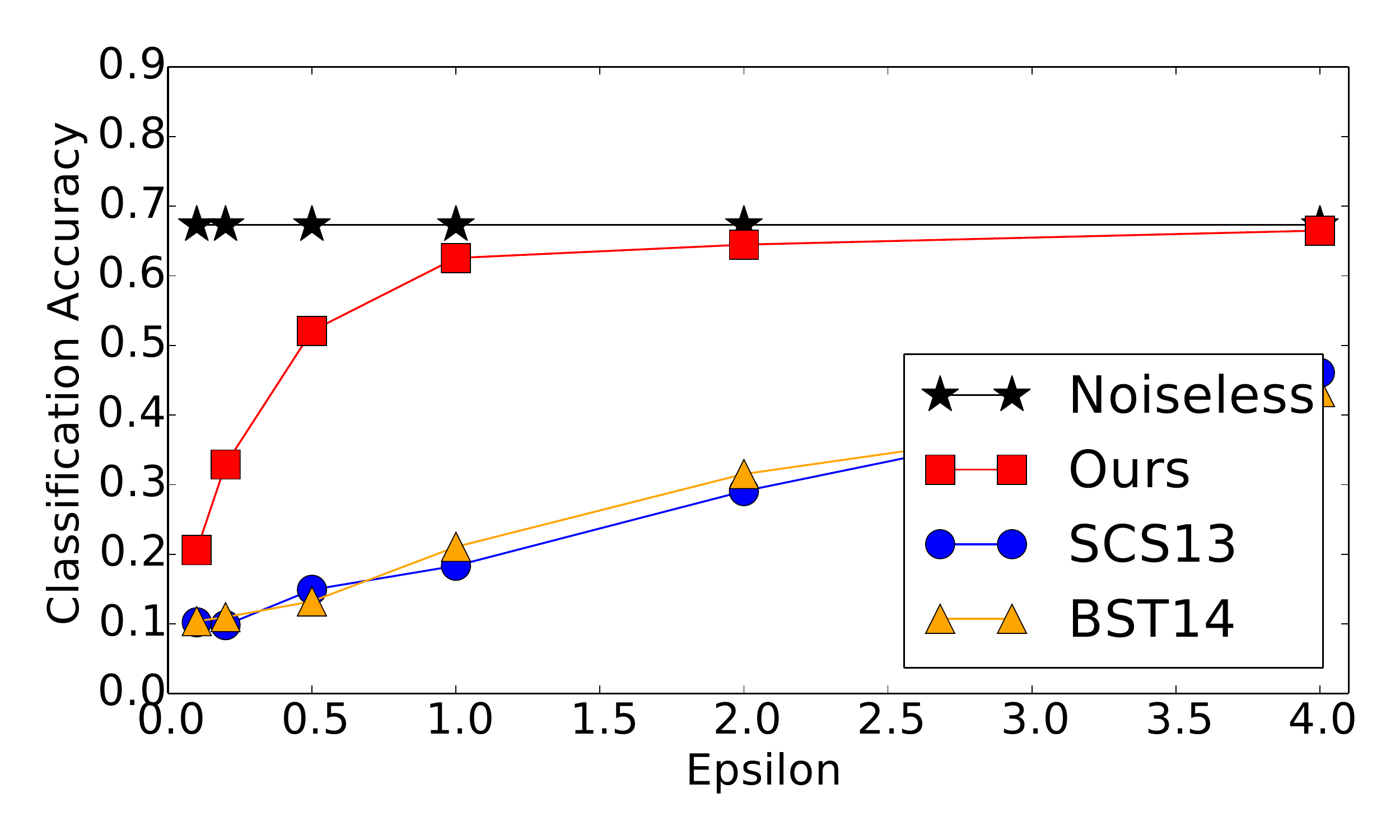}
  }
  \subfloat{
    \includegraphics[width=0.24\columnwidth]
    {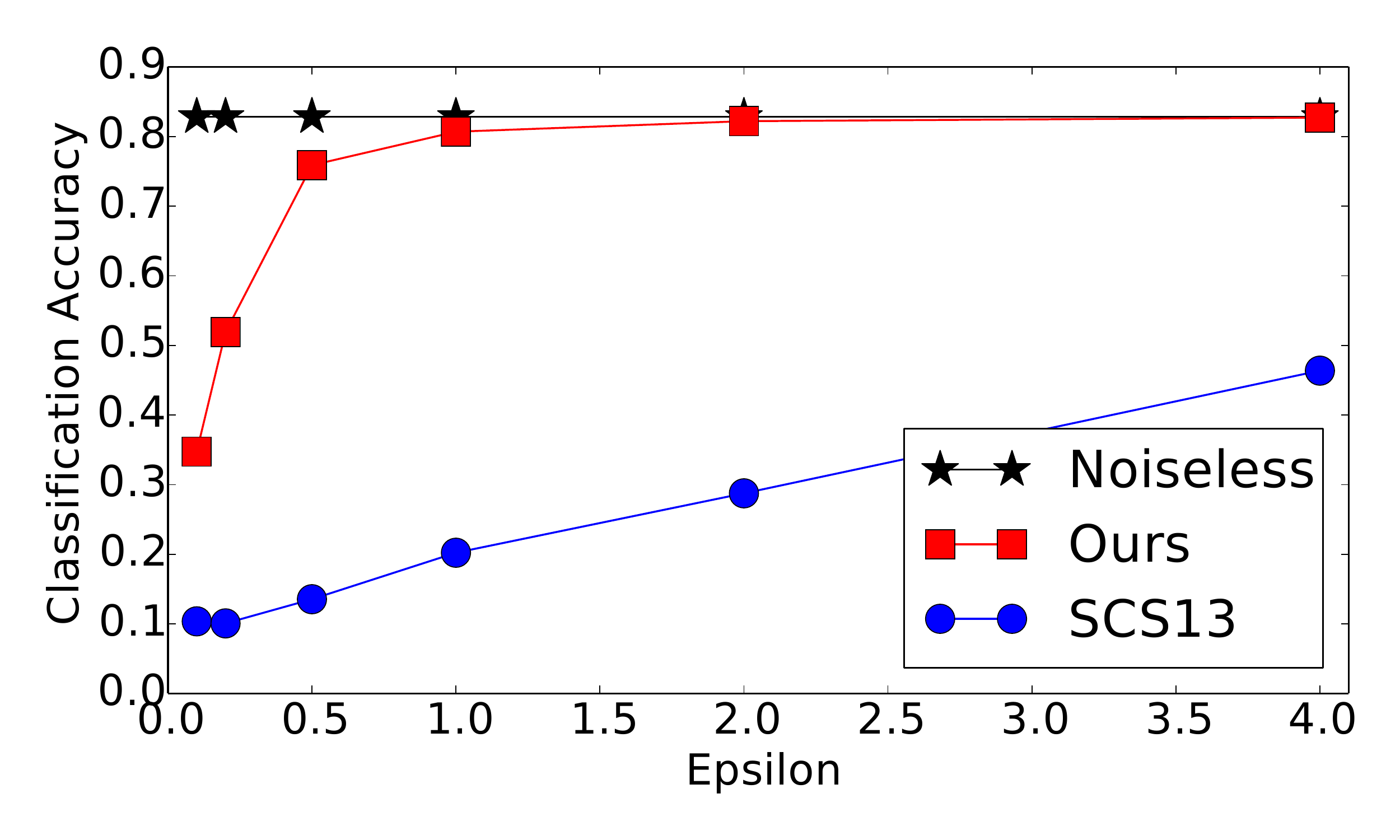}
  }
  \subfloat{
    \includegraphics[width=0.24\columnwidth]
    {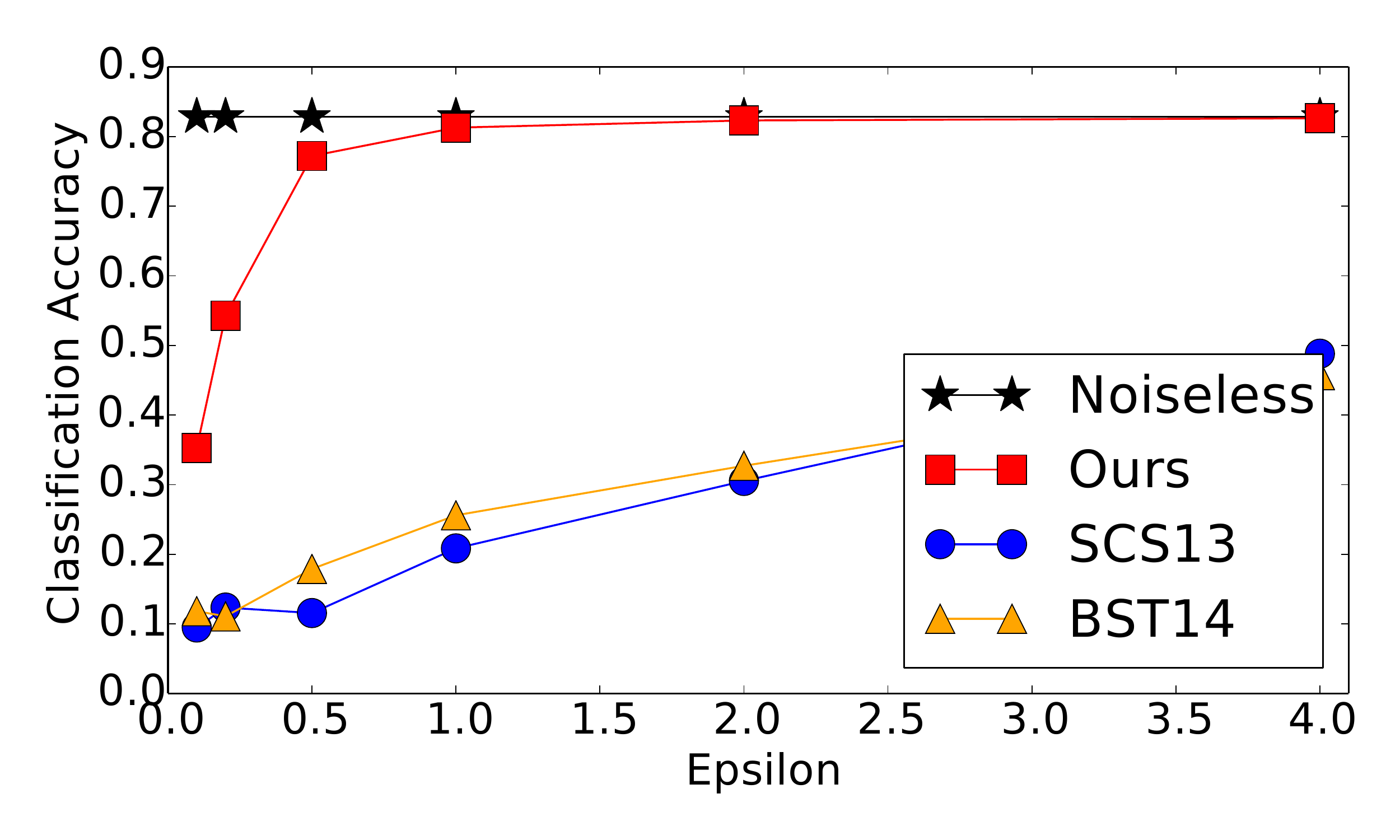}
  } \\[-3ex]
  \subfloat{
    \includegraphics[width=0.24\columnwidth]
    {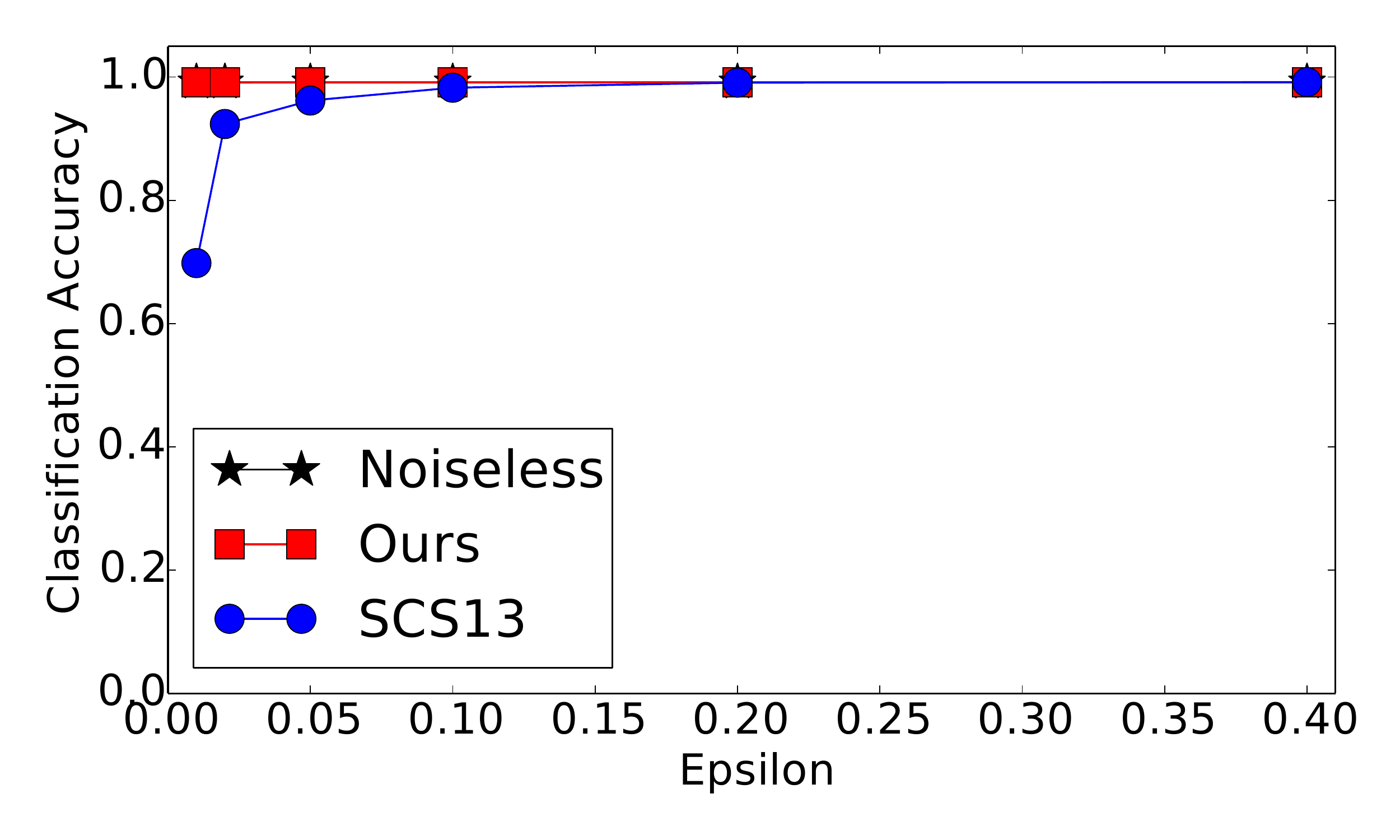}
  }
  \subfloat{
    \includegraphics[width=0.24\columnwidth]
    {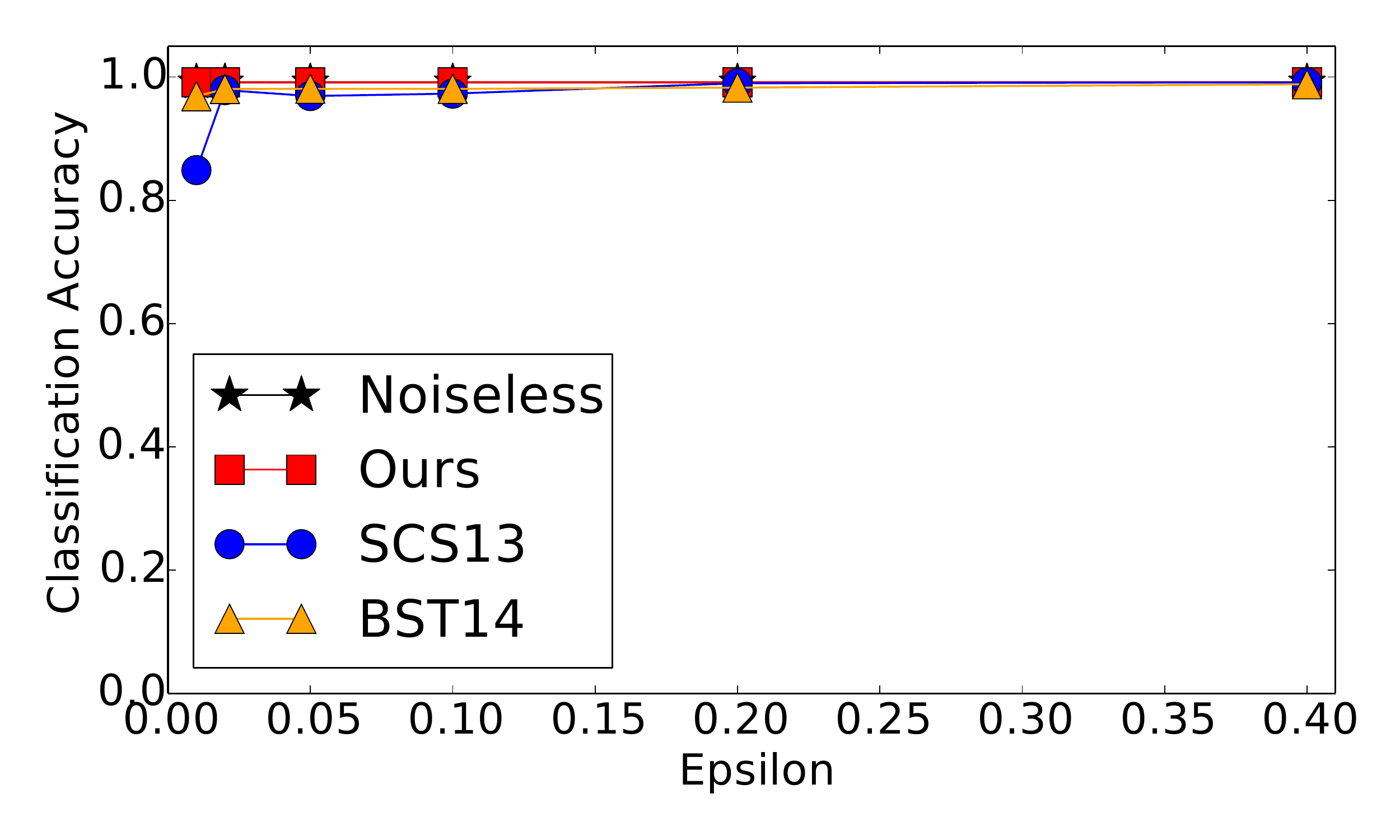}
  }
  \subfloat{
    \includegraphics[width=0.24\columnwidth]
    {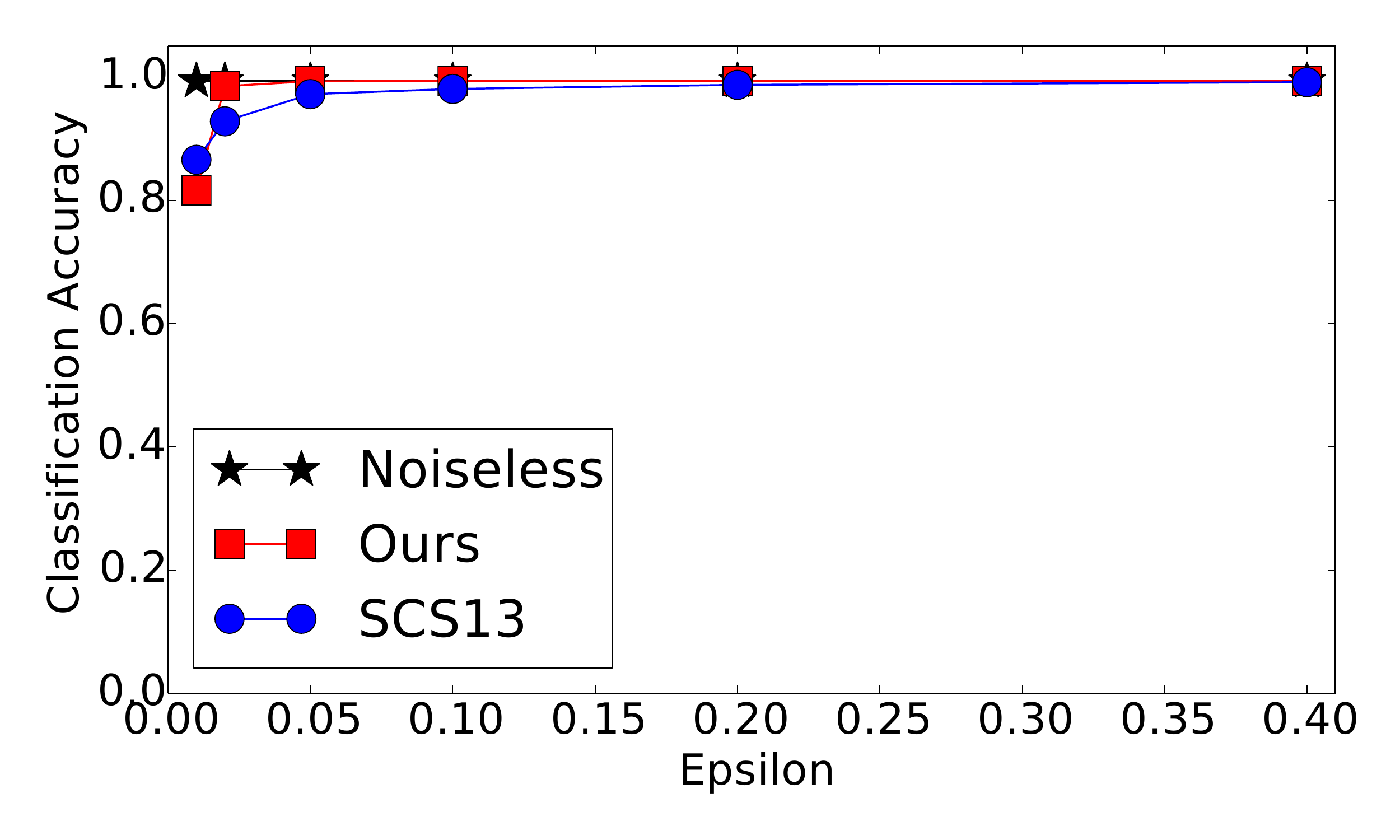}
  }
  \subfloat{
    \includegraphics[width=0.24\columnwidth]
    {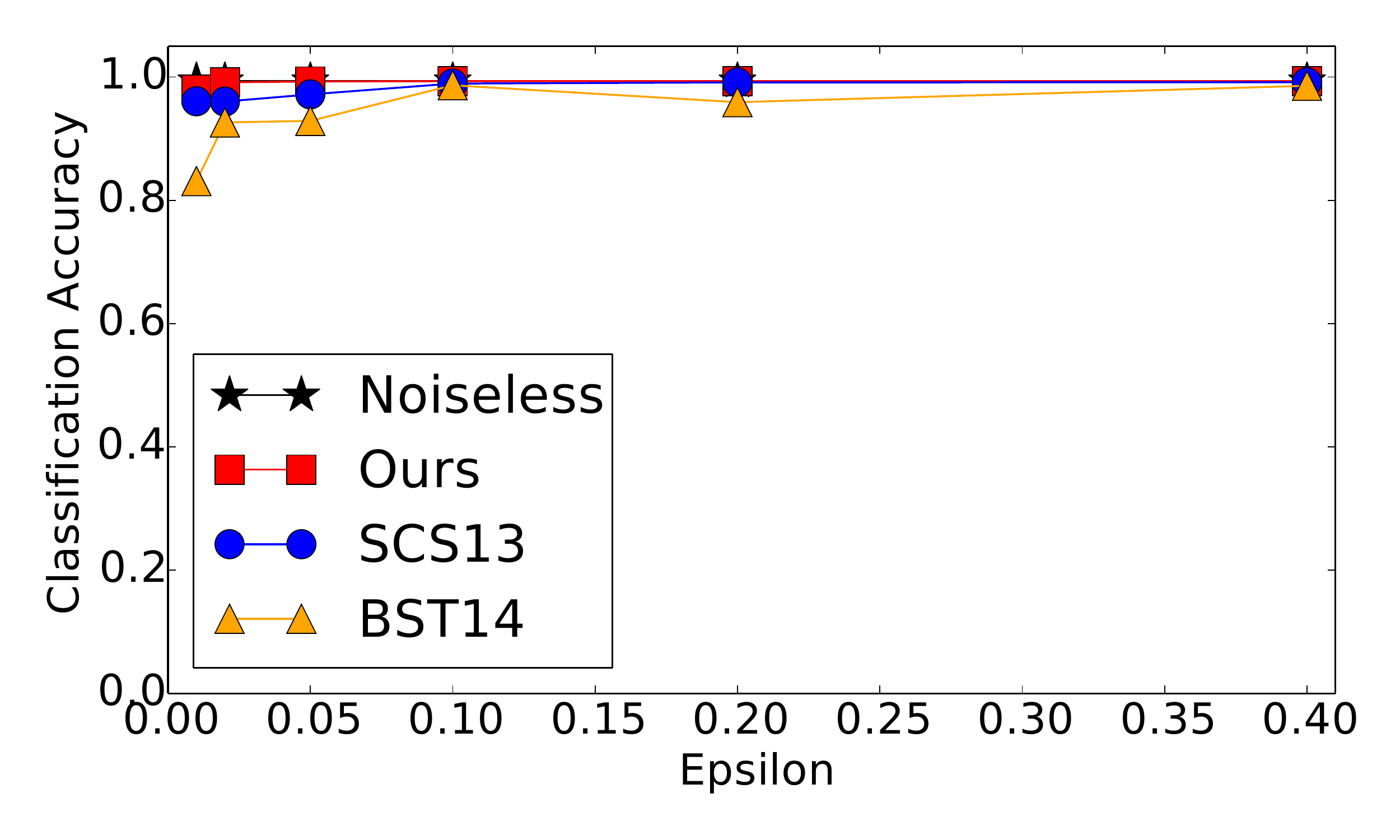}
  } \\[-3ex]
  \subfloat{
    \includegraphics[width=0.24\columnwidth]
    {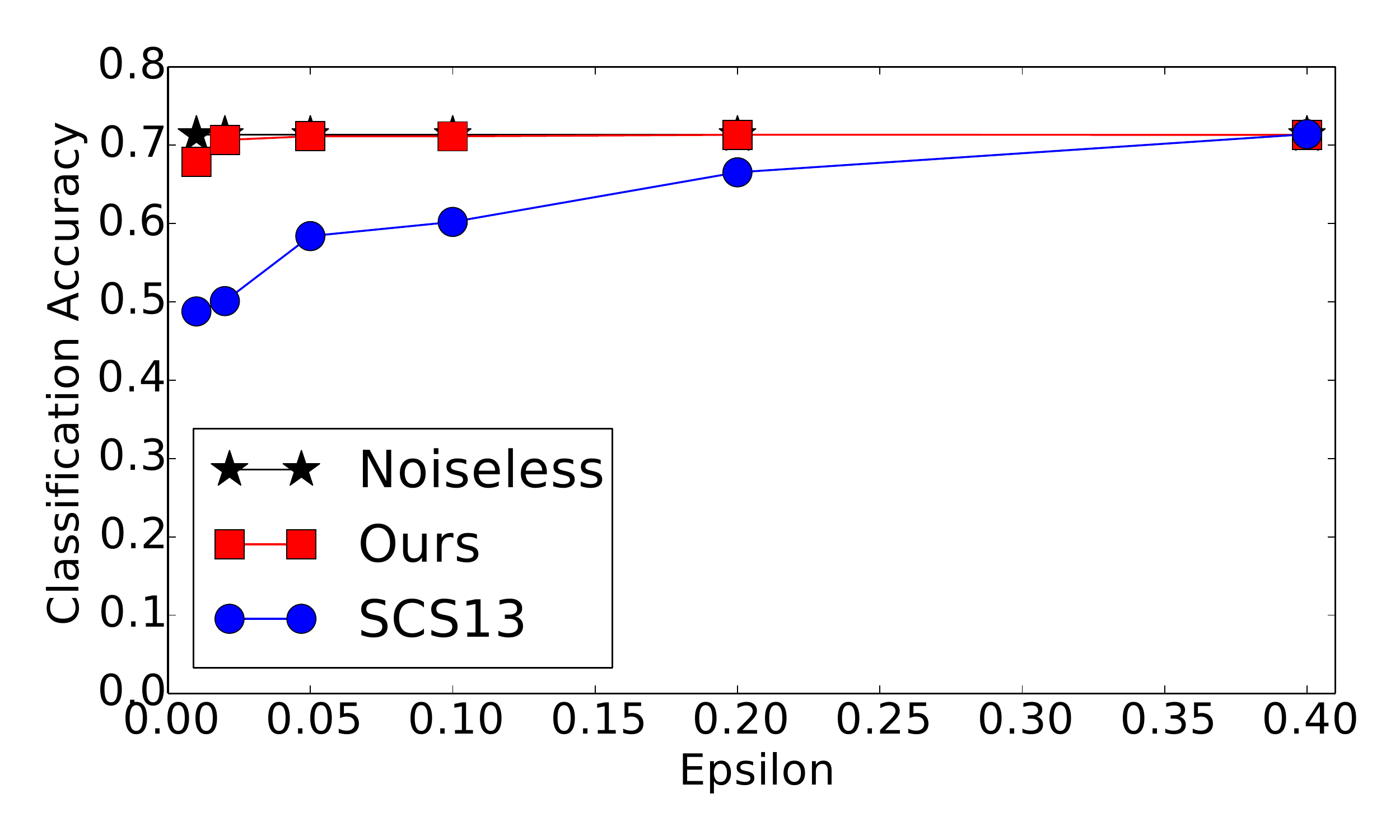}
  }
  \subfloat{
    \includegraphics[width=0.24\columnwidth]
    {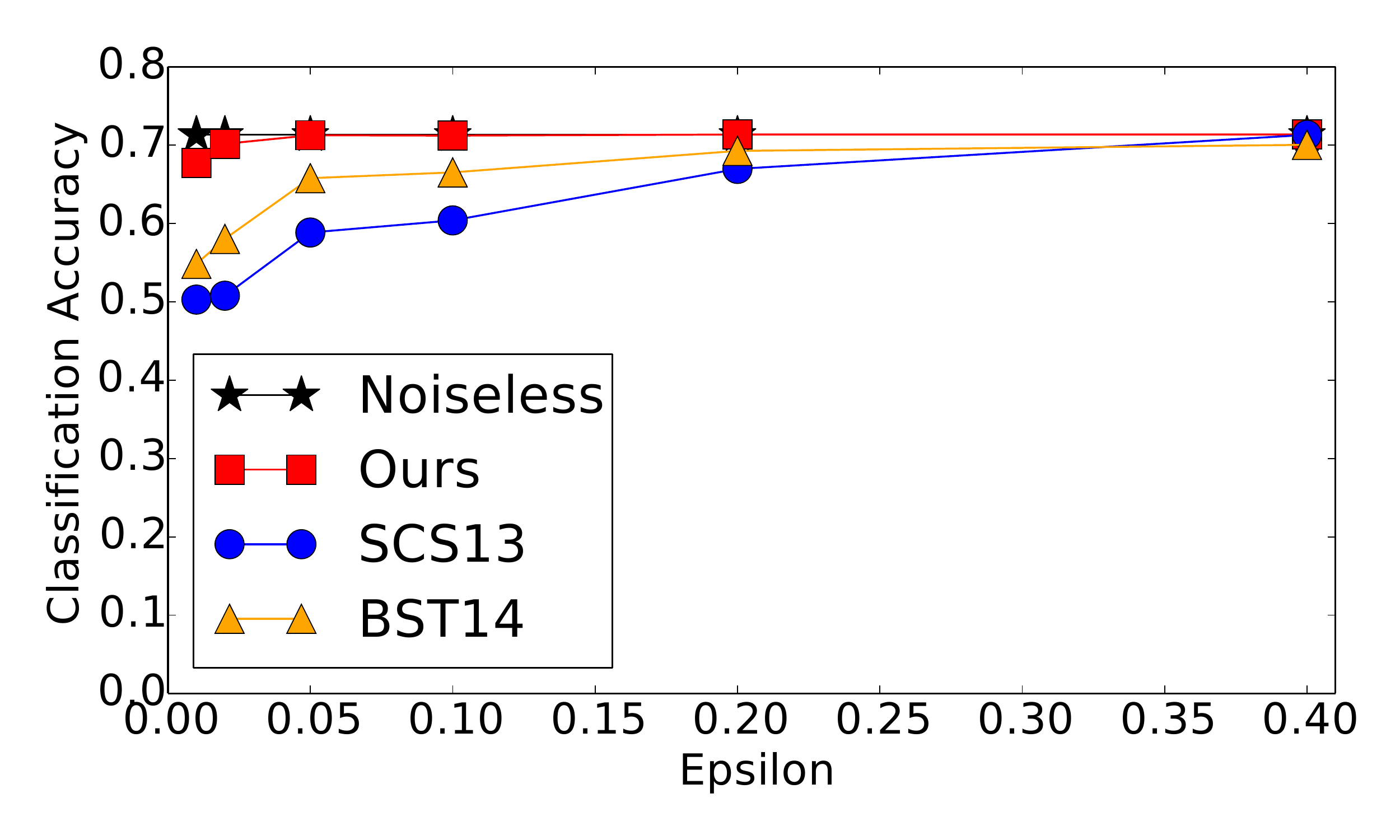}
  }
  \subfloat{
    \includegraphics[width=0.24\columnwidth]
    {}
  }
  \subfloat{
    \includegraphics[width=0.24\columnwidth]
    {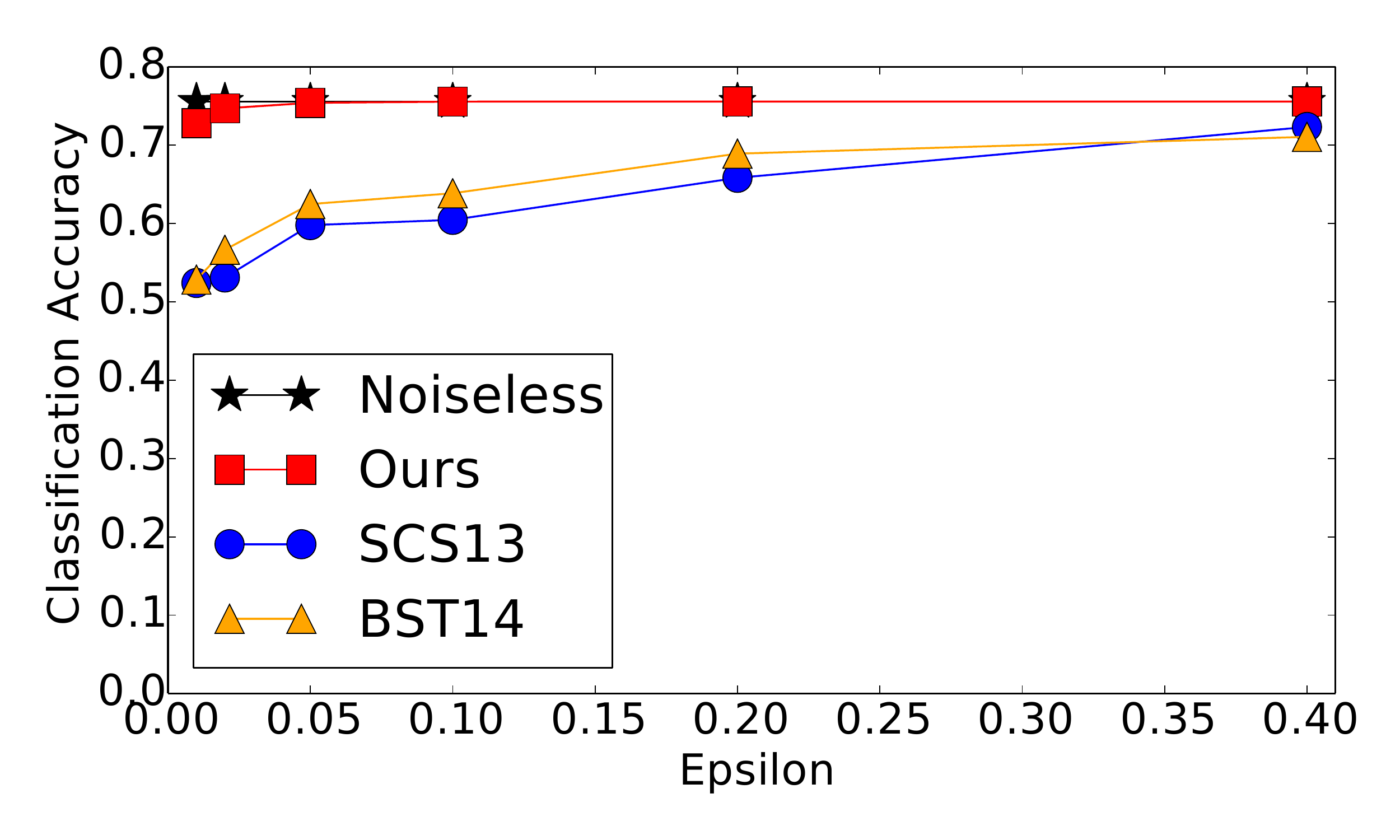}
  }
  \caption{
    {\bf Tuning using Public Data.}
    Row 1 is MNIST, row 2 is Protein and row 3 is Forest Covertype.
    Each row gives the test accuracy results of 4 tests:
    Test 1 is Convex, $\protect (\varepsilon, 0)$-DP,
    Test 2 is Convex, $\protect (\varepsilon, \delta)$-DP,
    Test 3 is Strongly Convex, $\protect (\varepsilon, 0)$-DP,
    and Test 4 is Strongly Convex, $\protect (\varepsilon, \delta)$-DP.
    For Test 1 and 3, we compare Noiseless, our algorithm and SCS13.
    For Test 2 and 4, we compare all four algorithms.
    The mini-batch size $b = 50$. For strongly convex optimization we set $\protect R = 1/\lambda$,
    otherwise we report unconstrained optimization for the convex case.
    Each point is the test accuracy of the model trained with $10$ passes and $\lambda = 0.0001$,
    where applicable.
  }
  \label{fig:notune:tests_50_mb_10_passes}
\end{figure*}
\noindent Finally, we report test accuracy and analyze the parameters.

\noindent\textbf{Test Accuracy using Public Data}.
Figure~\ref{fig:notune:tests_50_mb_10_passes} reports the test accuracy results
if one can tune parameters using public data.
{\em For all tests our algorithms give significantly better accuracy,
up to {\bf 4X} better than SCS13 and BST14.}
Besides better absolute performance, we note that our algorithms are more stable
in the sense that it converges more quickly to noiseless performance
(at smaller $\varepsilon$).

\noindent\textbf{Test Accuracy using a Private Tuning Algorithm}.
Figure~\ref{fig:accuracy:tests_50_mb_10_passes} gives the test accuracy results of
MNIST, Protein and Covertype for all 4 test scenarios using a private tuning algorithm
(Algorithm~\ref{alg:private-tuning}).
{\em For all tests we see that our algorithms give
  significantly better accuracy, up to {\bf 3.5X} better than BST14
  and up to {\bf 3X} better than SCS13.}

SCS13 and BST14 exhibit much better accuracy on Protein than on MNIST,
since logistic regression fits well to the problem. 
Specifically, BST14 has very close accuracy as our algorithms,
though our algorithms still consistently outperform BST14.
The accuracy of SCS13 decreases significantly with smaller $\varepsilon$.

For Covertype, even on this large dataset,
SCS13 and BST14 give much worse accuracy compared to ours.
The accuracy of our algorithms is close to the baseline at around $\varepsilon=0.05$.
The accuracy of SCS13 and BST14 slowly improves with more passes over the data.
Specifically, the accuracy of BST14 approaches the baseline only after $\varepsilon=0.4$.

\begin{figure*}[!htb]
  \centering
  \subfloat[Convex, Test 1]{
    \includegraphics[width=0.3\columnwidth]
    {}
  }
  \subfloat[Strongly Convex, Test 3]{
    \includegraphics[width=0.3\columnwidth]
    {}
  }
  \subfloat[Convex, Test 1]{
    \includegraphics[width=0.3\columnwidth]
    {}
  }
  \caption{{\bf (a), (b) The effect of number of passes}:
    We report the results on MNIST dataset.
    We contrast Test 1 (Convex $\protect\varepsilon$-DP) using mini-batch size 1,
    with Test 3 (Strongly Convex $\protect\varepsilon$-DP) using mini-batch size 50.
    In the former case, more passes through the data introduces more noise due to
    privacy and thus results in worse test accuracy. In the latter case,
    more passes improves the test accuracy as it helps convergence
    while no more noise is needed for privacy. {\bf (c) The effect of mini-batch size}.
    We run again Test 1 with 20 passes through the data, and vary mini-batch size in $\protect\{1, 10, 50\}$.
    As soon as mini-batch size increases to 10 the test accuracy drastically
    improves from 0.45 to 0.71.}
  \label{fig:parameters}
\end{figure*}
\begin{figure*}[!htb]
  \centering
  \subfloat{
    \includegraphics[width=0.3\columnwidth]
    {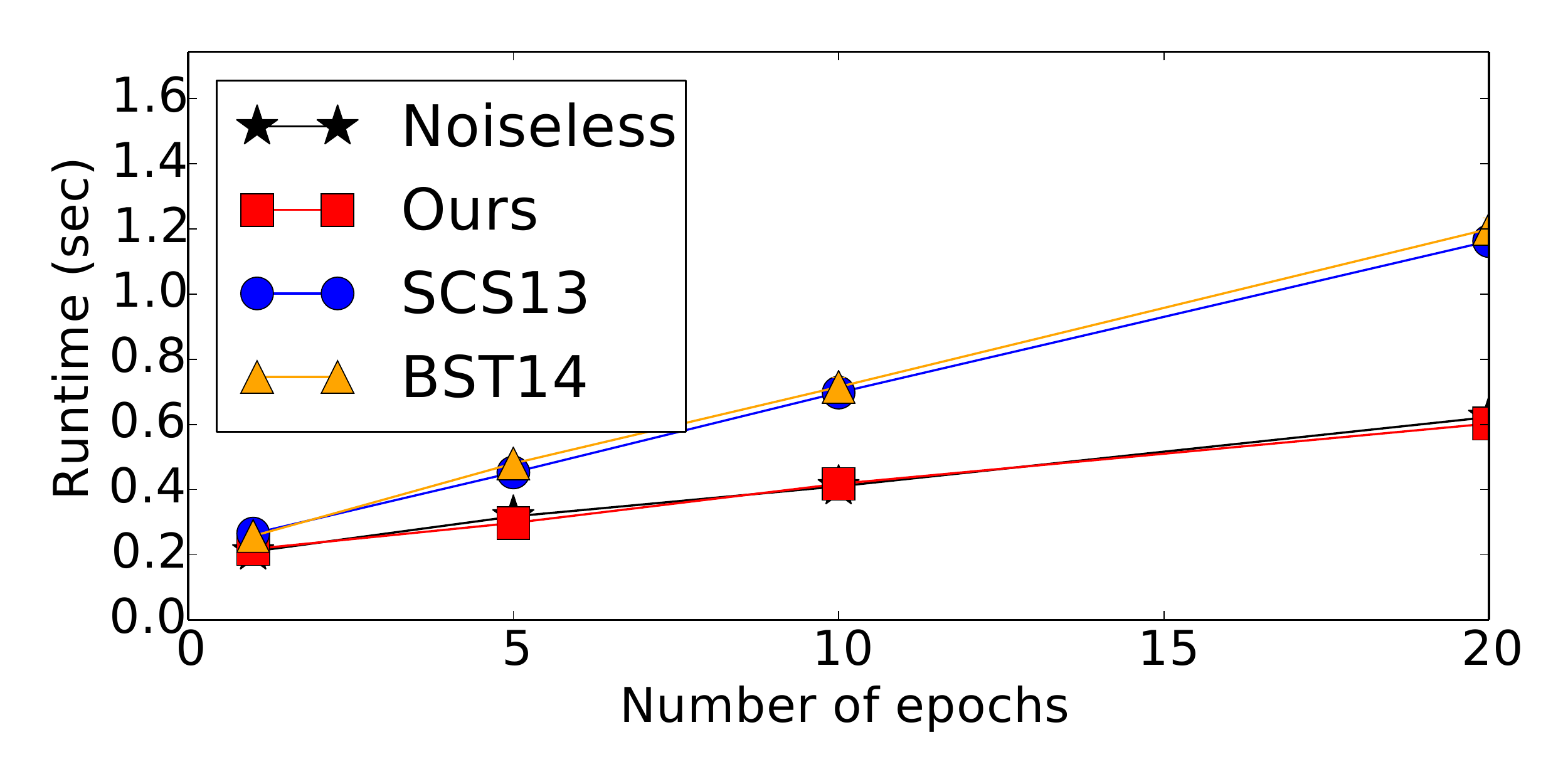}
  }
  \subfloat{
    \includegraphics[width=0.3\columnwidth]
    {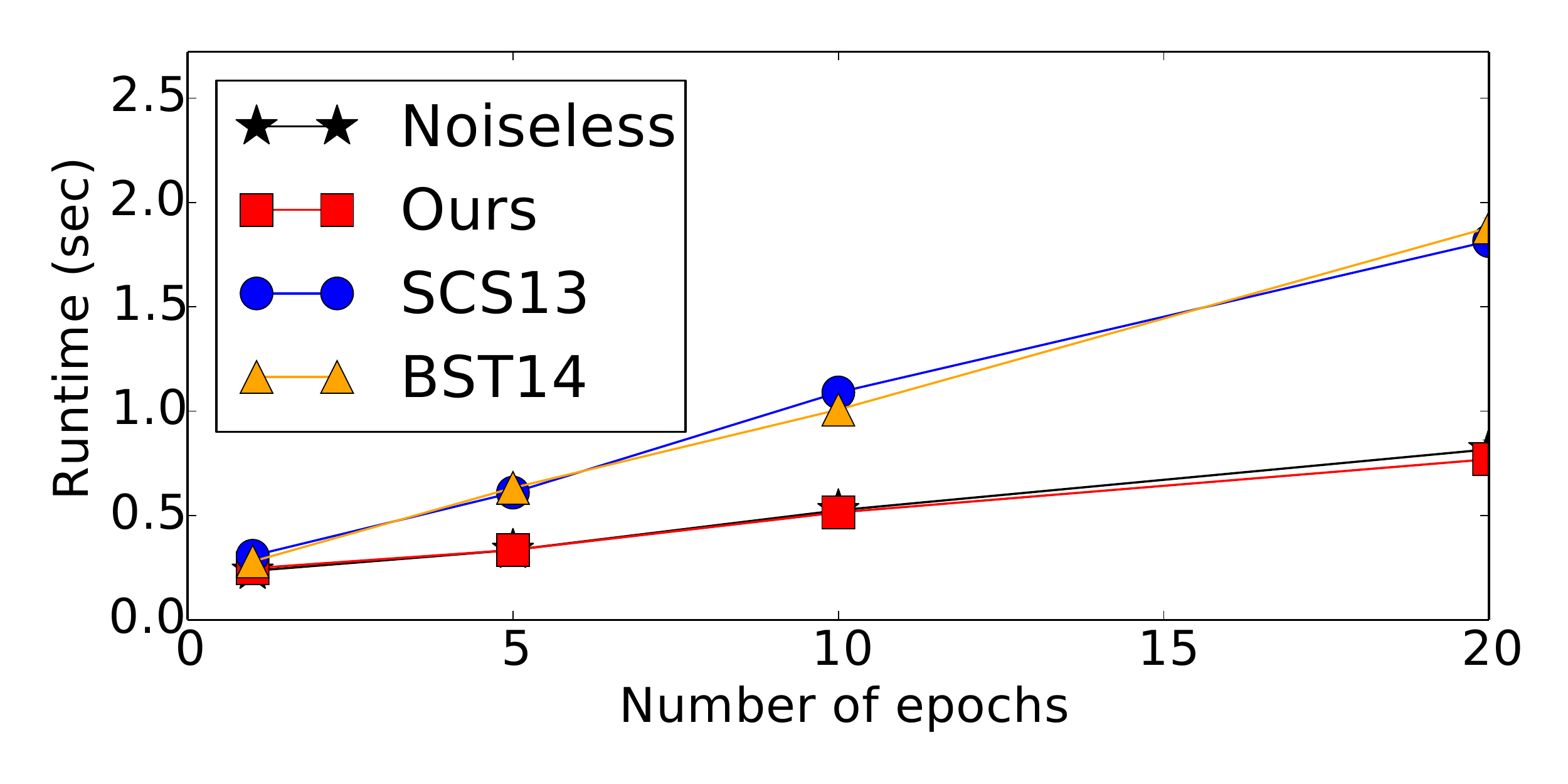}
  }
  \subfloat{
    \includegraphics[width=0.3\columnwidth]
    {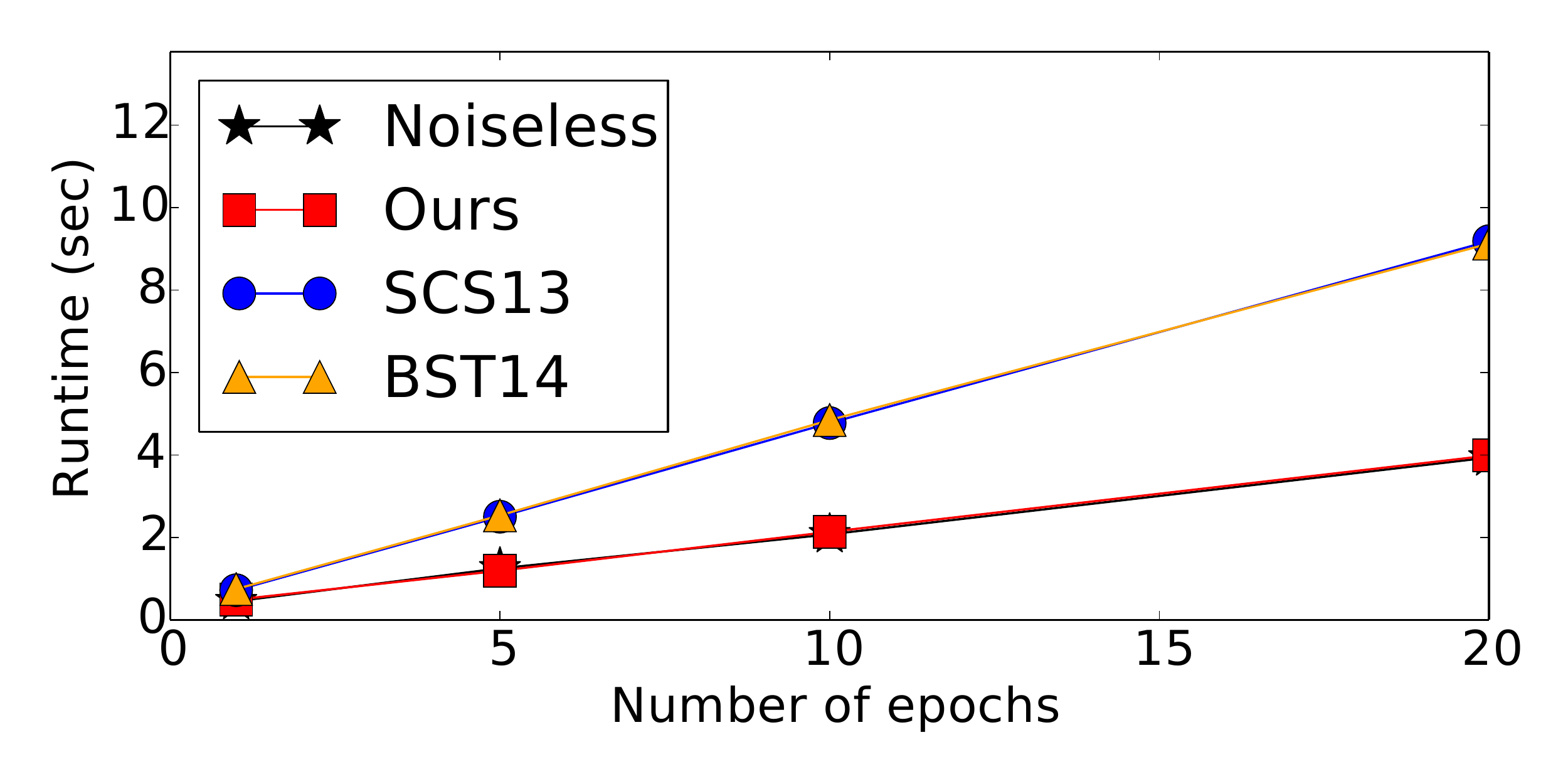}
  } \\[-3ex]
  \subfloat{
    \includegraphics[width=0.3\columnwidth]
    {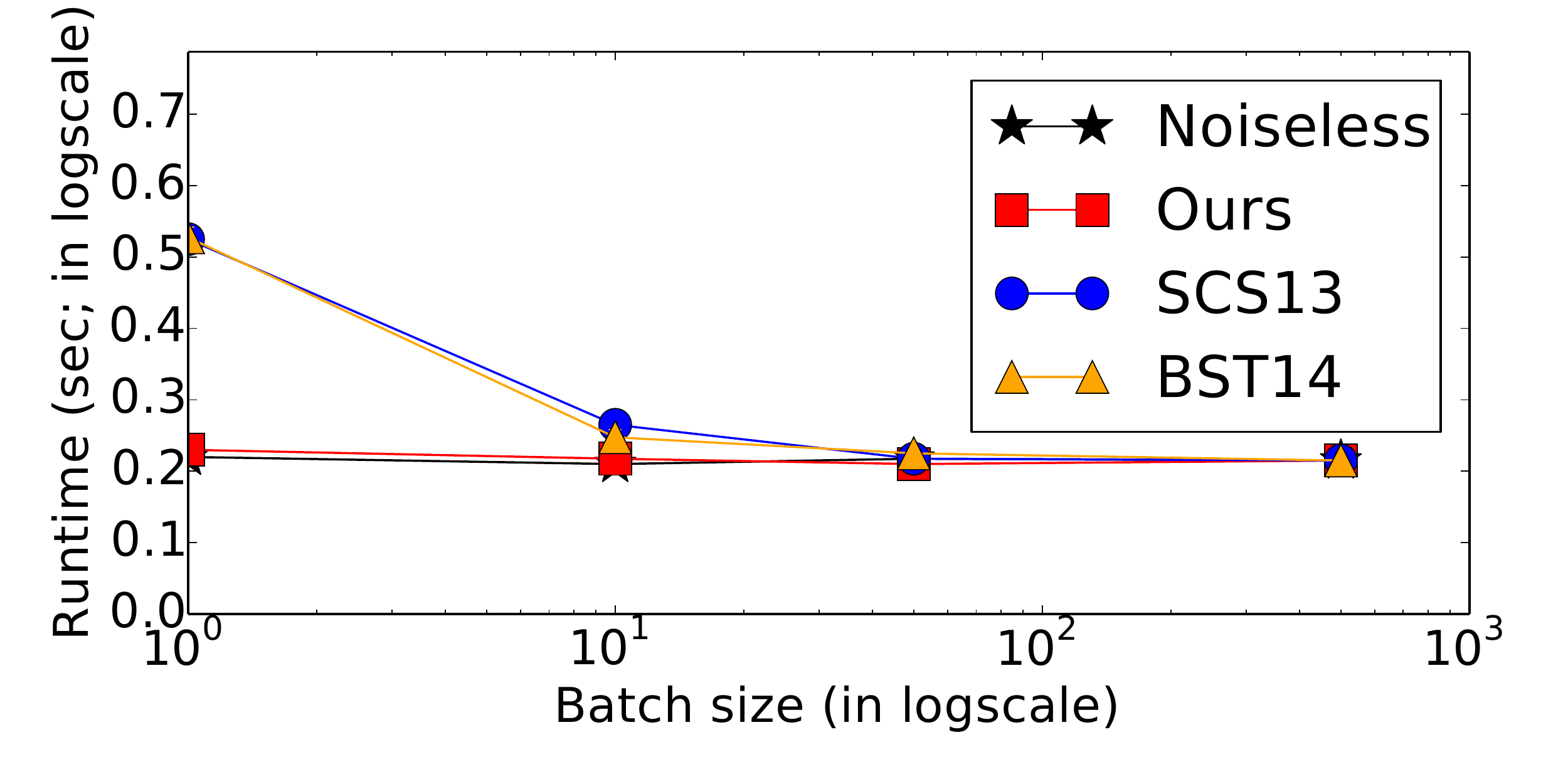}
  }
  \subfloat{
    \includegraphics[width=0.3\columnwidth]
    {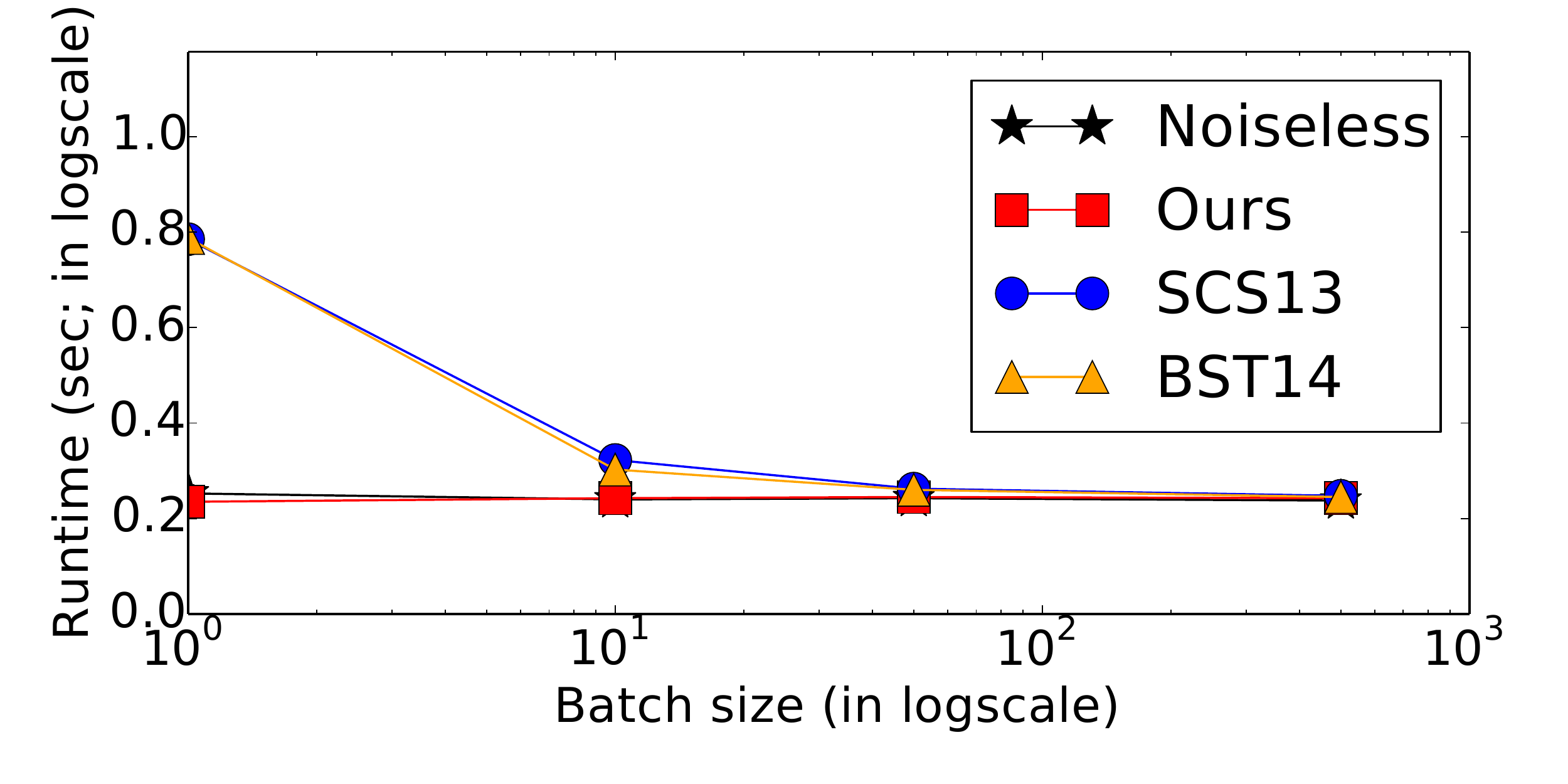}
  }
  \subfloat{
    \includegraphics[width=0.3\columnwidth]
    {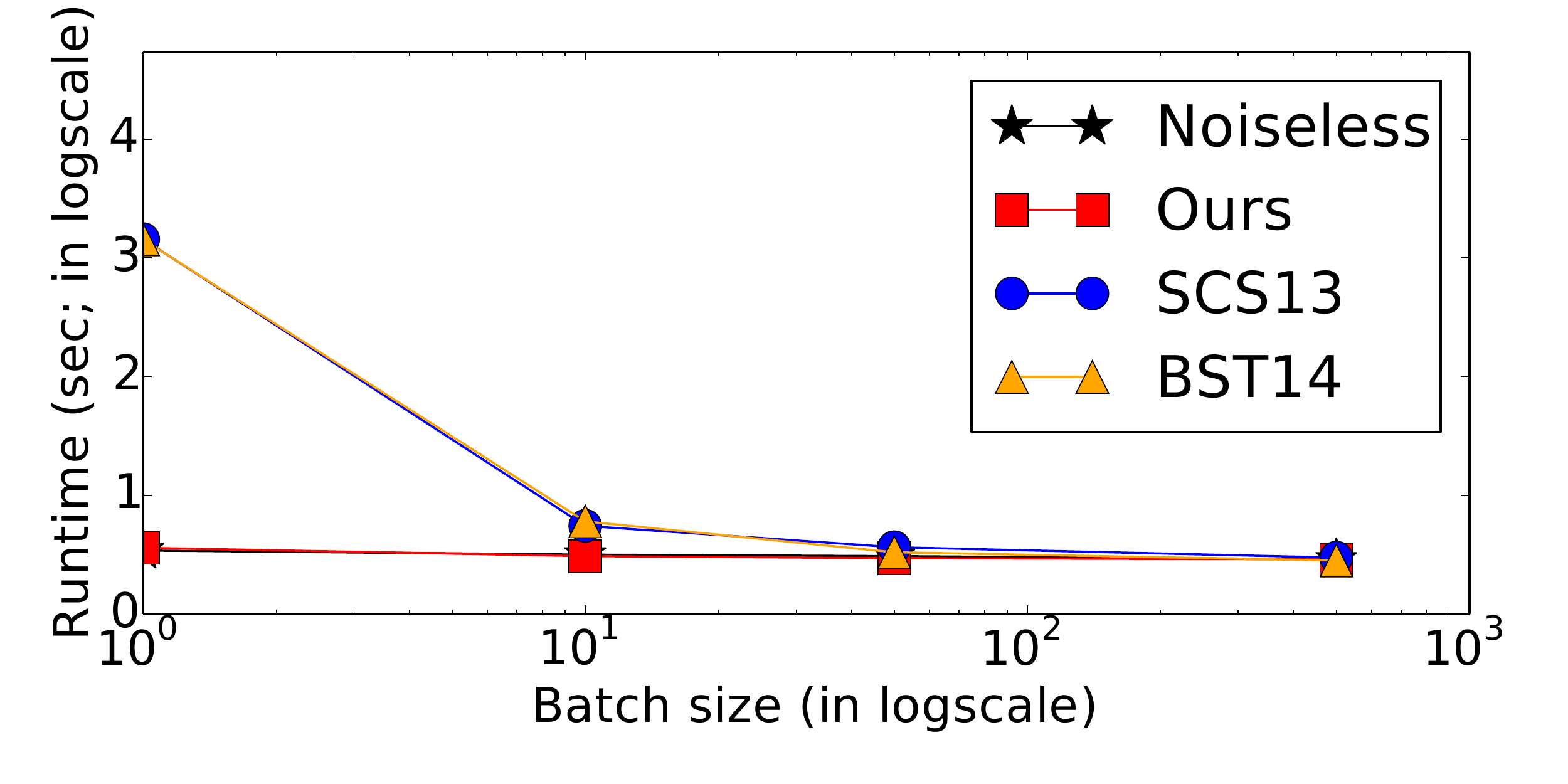}
  }
  \caption{
    {\bf Runtime of the implementations on \protect\Bismarck{}.}
    Row 1 gives the runtime results of varying the number of epochs with mini-batch size $=\protect10$,
    on MNIST, Protein and Forest Covertype, respectively.
    Row 2 gives the runtime results of Varying the mini-batch size for a single epoch,
    on MNIST, Protein and Forest Covertype, respectively.
    Only the results of Strongly Convex, $\protect (\varepsilon, \delta)$-DP are
    reported, and other settings have very similar trends.
    Noiseless is the regular mini-batch SGD in \protect\Bismarck.
    We fix  $\protect \varepsilon = 0.1$.
  }
  \label{fig:runtimes}
\end{figure*}

\noindent\textbf{Number of Passes (Epochs)}.
In the case of convex optimization, more passes through the data indicates larger
noise magnitude according to our theory results. This translates empirically to
worse test accuracy as we perform more passes.
Figure~\ref{fig:parameters} (a) reports test accuracy in the convex case
as we run our algorithm $1$ pass, $10$ passes and $20$ passes through the MNIST data.
The accuracy drops from 0.71 to 0.45 for $\varepsilon =  4.0$.
One should contrast this with results reported
in Figure~\ref{fig:parameters} (b) where doing more passes actually
{\em improves} the test accuracy.
This is because in the strongly convex case more passes will not introduce more noise
for privacy while it can potentially improve the convergence.

\noindent\textbf{Mini-batch Sizes}. We find that slightly enlarging the mini-batch size
can effectively reduce the noise and thus allow the private algorithm to run more
passes in the convex setting. This is useful since it is very common in practice to
adopt a mini-batch size at around 10 to 50. To illustrate the effect of mini-batch size
we consider the same test as we did above for measuring the effect of number of passes:
We run Test 1 with $20$ passes through the data, but vary mini-batch sizes in
$\{1, 10, 50\}$. Figure~\ref{fig:parameters} (c) reports the test
accuracy for this experiment: As soon as we increase mini-batch size
to $10$ the test accuracy already improves drastically from 0.45 to 0.71.

\begin{figure*}[!htb]
  \centering
  \subfloat{
    \includegraphics[width=0.24\columnwidth]
    {}
  }
  \subfloat{
    \includegraphics[width=0.24\columnwidth]
    {}
  }
  \subfloat{
    \includegraphics[width=0.24\columnwidth]
    {}
  }
  \subfloat{
    \includegraphics[width=0.24\columnwidth]
    {}
  } \\[-3ex]
  \subfloat{
    \includegraphics[width=0.24\columnwidth]
    {}
  }
  \subfloat{
    \includegraphics[width=0.24\columnwidth]
    {}
  }
  \subfloat{
    \includegraphics[width=0.24\columnwidth]
    {}
  }
  \subfloat{
    \includegraphics[width=0.24\columnwidth]
    {}
  } \\[-3ex]
  \subfloat{
    \includegraphics[width=0.24\columnwidth]
    {}
  }
  \subfloat{
    \includegraphics[width=0.24\columnwidth]
    {}
  }
  \subfloat{
    \includegraphics[width=0.24\columnwidth]
    {}
  }
  \subfloat{
    \includegraphics[width=0.24\columnwidth]
    {}
  }
  \caption{
    {\bf Tuning using a Private Tuning Algorithm}.
    Row 1 is MNIST, row 2 is Protein and row 3 is Forest Covertype.
    Each row gives the test accuracy results of 4 tests:
    Test 1 is Convex, $\protect (\varepsilon, 0)$-DP,
    Test 2 is Convex, $\protect (\varepsilon, \delta)$-DP,
    Test 3 is Strongly Convex, $\protect (\varepsilon, 0)$-DP,
    and Test 4 is Strongly Convex, $\protect (\varepsilon, \delta)$-DP.
    For Test 1 and 3, we compare Noiseless, our algorithm and SCS13.
    For Test 2 and 4, we compare all four algorithms.
    The mini-batch size $b = 50$.
    For strongly convex optimization we set $\protect R = 1/\lambda$,
    otherwise we unconstrained  optimization for the convex case.
    The hyper-parameters were tuned using Algorithm~\ref{alg:private-tuning}
    with a standard ``grid search'' with $2$ values for $k$ ($5$ and $10$)
    and $3$ values for $\lambda$ ($0.0001$,  $0.001$, $0.01$), where applicable.
  }
  \label{fig:accuracy:tests_50_mb_10_passes}
\end{figure*}

\red{
\subsection{Lessons from Our Experiments}
\label{sec:lessons}
\noindent The experimental results demonstrate that our private SGD algorithms produce much 
more accurate models than prior work. Perhaps more importantly, our algorithms are also more 
{\em stable} with small privacy budgets, which is important for practical applications.

Our algorithms are also {\em easier to tune} in practice than SCS13 and BST14.
In particular, the only parameters that one needs to tune for our algorithms are mini-batch size $b$ and
$L_2$ regularization parameter $\lambda$; other parameters can either be derived from the loss function
or can be fixed based on our theoretical analysis. In contrast, SCS13 and BST14 require more attention 
to the number of passes $k$. For $b$, we recommend setting it as a value between $50$ and $200$,
noting that too large a value may make gradient steps more expensive and might require more passes.
For $\lambda$, we recommend using private parameter tuning with candidates chosen from a typical range
of $(10^{-5}, 10^{-2})$ (e.g., choose $\{10^{-5}, 10^{-4}, 10^{-3}, 10^{-2}\}$).

From the larger perspective of building differentially private analytics systems, however, we note that this paper 
addresses how to answer one ``query'' privately and effectively; in some applications, one might want to 
answer multiple such queries. Studying tradeoffs such as how to split the privacy budget across 
multiple queries is largely orthogonal to our paper's focus although they are certainly important.
Our work can be plugged into existing frameworks that attempt to address this requirement. 
That said, we think there is still a large gap between theory and practice for differentially private 
analytics systems. 


}


\section{Related Work}
\label{sec:related}
There has been much prior work on differentially private convex optimization.
There are three main styles of algorithms --
output perturbation~\cite{BST14, CMS11, JKT12, RBHT09},
objective perturbation~\cite{CMS11, KST12}
and online algorithms~\cite{BST14, DJW13, JKT12, SCS13}.
Output perturbation works by finding the exact convex minimizer
and then adding noise to it, while objective perturbation {\em exactly}
solves a randomly perturbed optimization problem.
Unfortunately, the privacy guarantees provided by both styles often assume that
the exact convex minimizer can be found, which usually does not hold in practice.

There are also a number of online approaches.~\cite{JKT12} provides an online algorithm
for strongly convex optimization based on a {\em proximal algorithm}
(e.g. Parikh and Boyd~\cite{PB14}),
which is harder to implement than SGD.
They also provide an offline version (Algorithm 6) for the strongly convex case
that is similar to our approach.
SGD-style algorithms were provided by~\cite{BST14, DJW13, JKT12, SCS13}.
\red{
There has also been a recent work on deep learning (non-convex optimization)
with differential privacy~\cite{ACGMMT016}. Unfortunately, no convergence result
is known for private non-convex optimization, and they also can only guarantee
$(\varepsilon, \delta)$-differential privacy due to the use of advanced composition
of $(\varepsilon, \delta)$-differential privacy.

Finally, our results are not directly comparable with systems such as RAPPOR~\cite{EPK14}.
In particular, RAPPOR assumes a different privacy model (local differential privacy)
where there is no trusted centralized agency who can compute on the entire raw dataset.
}


\section{Conclusion and Future Work}
\label{sec:conclusion}
Scalable and differentially private convex ERM have each received significant attention in the past decade.
Unfortunately, little previous work has examined the private ERM problem
in scalable systems such as in-RDBMS analytics systems. This paper takes a step to bridge this gap.
There are many intriguing future directions to pursue. We need to better understand 
the convergence behavior of private SGD when only a constant number of passes can be afforded.
BST14~\cite{BST14} provides a convergence bound for private SGD when $O(m)$ passes are made.
SCS13~\cite{SCS13} does not provide a convergence proof; however, the work of~\cite{DJW13},
which considers {\em local differential privacy}, a privacy model where data providers
do not even trust the data collector, can be used to conclude convergence for SCS13,
though at a very slow rate. Finally, while our method converges very well in practice
with multiple passes, we can only prove convergence with {\em one pass}.
Can we prove convergence bounds of our algorithms for multiple-passes and match the bounds of BST14?


{
  \bibliographystyle{abbrv}
  \bibliography{paper}
}

\appendix
\begin{figure*}[!htb]
  \centering
  \subfloat{
    \includegraphics[width=0.24\columnwidth]
    {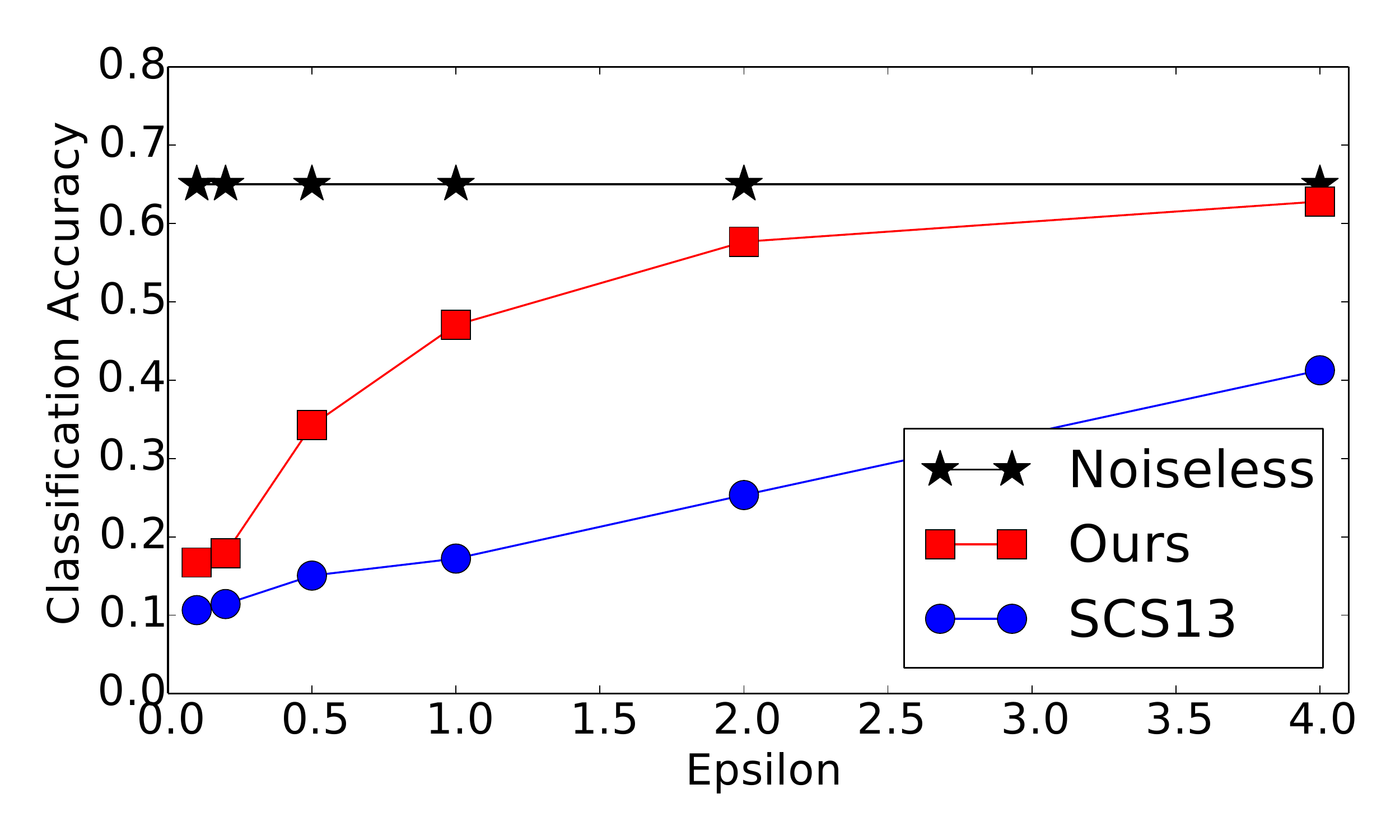}
  }
  \subfloat{
    \includegraphics[width=0.24\columnwidth]
    {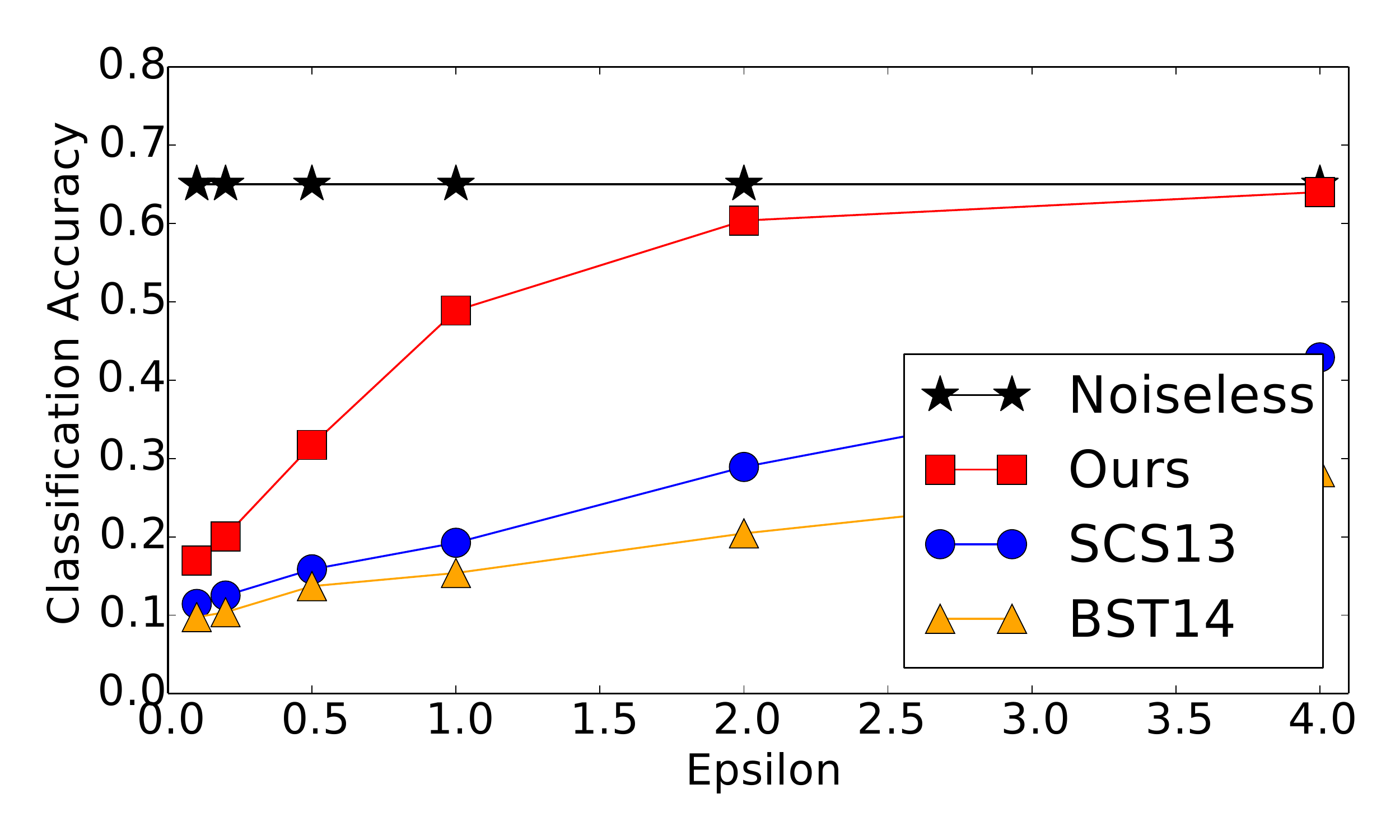}
  }
  \subfloat{
    \includegraphics[width=0.24\columnwidth]
    {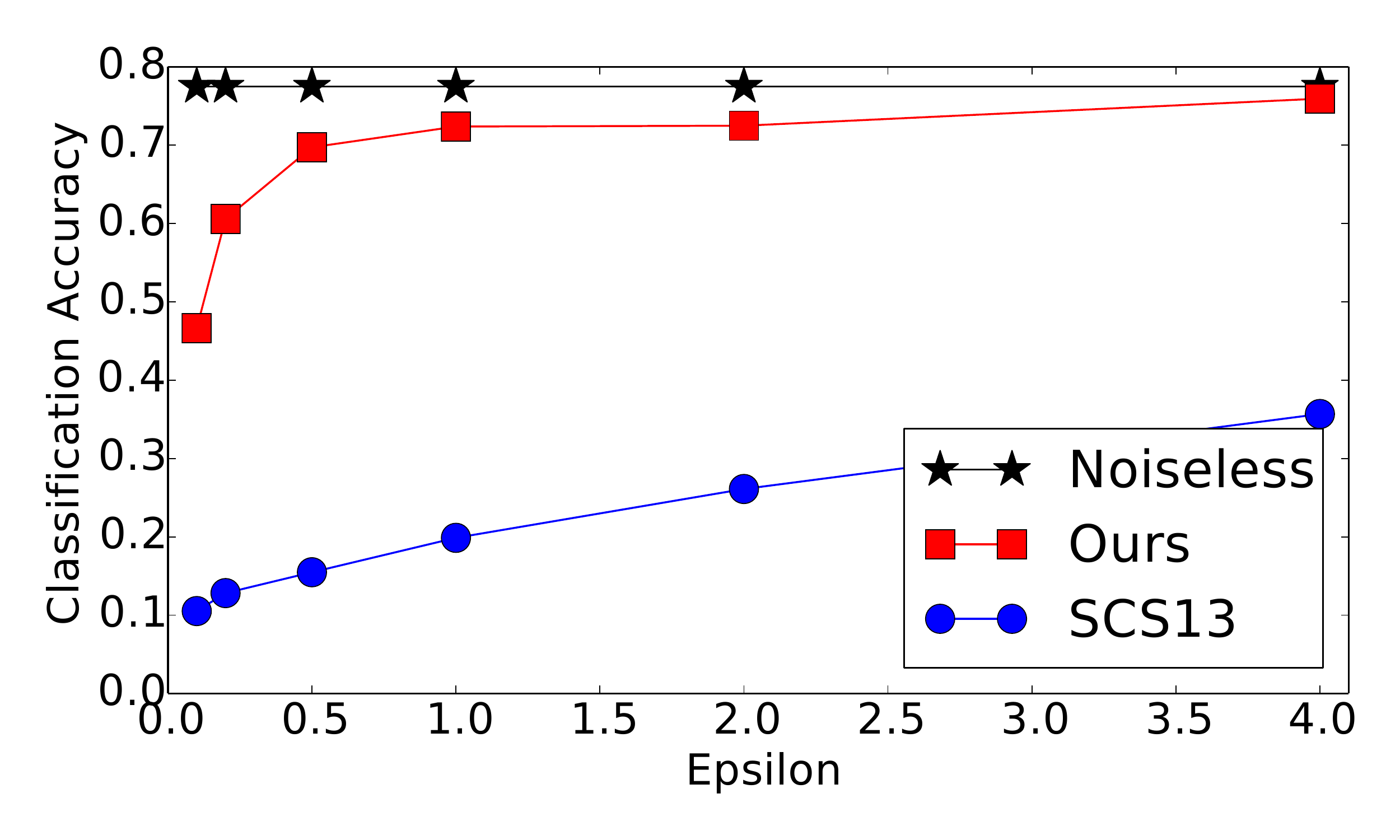}
  }
  \subfloat{
    \includegraphics[width=0.24\columnwidth]
    {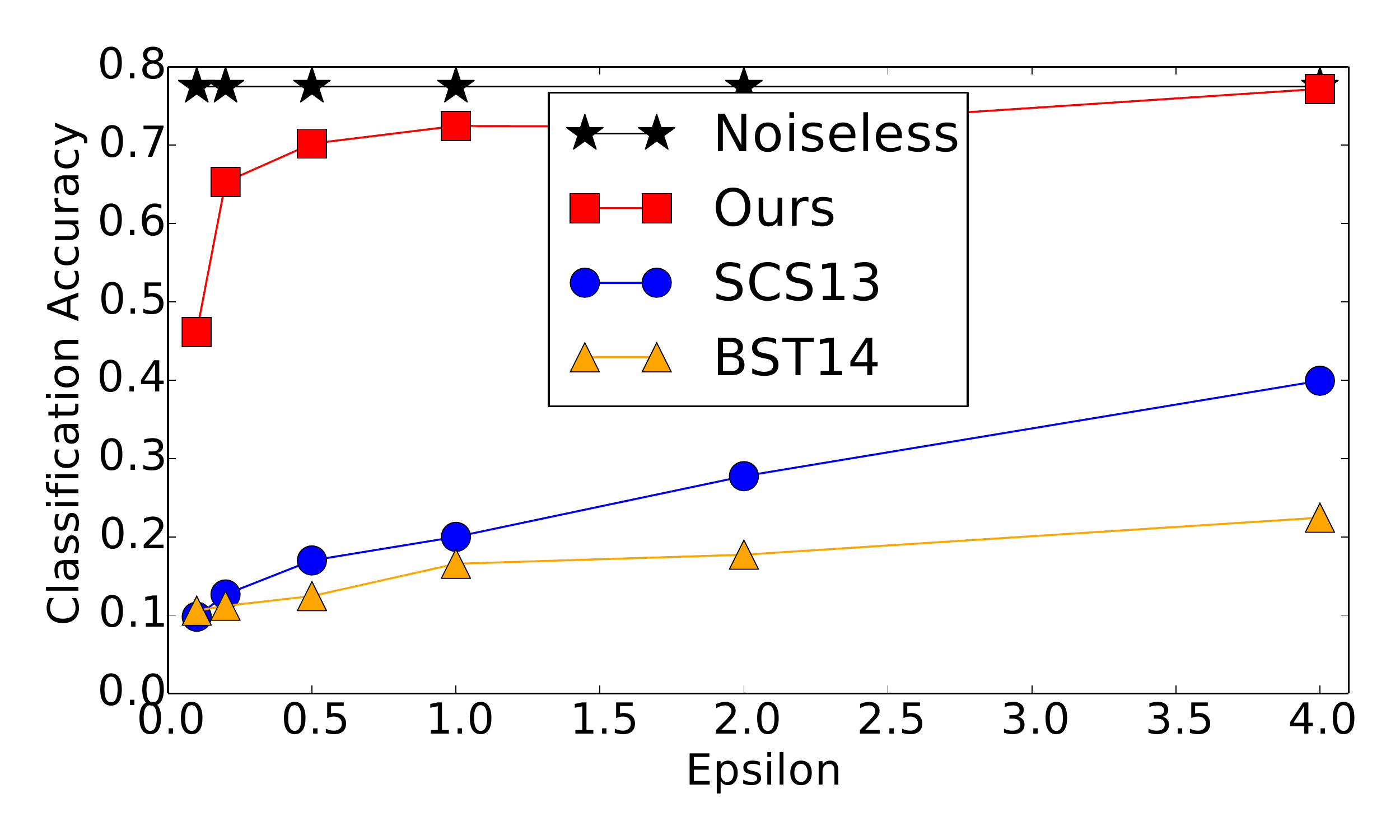}
  } \\[-3ex]
  \subfloat{
    \includegraphics[width=0.24\columnwidth]
    {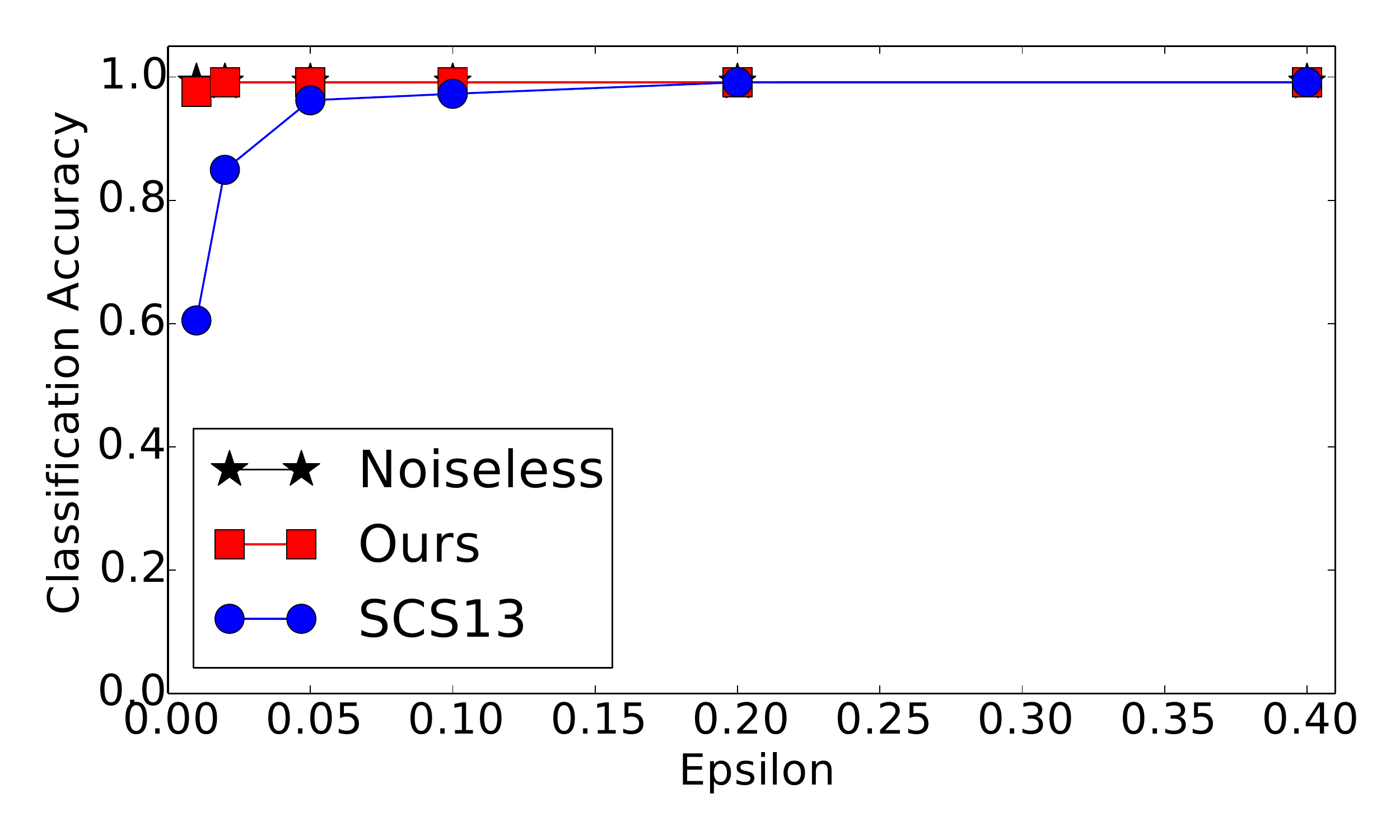}
  }
  \subfloat{
    \includegraphics[width=0.24\columnwidth]
    {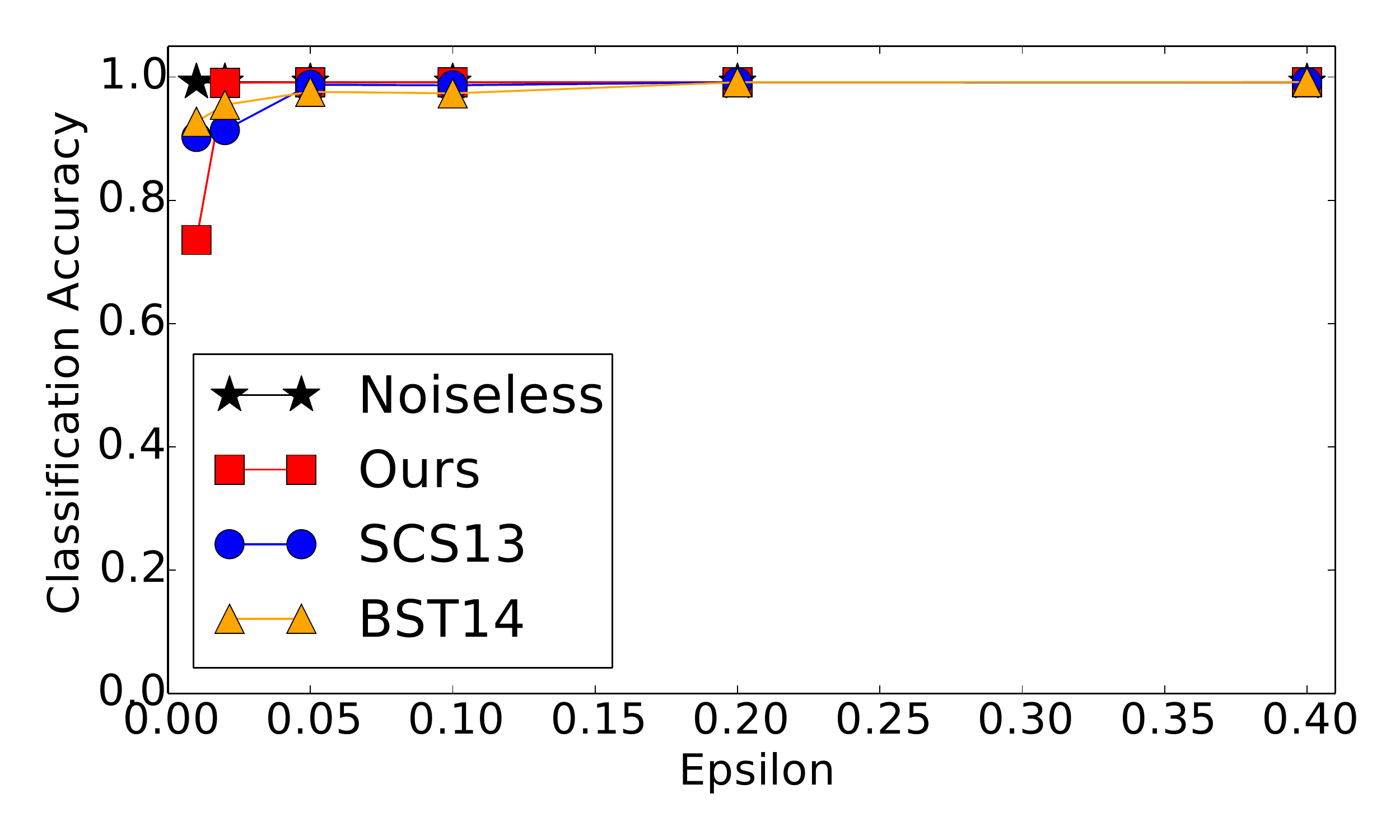}
  }
  \subfloat{
    \includegraphics[width=0.24\columnwidth]
    {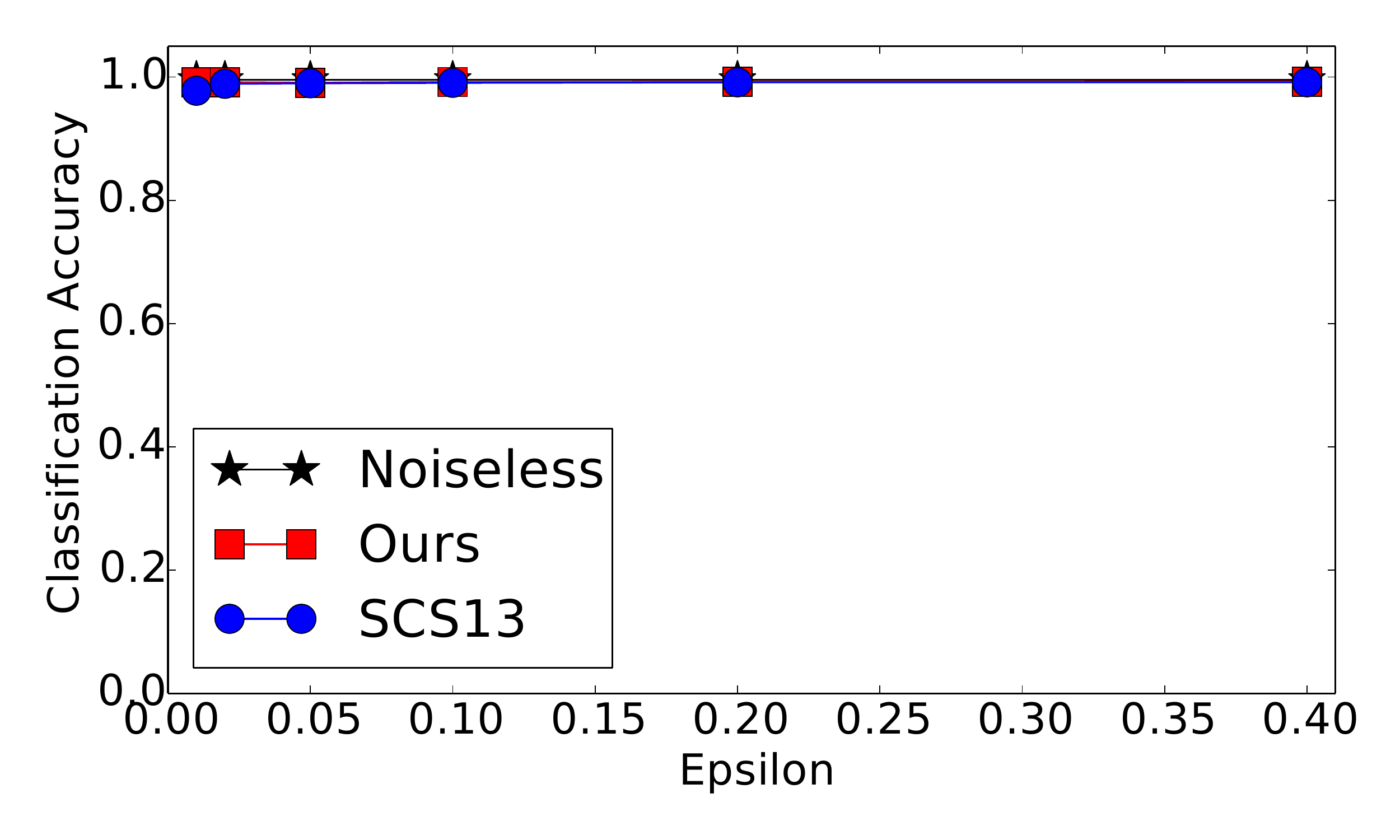}
  }
  \subfloat{
    \includegraphics[width=0.24\columnwidth]
    {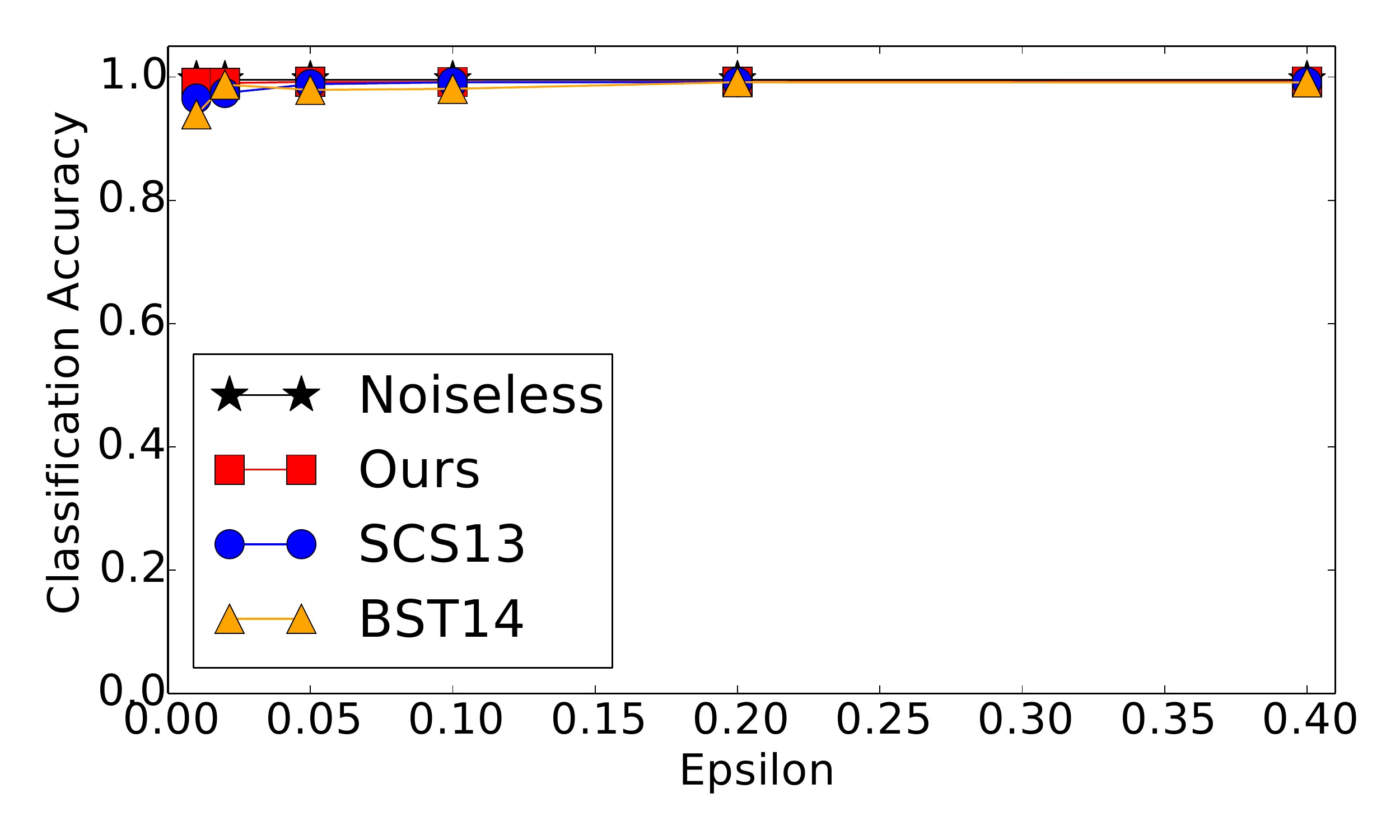}
  } \\[-3ex]
  \subfloat{
    \includegraphics[width=0.24\columnwidth]
    {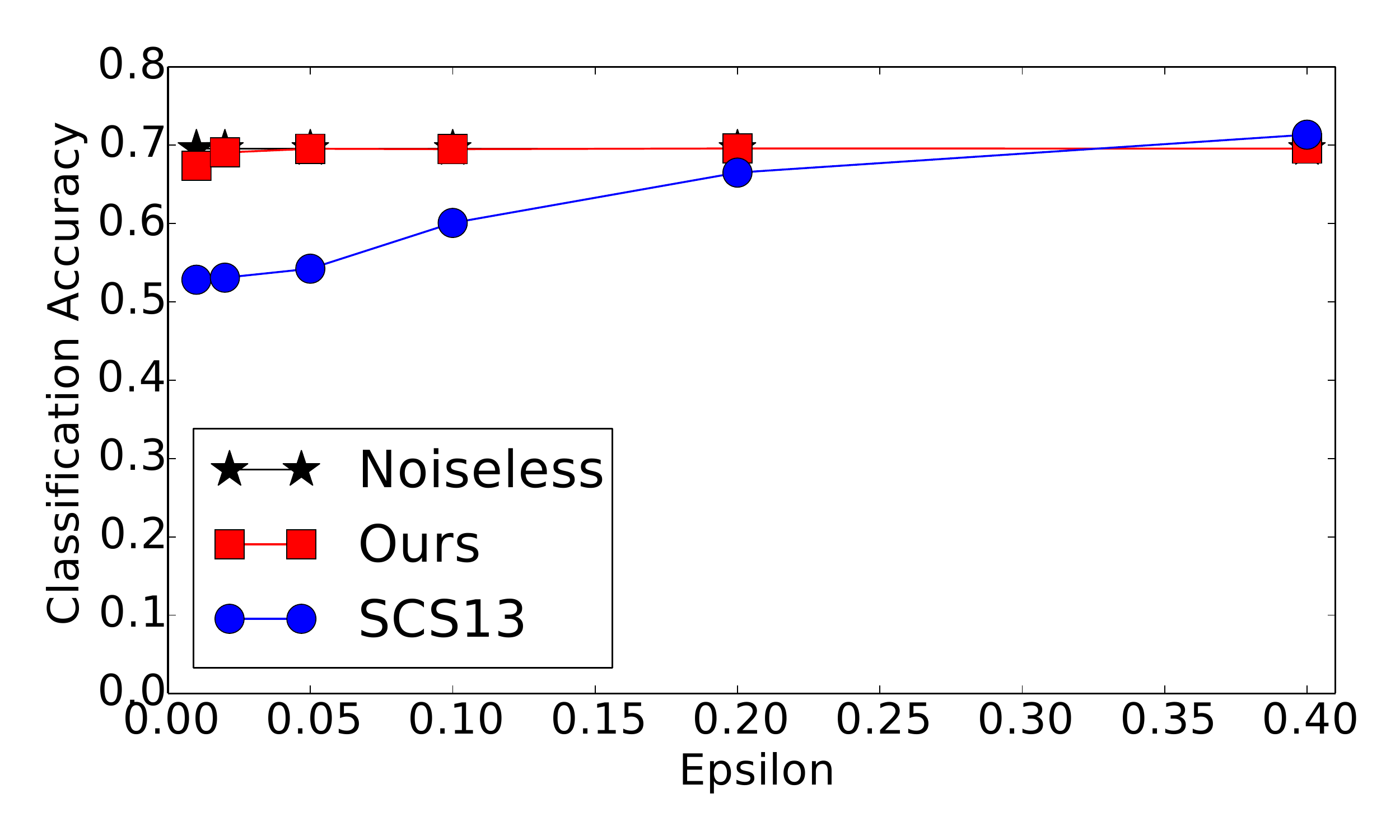}
  }
  \subfloat{
    \includegraphics[width=0.24\columnwidth]
    {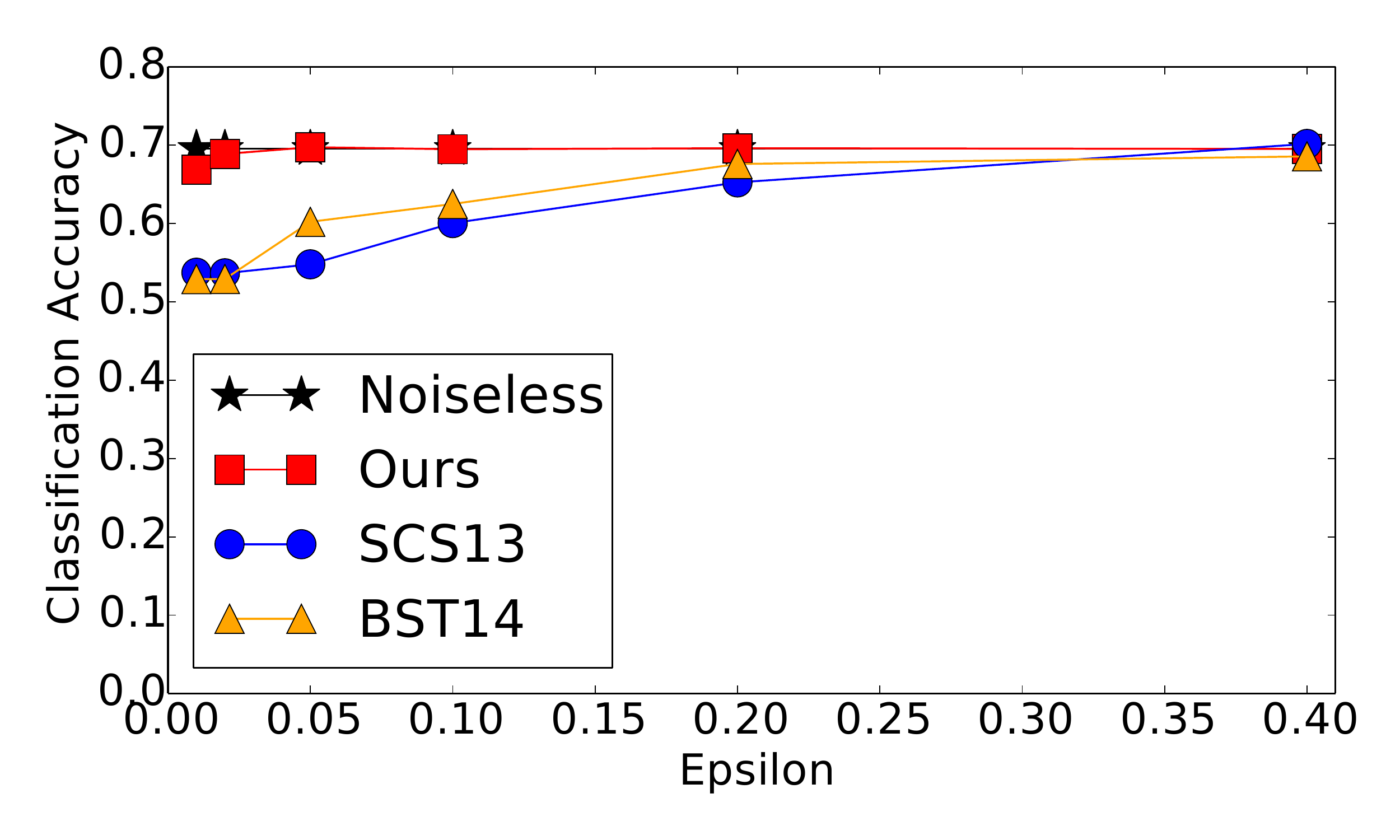}
  }
  \subfloat{
    \includegraphics[width=0.24\columnwidth]
    {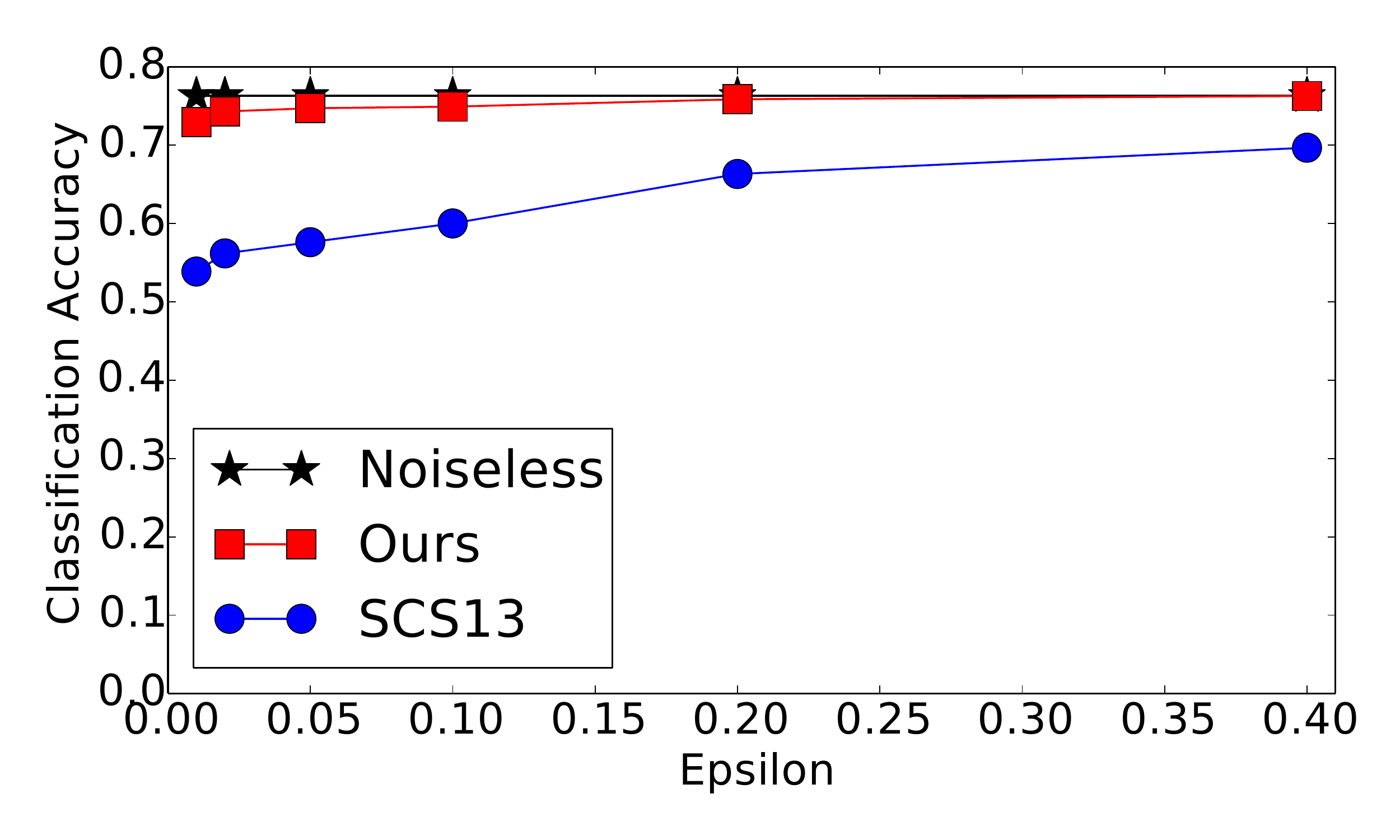}
  }
  \subfloat{
    \includegraphics[width=0.24\columnwidth]
    {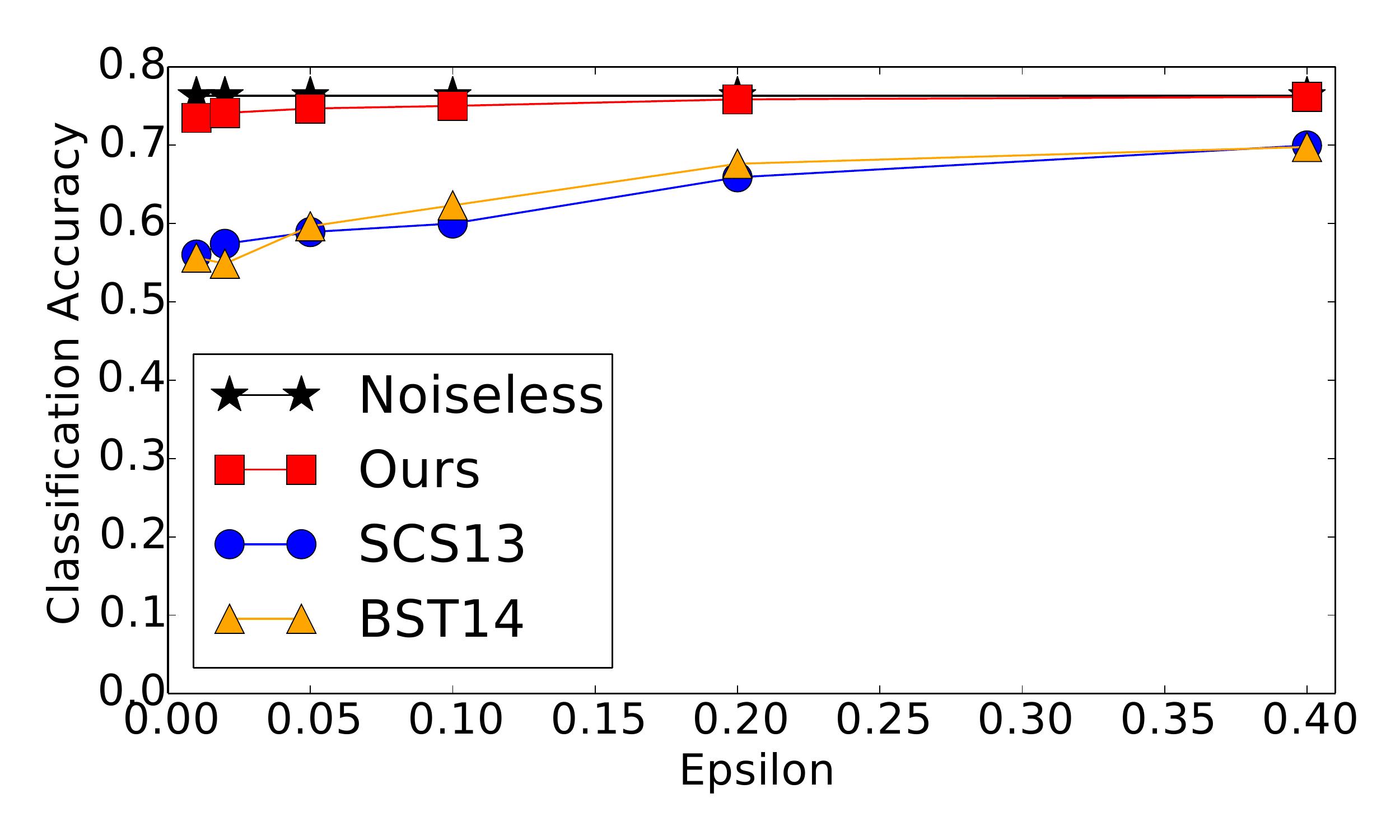}
  }
  \caption{
    {\bf Tuning using a Private Tuning Algorithm for Huber SVM}.
    Row 1 is MNIST, row 2 is Protein and row 3 is Forest Covertype.
    Each row gives the test accuracy results for 4 tests:
    Test 1 is Convex, $\protect (\varepsilon, 0)$-DP,
    Test 2 is Convex, $\protect (\varepsilon, \delta)$-DP,
    Test 3 is Strongly Convex, $\protect (\varepsilon, 0)$-DP,
    and Test 4 is Strongly Convex, $\protect (\varepsilon, \delta)$-DP.
    We compare Noiseless, our algorithms and SCS13 for tests 1 and 3,
    and compare all four algorithms for tests 2 and 4.
    The mini-batch size $b = 50$, and $h=0.1$ for the Huber loss.
    For strongly convex optimization, we set $\protect R = 1/\lambda$,
    otherwise we report unconstrained optimization in the convex case.
    The hyper-parameters were tuned using Algorithm~\ref{alg:private-tuning}
    with a standard ``grid search''  with $2$ values for $k$ ($5$ and $10$)
    and $3$ values for $\lambda$ ($0.0001$, $0.001$, $0.01$), where applicable.
  }
  \label{fig:svm:accuracy:tests_50_mb_10_passes}
\end{figure*}

\begin{figure*}[!htb]
  \centering
  \subfloat{
    \includegraphics[width=0.24\columnwidth]
    {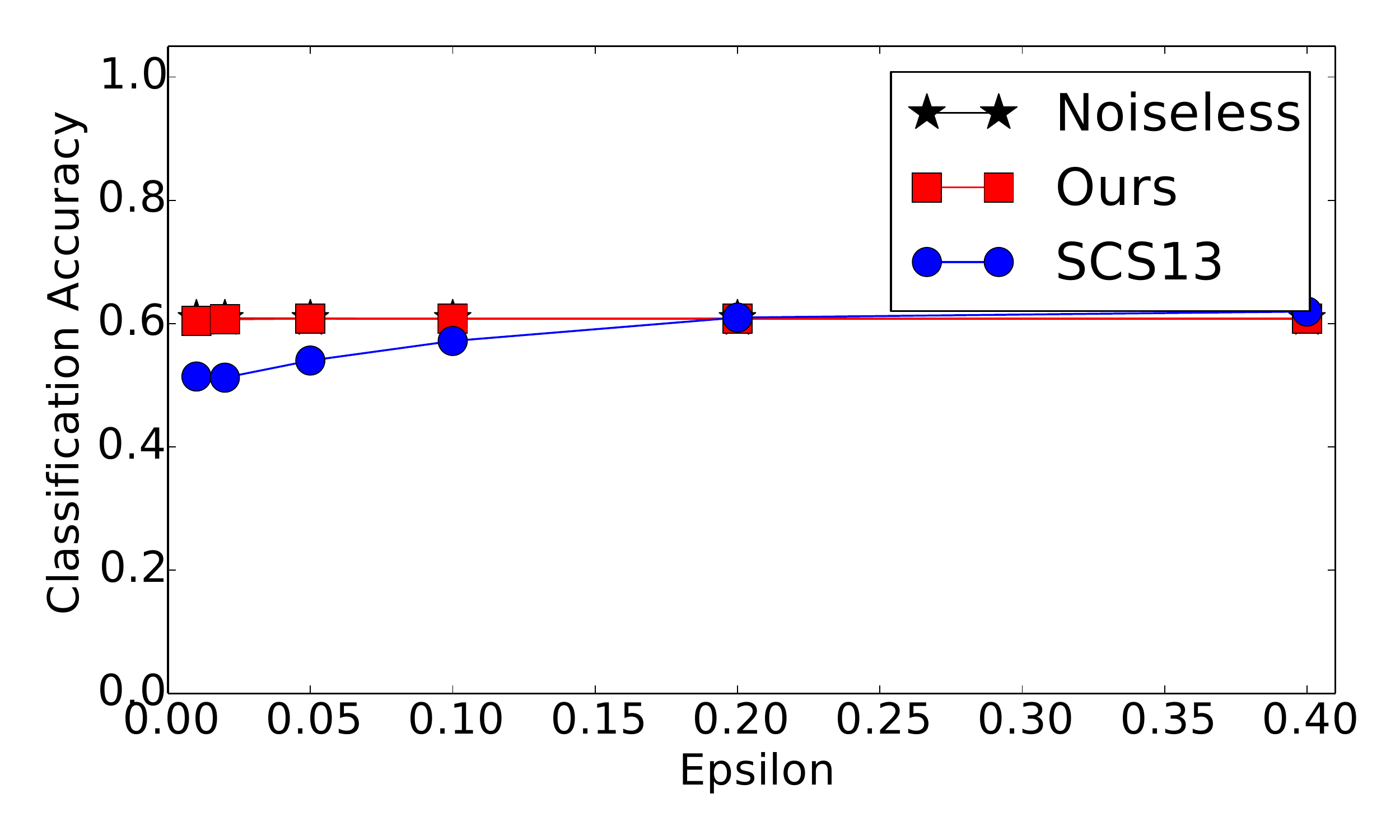}
  }
  \subfloat{
    \includegraphics[width=0.24\columnwidth]
    {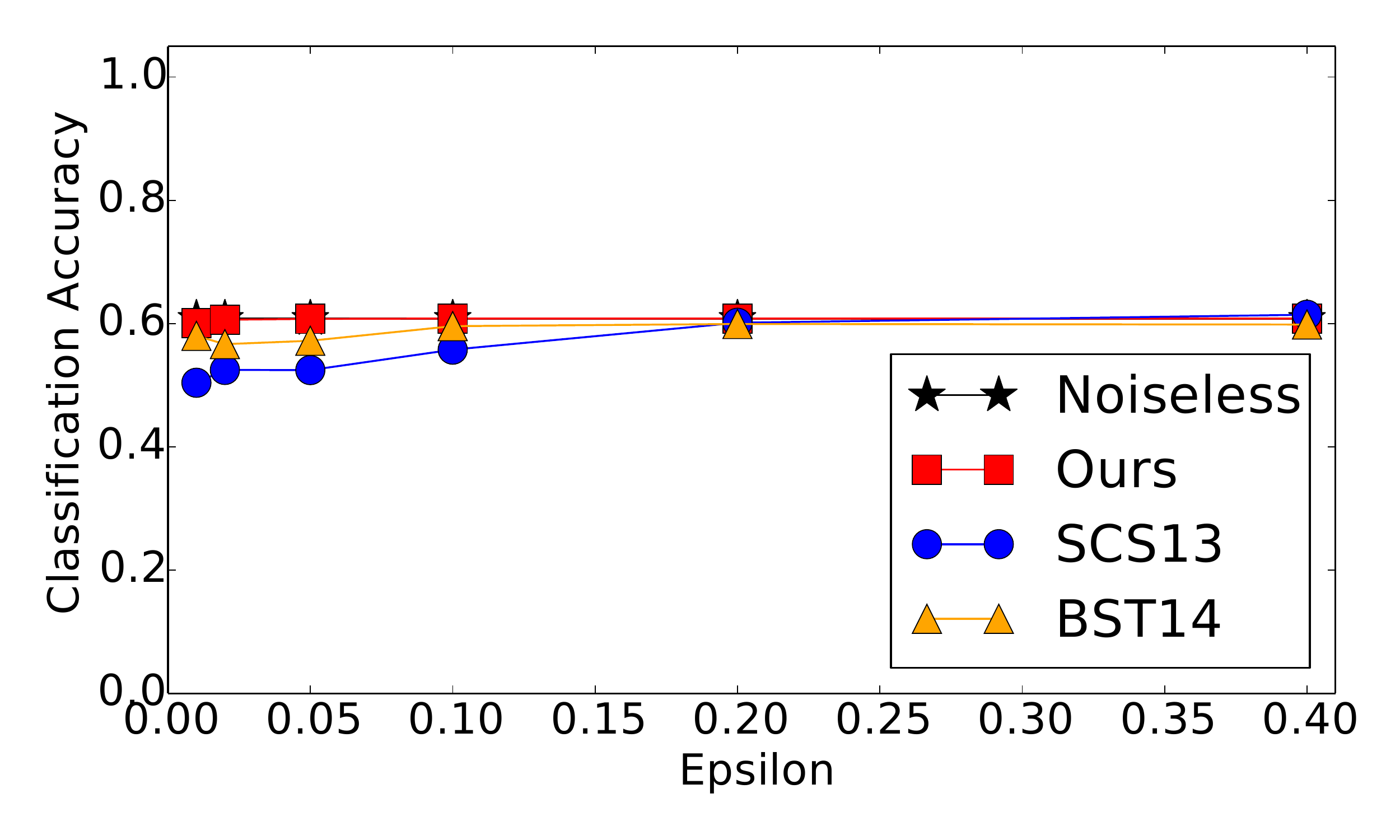}
  }
  \subfloat{
    \includegraphics[width=0.24\columnwidth]
    {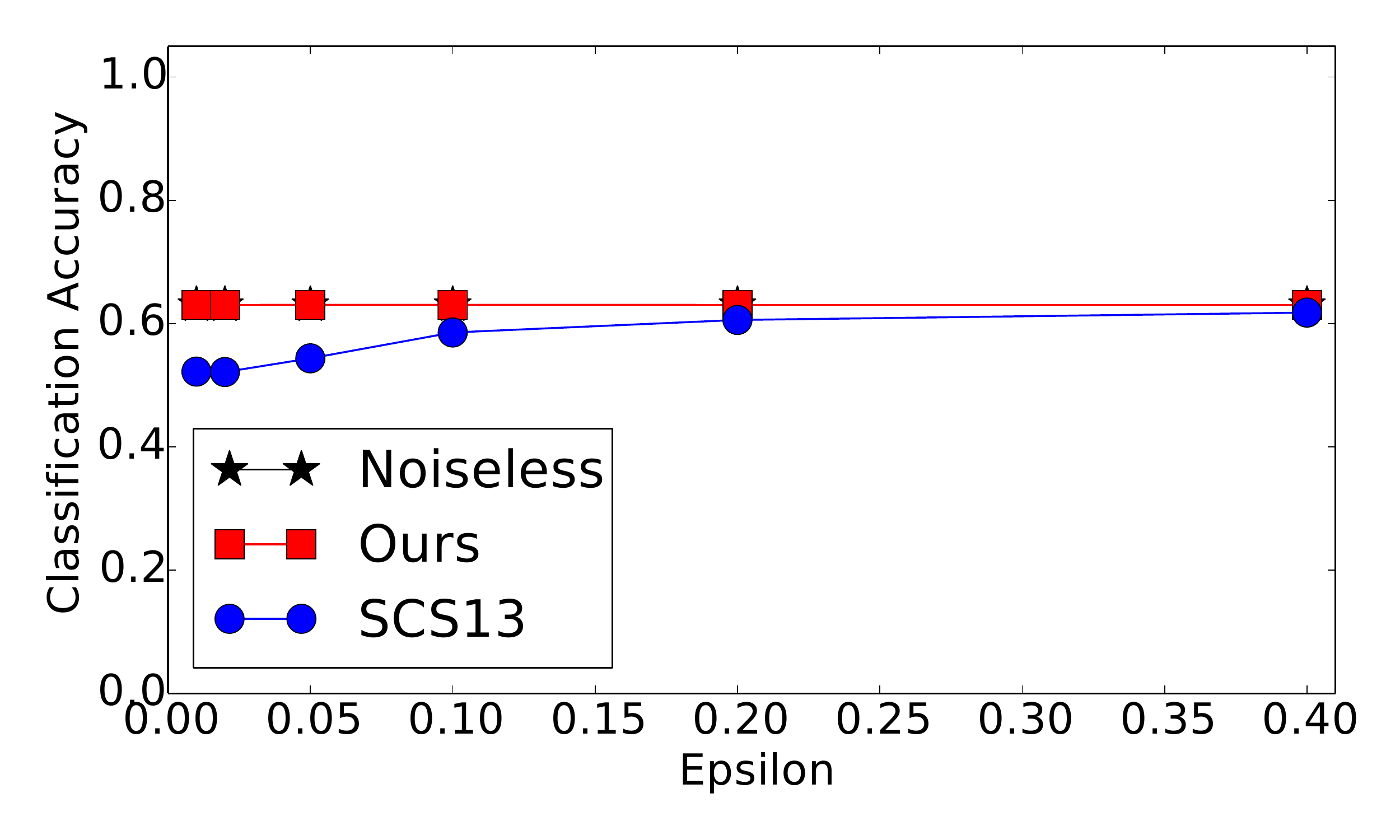}
  }
  \subfloat{
    \includegraphics[width=0.24\columnwidth]
    {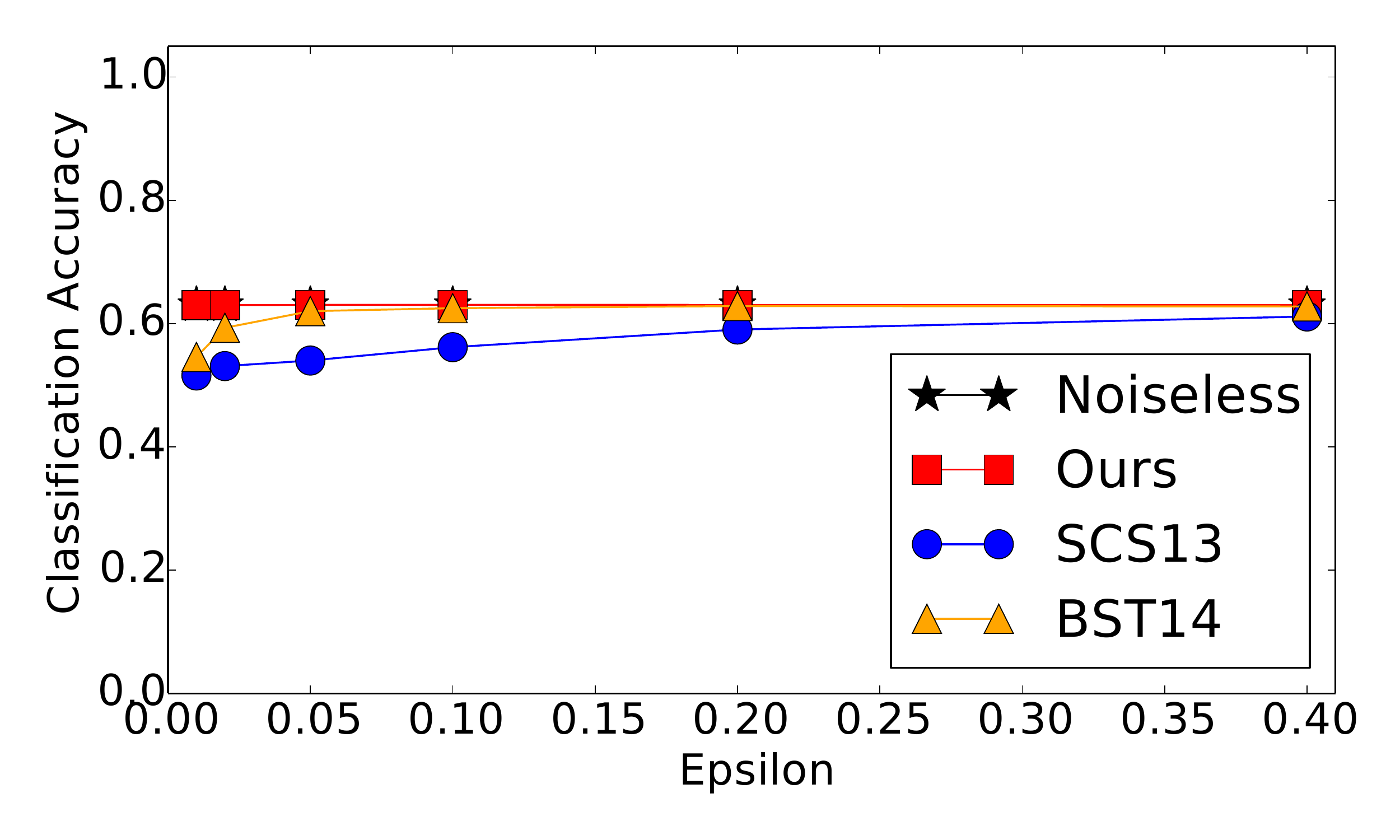}
  }\\[-3ex]
  \subfloat{
    \includegraphics[width=0.24\columnwidth]
    {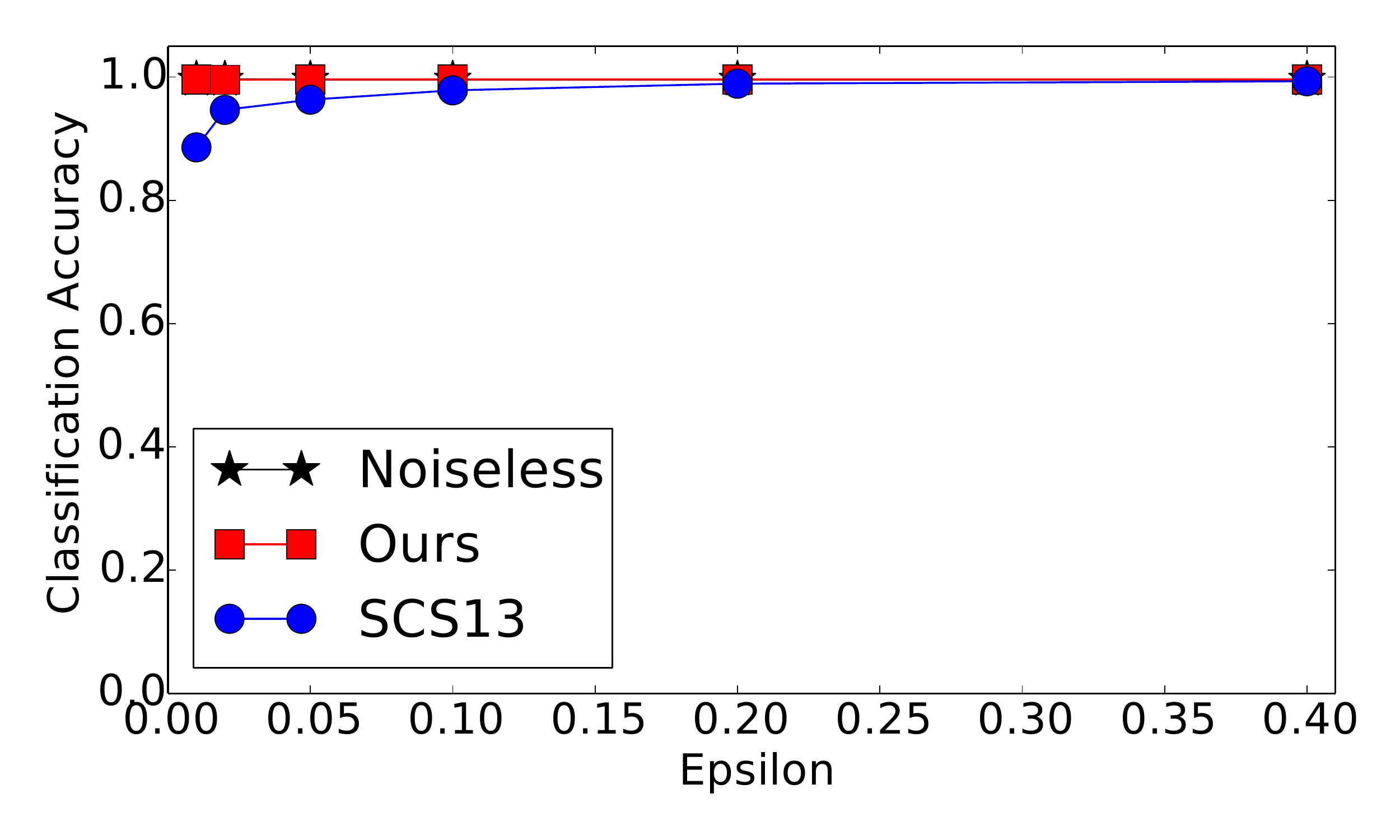}
  }
  \subfloat{
    \includegraphics[width=0.24\columnwidth]
    {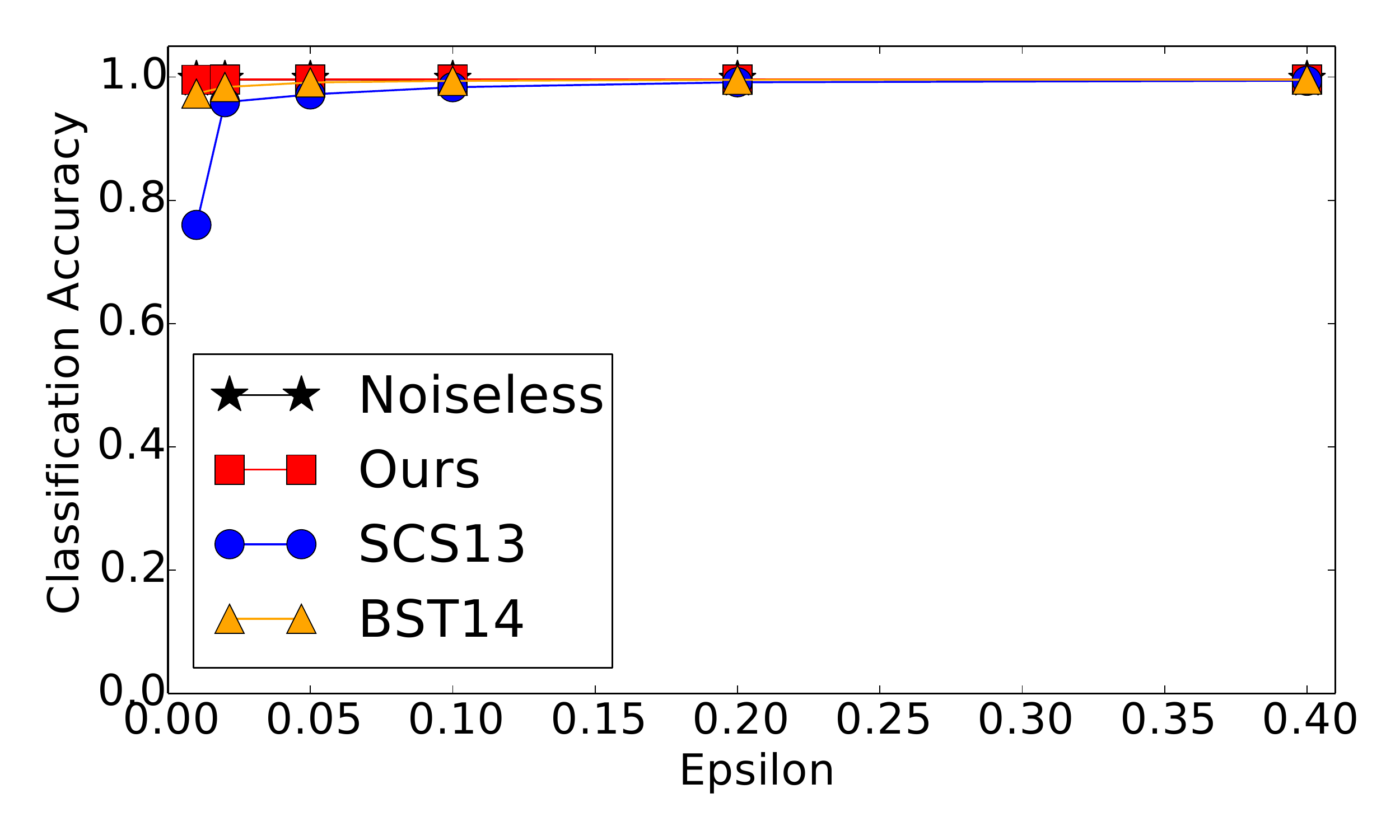}
  }
  \subfloat{
    \includegraphics[width=0.24\columnwidth]
    {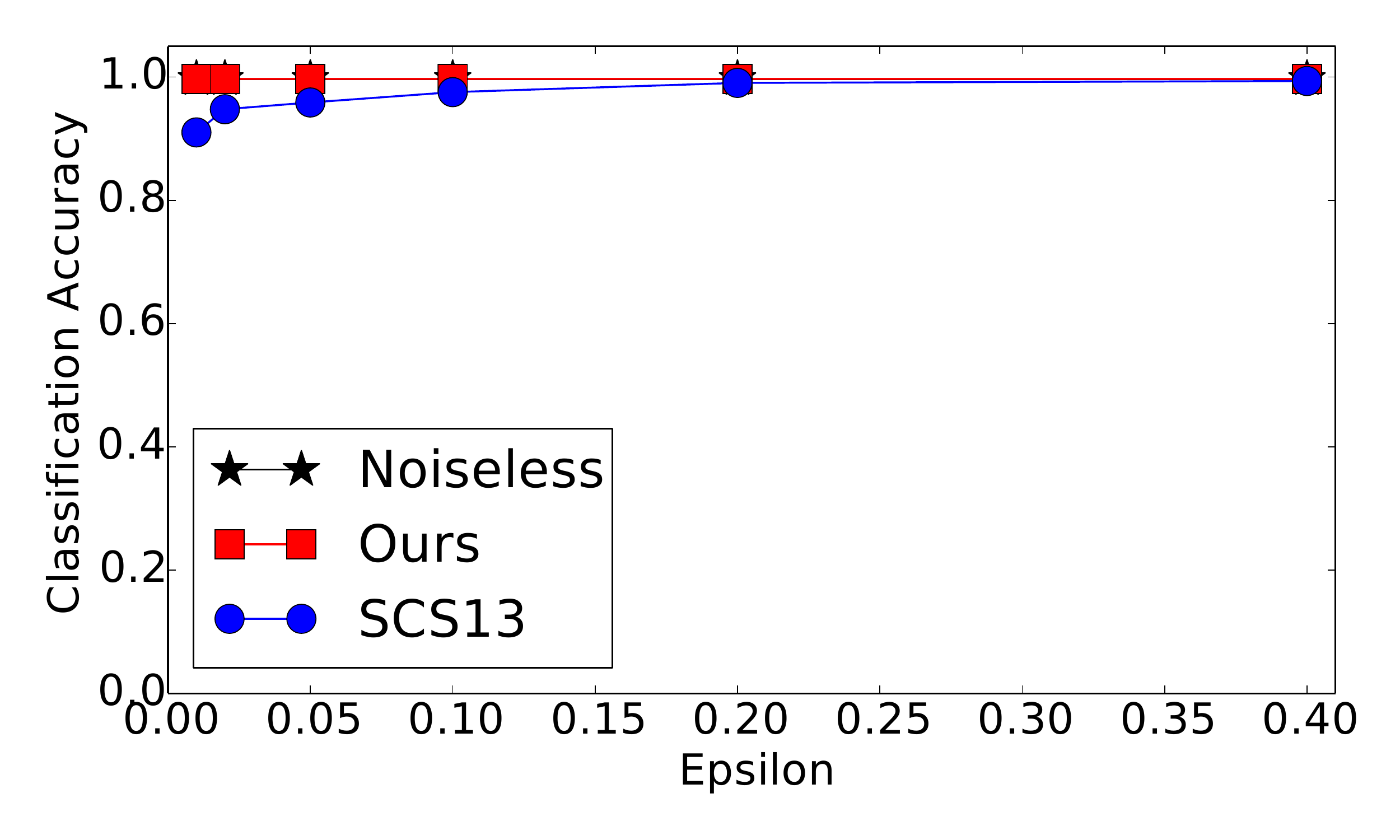}
  }
  \subfloat{
    \includegraphics[width=0.24\columnwidth]
    {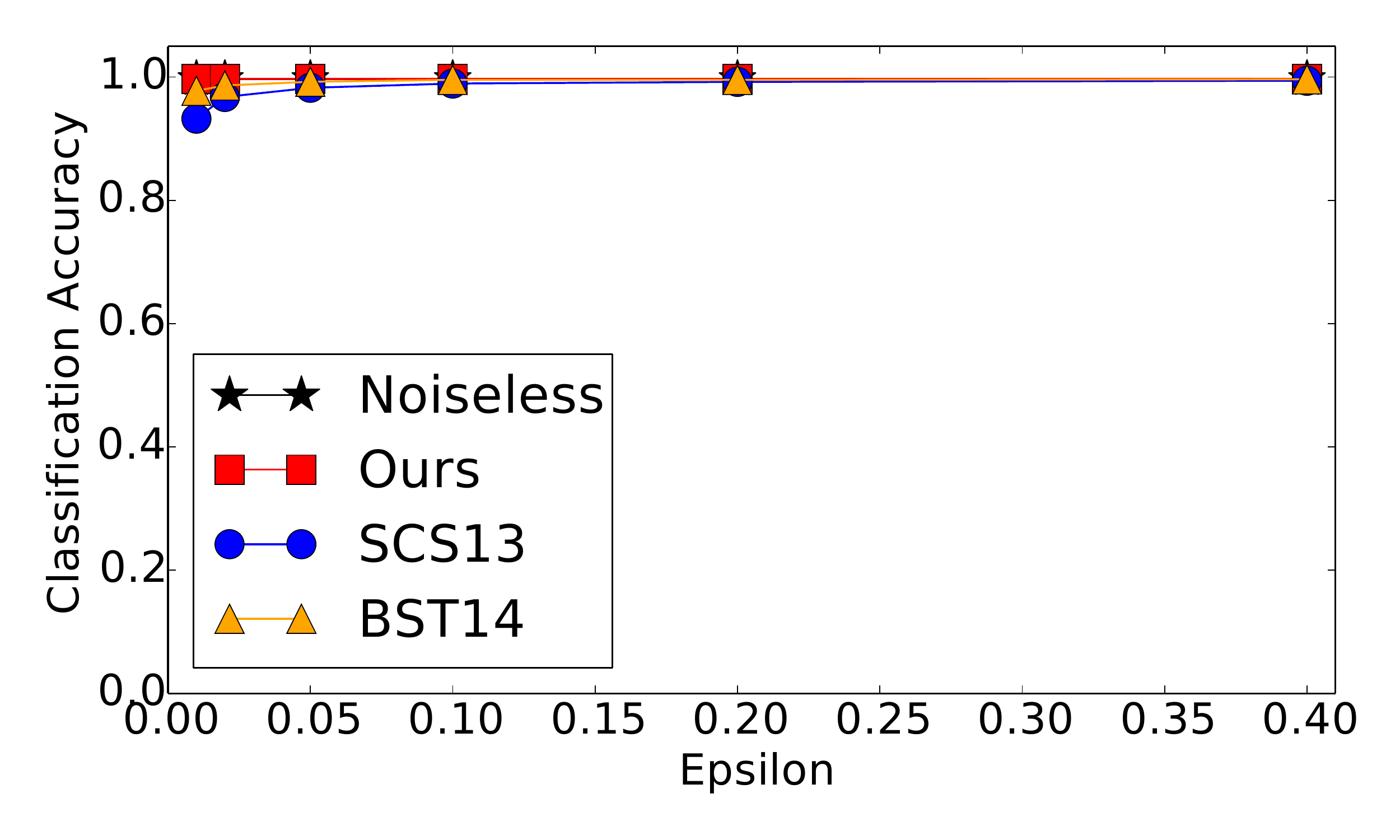}
  }
  \caption{
    {\bf More Accuracy Results of Tuning using Public Data.}
    Row 1 is HIGGS, row 2 is KDDCup-99.
  }
  \label{fig:notune-higgs-kddcup99:tests_50_mb_10_passes}
\end{figure*}

\section{Proofs}
\label{sec:proofs}

\subsection{Proof of Lemma~\ref{lemma:sensitivity:convex}}
\begin{proof}
  Let $T = km$, so we have in total $T$ updates.
  Applying Lemma~\ref{lemma:boundedness},
  Growth Recursion Lemma (Lemma~\ref{lemma:growth-recursion}),
  and the fact that the gradient operators are $1$-expansive, we have:
  \begin{align}
    \delta_t \le
    \begin{cases}
      \delta_{t-1} + 2L\eta_t
      & \begin{array}{@{}l@{}}
          \text{ if } t = i^* + jm, \\
          j = 0,\dots, k-1
        \end{array} \\
      & \\
      \delta_{t-1} & \text{ otherwise.}
    \end{cases}
  \end{align}
  Unrolling the recursion completes the proof.
\end{proof}

\subsection{Proof of Corollary~\ref{lemma:convex:decreasing-step}}
\begin{proof}
  We have that
  \[
    \sup_{S \sim S'}\sup_r \| A(r; S) - A(r; S') \|
    \le \frac{4L}{\beta}\left(\sum_{j=0}^{k-1}\frac{1}{m^c + jm + 1}\right).
  \]
  Therefore
  \begin{align*}
    \frac{4L}{\beta}\left(\sum_{j=0}^{k-1}\frac{1}{m^c + jm + 1}\right)
    =&\ \frac{4L}{\beta}\left(\frac{1}{m^c+1}
                            + \sum_{j=1}^{k-1}\frac{1}{m^c + jm + 1}\right) \\
    \le&\ \frac{4L}{\beta}\left(\frac{1}{m^c}
                            + \frac{1}{m}\sum_{j=1}^{k-1}\frac{1}{j}\right) \\
    \le&\ \frac{4L}{\beta}\left(\frac{1}{m^c} + \frac{\ln k}{m}\right)
  \end{align*}
  as desired.
\end{proof}

\subsection{Proof of Lemma~\ref{lemma:strongly-convex:constant-step}}
\begin{proof}
  Let $T = km$, so we have in total $T$ updates. We have the following recursion
  \begin{align}
    \label{recursion:sensitivity-multipasses}
    \delta_t \le
    \begin{cases}
      (1 - \eta\gamma)\delta_{t-1} + 2\eta L &
      \begin{array}{@{}l@{}}
        \text{ if } t = i^* + jm,\\
        j = 0, 1, \dots, k-1
      \end{array} \\
      & \\
      (1 - \eta\gamma)\delta_{t-1} & \text{ otherwise.}
    \end{cases}
  \end{align}
  This is because at each pass different gradient update operators
  are encountered only at position $i^*$
  (corresponding to the time step $t = i^* + jm$),
  and so the two inequalities directly follow from the growth recursion lemma
  (Lemma~\ref{lemma:growth-recursion}).
  Therefore, the contribution of the differing entry
  in the first pass contributes
  $2\eta L (1 - \eta\gamma)^{T-i^*}$,
  and generalizing this, the differing entry in the $(j+1)$-th pass
  $(j = 0, 1,\dots, k-1)$ contributes $2\eta L (1 - \eta\gamma)^{T-i^*-jm}$.
  Summing up gives the first claimed bound.

  For sensitivity, we note that for $j = 1, 2, \dots, k$,
  the $j$-th pass can only contribute at most
  $2\eta L \cdot (1-\eta\gamma)^{(k-j)m}$ to $\delta_T$.
  Summing up gives the desired result.
\end{proof}

\subsection{Proof of Lemma~\ref{lemma:strongly-convex:decreasing-step}}
\begin{proof}
  From the Growth Recursion Lemma (Lemma~\ref{lemma:growth-recursion}) we know that
  in the $\gamma$-strongly convex case, with appropriate step size,
  in each iteration either we have a contraction of $\delta_{t-1}$,
  or, we have a contraction of $\delta_{t-1}$ plus an additional additive term.
  In PSGD, in each pass the differing data point will only be encountered once,
  introducing an additive term, and is contracted afterwards.

  Formally, let $T$ be the number of updates, the differing data point
  is at location $i^*$. Let $\rho_t < 1$ be the expansion factor at iteration $t$.
  Then the first pass contributes $\delta_1^*\prod_{t=i^*+1}^T\rho_t$ to $\delta_T$,
  the second pass contributes $\delta_2^*\prod_{t=i^*+m+1}^T\rho_t$ to $\delta_T$.
  In general pass $j$ contributes $\delta_j^*\prod_{t=i^*+(j-1)m+1}^T\rho_t$
  to $\delta_T$.

  Let $\iota_j = \delta_j^*\prod_{t=i^*+(j-1)m+1}^T\rho_t$ be the contribution
  of pass $j$ to $\delta_T$. We now figure out $\delta_j^*$ and $\rho_t$.
  Consider $\iota_1$, we consider two cases. If $i^* \ge \frac{\beta}{\gamma}$,
  then $\eta_t \le \frac{1}{\gamma t} \le \frac{1}{\beta}$,
  and so $G_t$ is $(1-\eta_t\gamma) = (1 - \frac{1}{t})$ expansive.
  Thus if $i^* \ge  \frac{\beta}{\gamma}$ then before $i^*$ the gap is $0$
  and after $i^*$ we can apply expansiveness such that
  \[
  \frac{2L}{\gamma t} \cdot \prod_{i=t+1}^{km}
  \left( 1 - \frac{1}{i} \right)
  = \frac{2L}{\gamma t} \cdot \prod_{i=t+1}^{km} \frac{i-1}{i}
  = \frac{2L}{\gamma k m},
  \]
  The remaining case is when $i^* \le \frac{\beta}{\gamma} - 1$.
  In this case we first have $1$-expansiveness due to convexity
  that the step size is bounded by $\frac{1}{\beta} < \frac{2}{\beta}$.
  Moreover we have $(1 - \frac{1}{t})$-expansiveness for $G_t$
  when $\frac{\beta}{\gamma} \le t \le m$. Thus
  \[
  2L\eta_{i^*}
  \cdot
  \prod_{j = \frac{\beta}{\gamma}}^{km}\left(1 - \frac{1}{j}\right)
  \le \frac{2L\eta_{i^*} \beta/\gamma}{km}
  = 2L \cdot \frac{1}{\beta} \cdot \frac{\beta}{\gamma k m}
  = \frac{2L}{\gamma k m},
  \]
  Therefore $\iota_1 \le \frac{2L}{\gamma k m}$.
  Finally, for $j = 2, \dots, k$,
  \begin{align*}
    \iota_j \le
    \frac{2L}{\gamma((j-1)m + i^*)}
    \cdot \prod_{t = (j-1)m + i^* + 1}^{km} \frac{t - 1}{t}
    = \frac{2L}{\gamma k m}.
  \end{align*}
  Summing up gives the desired result.
\end{proof}

\subsection{Proof of Theorem~\ref{thm:Zinkevich-thm1}}
\begin{proof}
  The proof follows exactly the same argument as Theorem 1 of
  Zinkevich~\cite{Zinkevich03}, except we change the step size
  in the final accumulation of errors.
\end{proof}

\subsection{Proof of Theorem~\ref{lemma:convex:private-optimization-error}}
\begin{proof}
  The output of the private PSGD algorithm is $\tilde{w} = \bar{w}_T + \kappa$,
  where $\kappa$ is distributed according to a Gamma distribution
  $\Gamma(d, \frac{\Delta_2}{\varepsilon})$.
  By Lemma~\ref{lemma:convex:constant-step},
  $\Delta_2 \le 2L\eta = \frac{2R}{\sqrt{m}}$.
  Therefore by Lemma~\ref{lemma:error-due-to-privacy},
  $\Exp_\kappa[L_S(\tilde{w}) - L_S(\bar{w}_m)] \le \frac{2dR}{\varepsilon\sqrt{m}}$,
  where we use the fact that the expectation of the Gamma distribution is
  $\frac{d\Delta_2}{\varepsilon}$. Summing up gives the bound.
\end{proof}

\begin{figure*}[!htb]
  \centering
  \subfloat{
    \includegraphics[width=0.24\columnwidth]
    {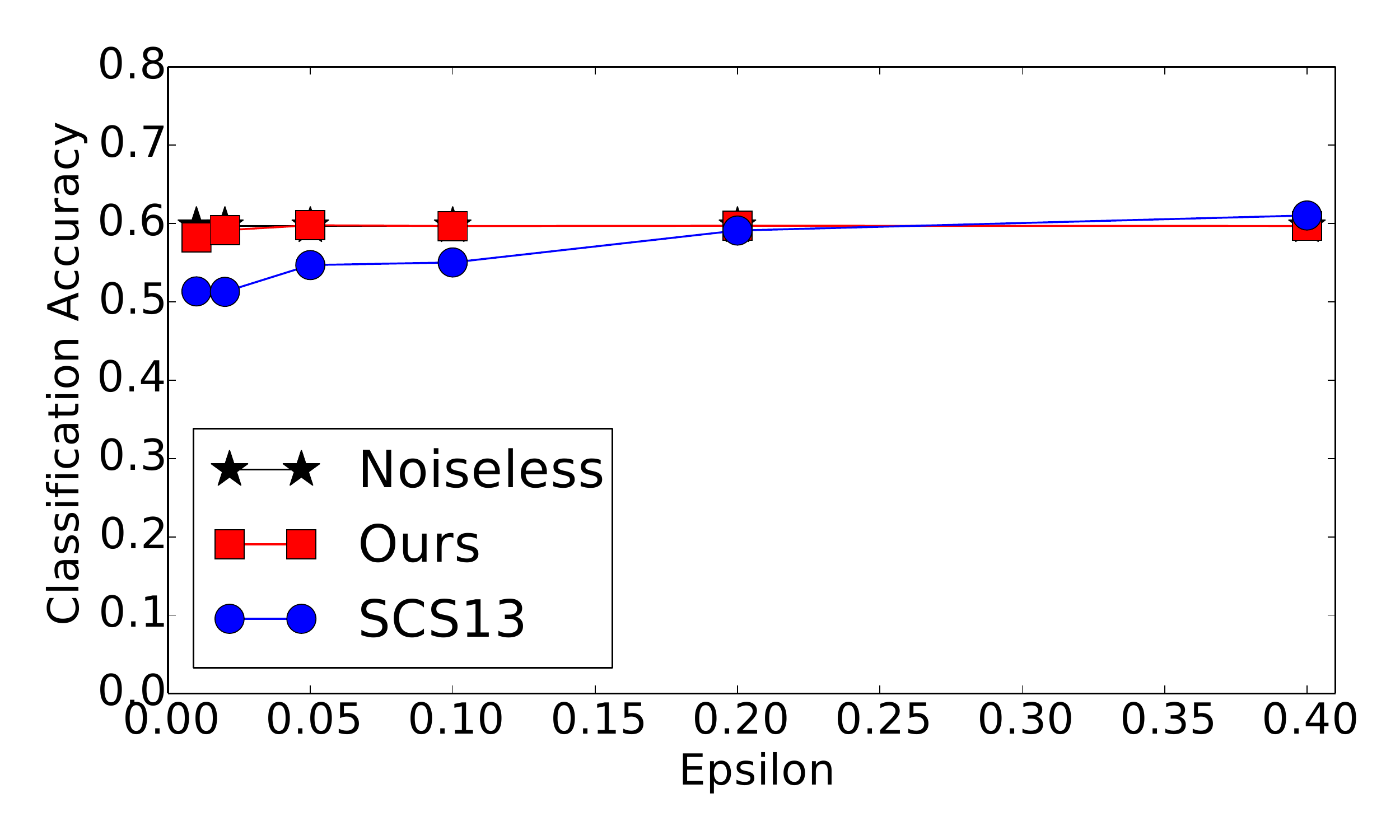}
  }
  \subfloat{
    \includegraphics[width=0.24\columnwidth]
    {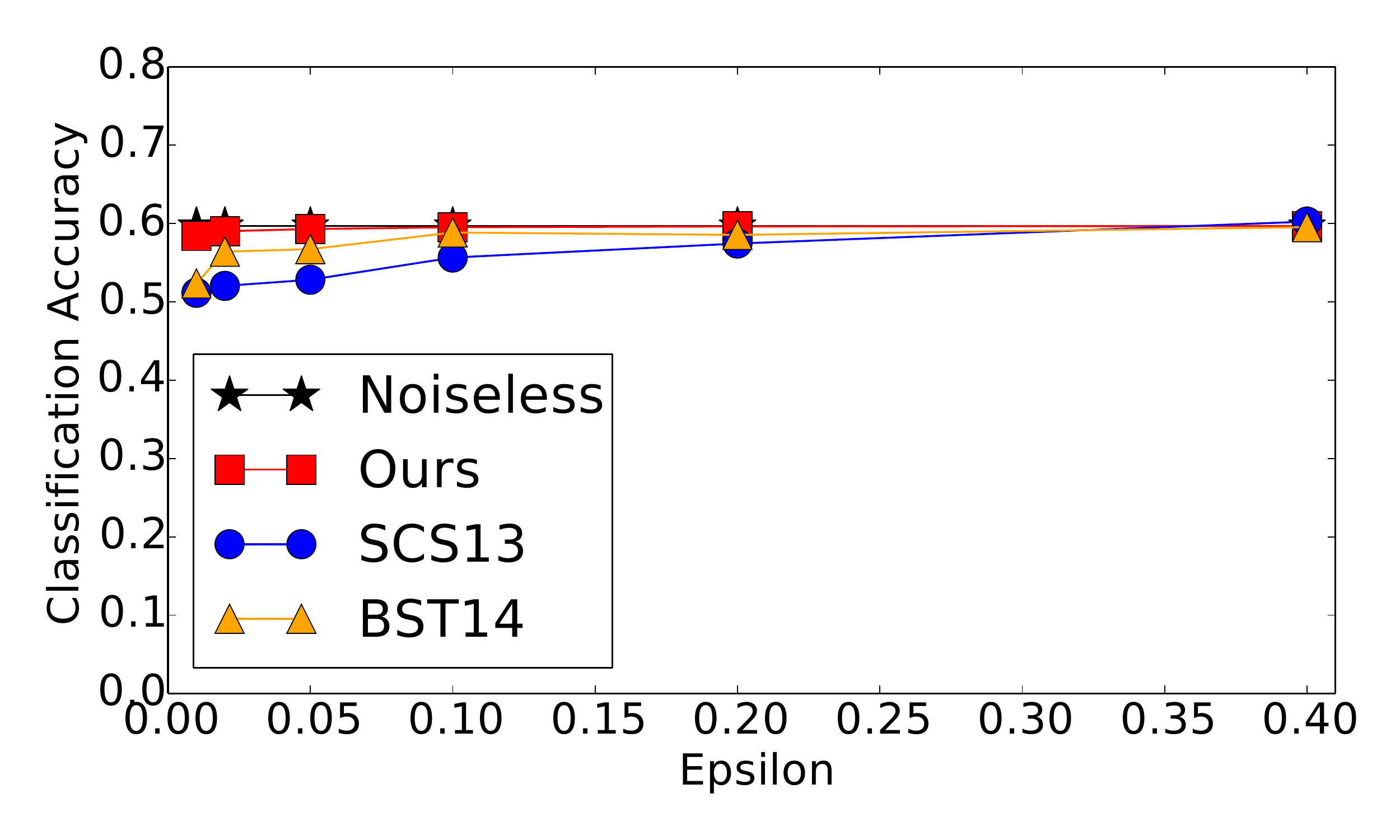}
  }
  \subfloat{
    \includegraphics[width=0.24\columnwidth]
    {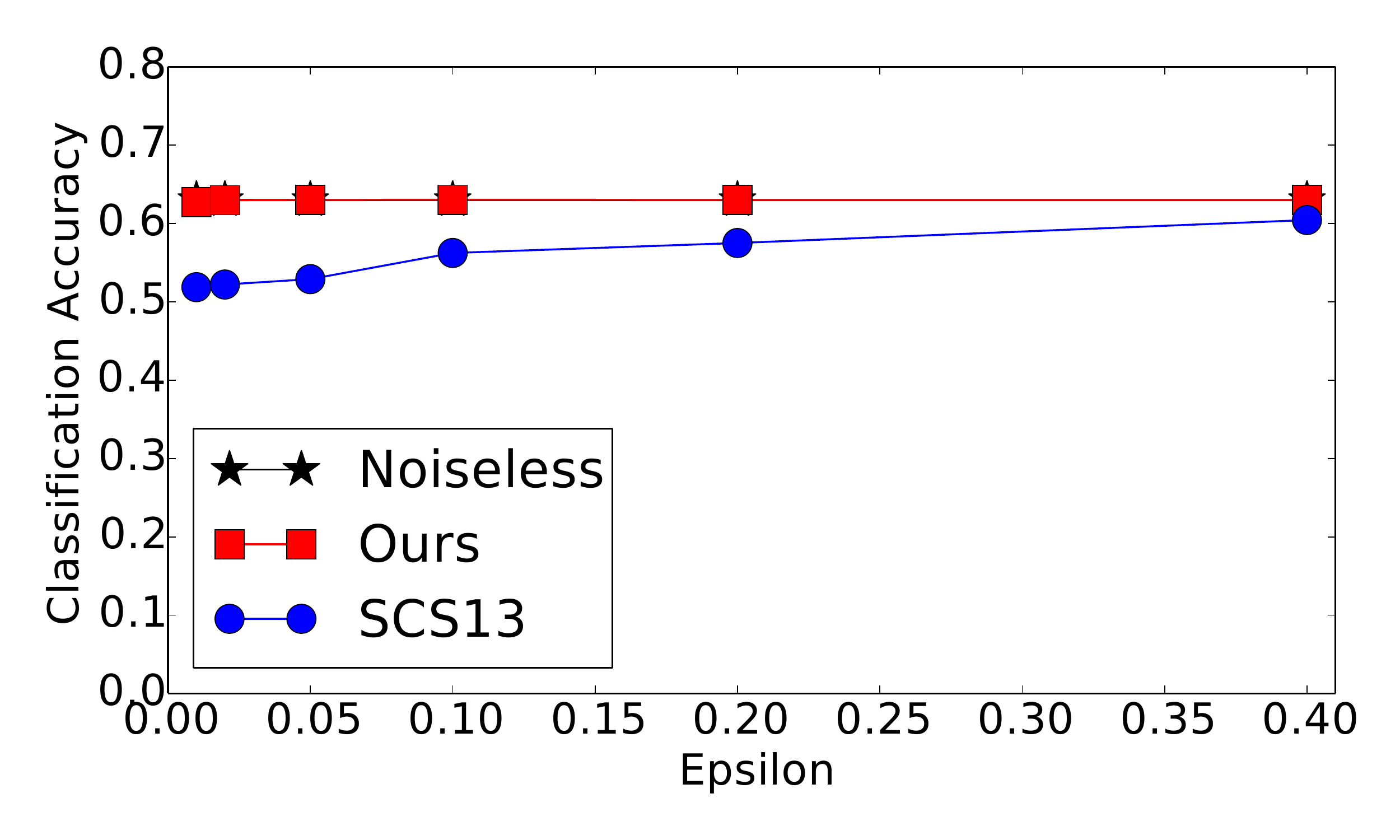}
  }
  \subfloat{
    \includegraphics[width=0.24\columnwidth]
    {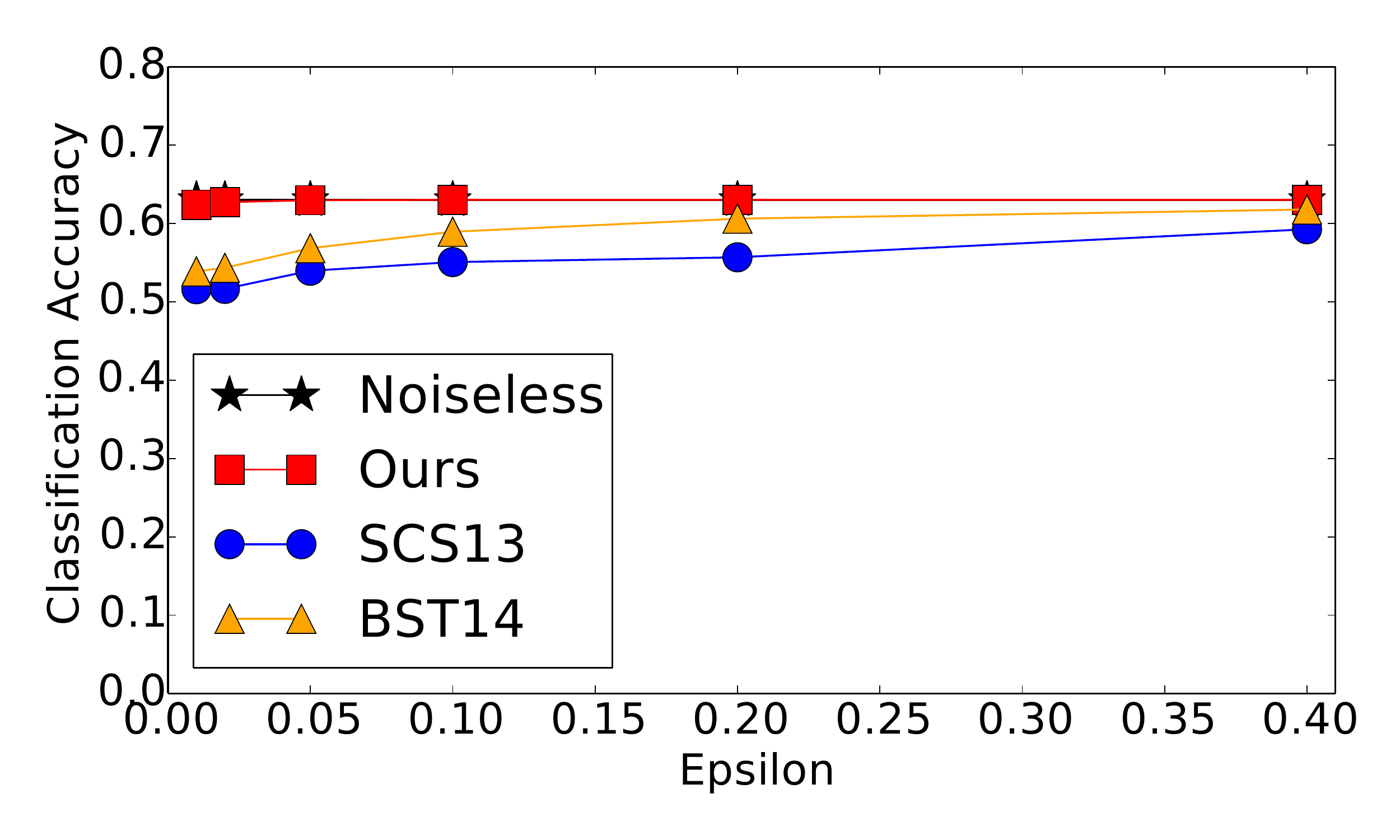}
  }\\[-3ex]
  \subfloat{
    \includegraphics[width=0.24\columnwidth]
    {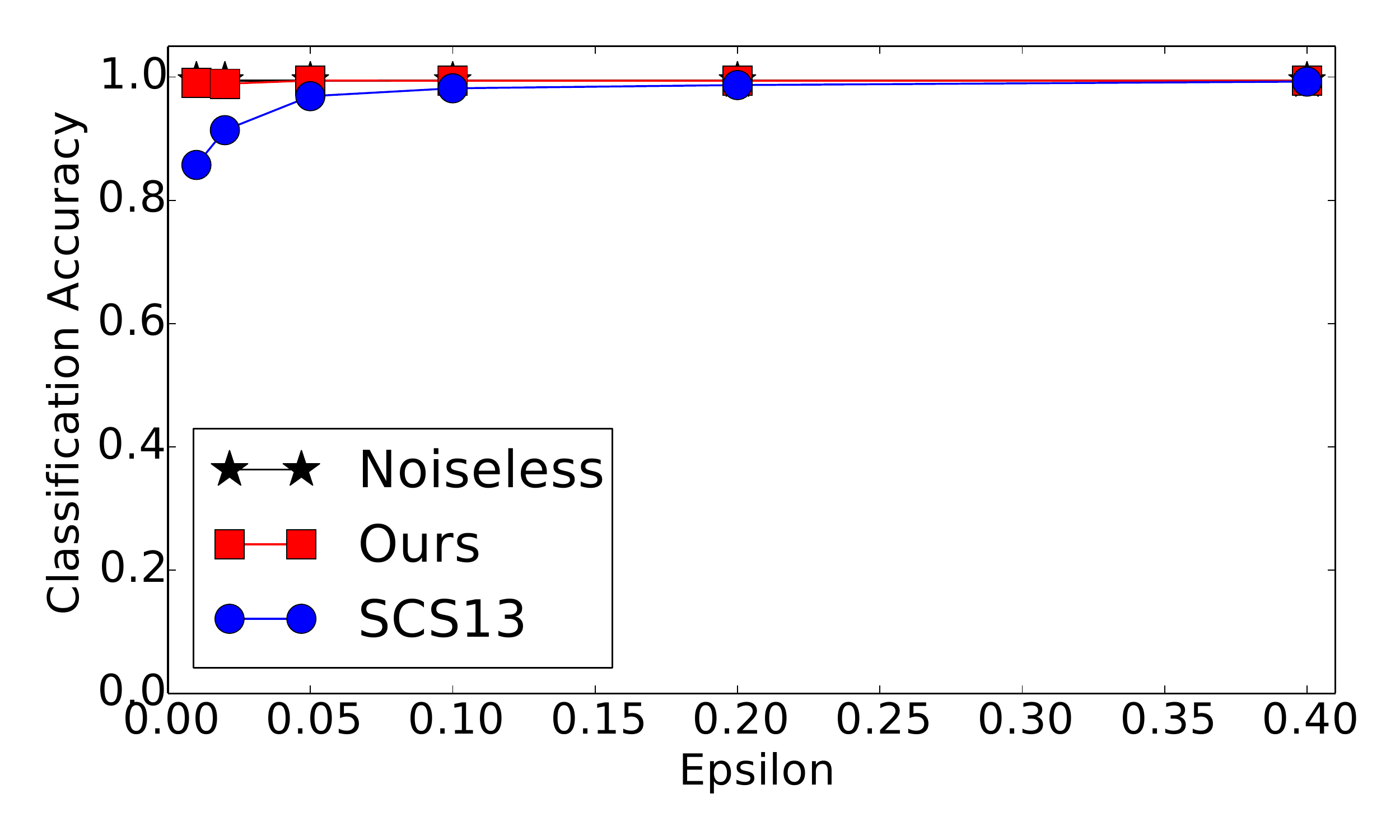}
  }
  \subfloat{
    \includegraphics[width=0.24\columnwidth]
    {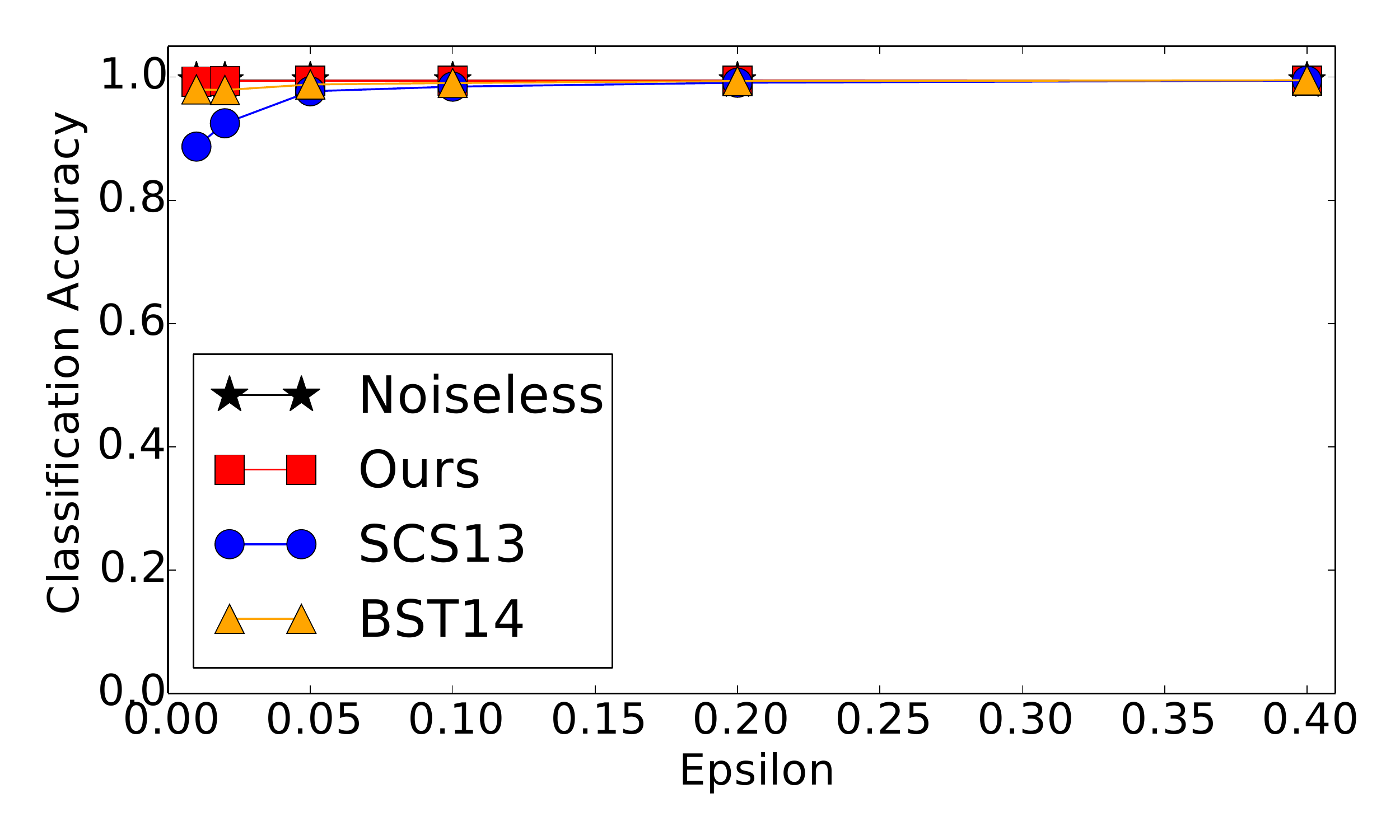}
  }
  \subfloat{
    \includegraphics[width=0.24\columnwidth]
    {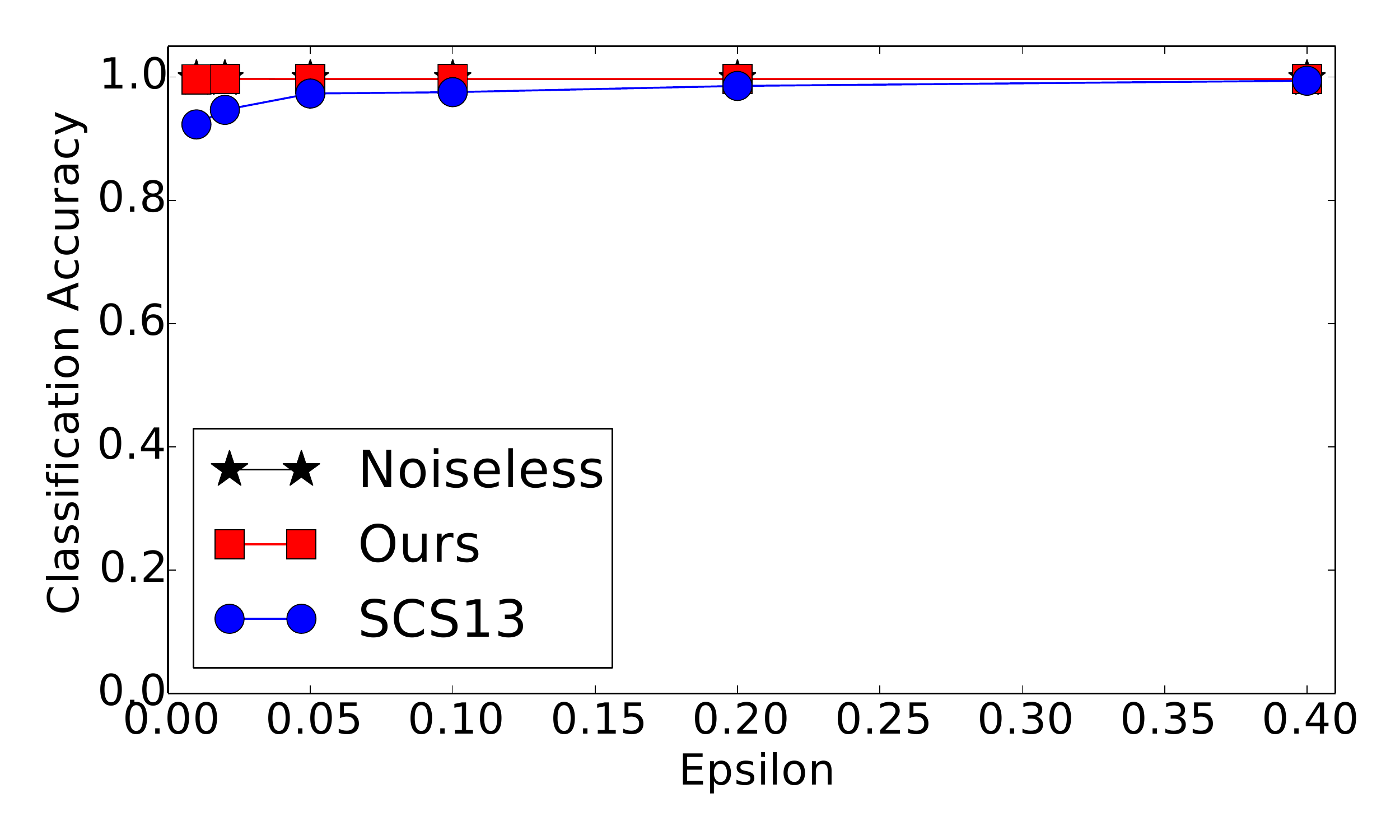}
  }
  \subfloat{
    \includegraphics[width=0.24\columnwidth]
    {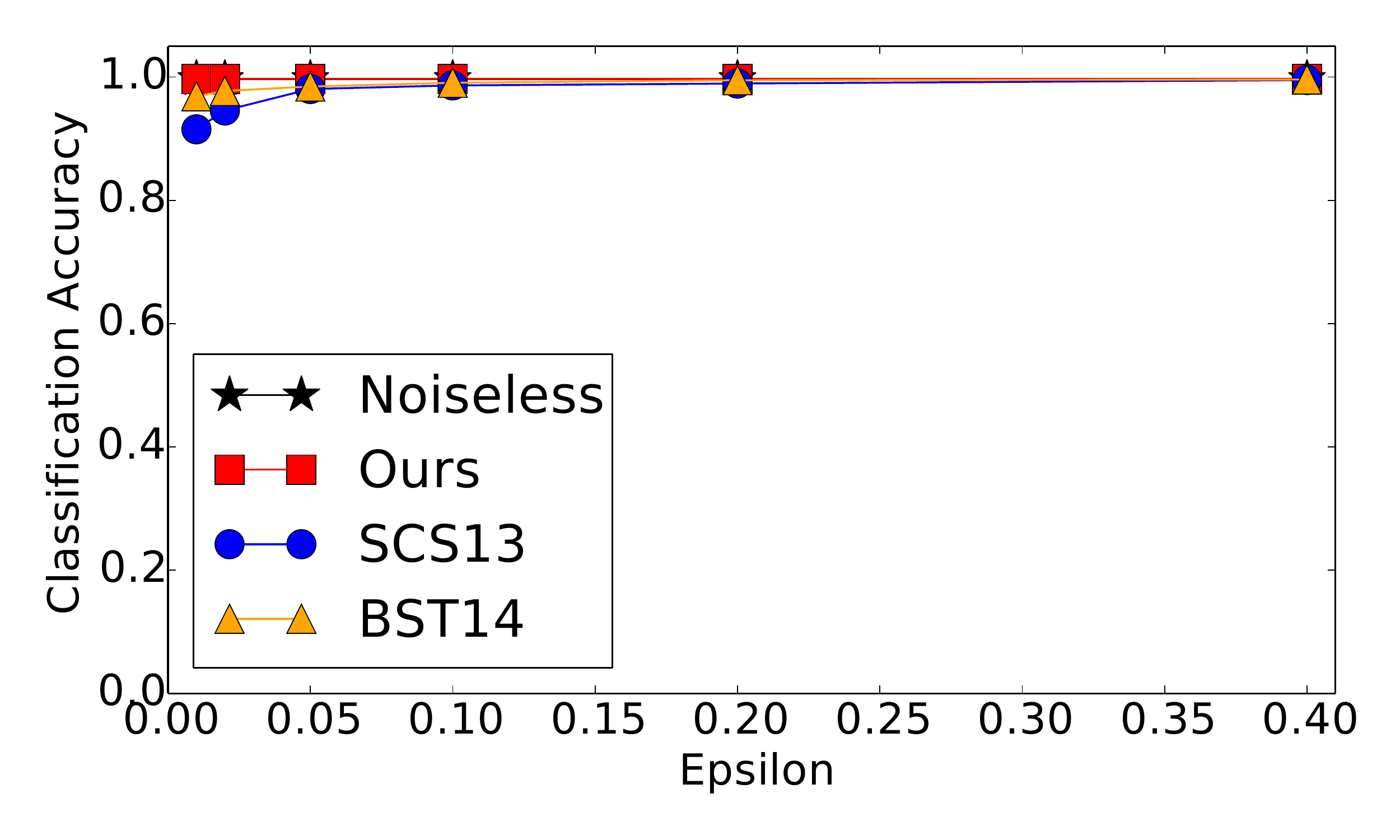}
  }
  \caption{
    {\bf More Accuracy Results with Private Tuning.}
    Row 1 is HIGGS, row 2 is KDDCup-99.
  }
  \label{fig:higgs-kddcup99:tests_50_mb_10_passes}
\end{figure*}

\section{Results with Huber SVM}
\label{sec:results-huber-svm}

In this section we report results on Huber SVM. The standard SVM uses
the {\em hinge loss} function, defined by
$\ell_{\rm SVM}(w, (x, y)) = \max(0, 1 - y\langle w, x\rangle)$,
where $x$ is the feature vector and $y \in \{\pm 1\}$ is the classification label.
However, hinge loss is not differentiable and so our results do not directly apply.
Fortunately, it is possible to replace hinge loss with a differentiable and smooth approximation,
and it works pretty well either in theory or in practice.
Let $z = y\langle w, x \rangle$, we use the following definition from~\cite{CMS11},
\begin{align*}
  \ell_{\rm Huber}(w, (x, y)) =
  \begin{cases}
    0  & \text{ if } z > 1 + h \\
    \frac{1}{4h}(1 + h - z)^2 & \text{ if } |1 - z| \le h \\
    1 - z & \text{ if } z < 1-h
  \end{cases}
\end{align*}
In this case one can show that (under the condition that all point are normalized to unit sphere)
$L \le 1$ and $\beta \le \frac{1}{2h}$ for $\ell_{\rm Huber}$, and our results thus apply.

Similar to the experiments with logistic regression,
we use standard Huber SVM for the convex case,
and Huber SVM regularized by $L_2$ regularizer for the strongly convex case.
Figure~\ref{fig:svm:accuracy:tests_50_mb_10_passes} reports the accuracy results
in the case of tuning with a private tuning algorithm.
Similar to the accuracy results on logistic regression results,
in all test cases our algorithms produce significantly more accurate models.
In particular for MNIST our accuracy is up to {\bf 6X}
better than BST14 and {\bf 2.5X} better than SCS13.

\red{
\section{Test Accuracy Results on Additional Datasets}
\label{sec:additional-datasets}
In this section we report test accuracy results on additional datasets:
HIGGS\footnote{\scriptsize\url{https://archive.ics.uci.edu/ml/datasets/HIGGS}.},
and KDDCup-99\footnote{\scriptsize\url{https://kdd.ics.uci.edu/databases/kddcup99/kddcup99.html}.}.
The test accuracy results of logistic regression are reported
in Figure~\ref{fig:notune-higgs-kddcup99:tests_50_mb_10_passes} for tuning with public data,
and in Figure~\ref{fig:higgs-kddcup99:tests_50_mb_10_passes} for private tuning.
These results further illustrate the advantages of our algorithms:
For large datasets differential privacy comes for free with our algorithms.
In particular, HIGGS is a very large dataset with $m = 10,500,000$ training points,
and this large $m$ reduces the noise to {\em negligible for our algorithms},
where we achieve almost the same accuracy as noiseless algorithms.
However, the test accuracy of SCS13 and BST14 are still {\em notably worse} than
that of the noiseless version, especially for small $\varepsilon$.
We find similar results for Huber SVM.

\section{Accuracy vs. Mini-batch Size}
\label{sec:acc-vs-mbsize}
In Figure~\ref{fig:accuracy-mbsize:mnist} we report more experimental results
when we increase mini-batch size from $50$ to $200$. Specifically we test for four mini-batch sizes,
$50, 100, 150, 200$. We report the test accuracy on MNIST using the strongly convex optimization,
and similar results hold for other optimization and datasets.
Encouragingly, we achieve almost native accuracy as we increase the mini-batch size.
On the other hand, while the accuracy also increases for SCS13 and BST14 for larger mini-batch sizes,
their accuracy is still significantly worse than our algorithms and noiseless algorithms.

\section{Sampling Laplace Noise}
\label{sec:sampling-laplace}
We discuss briefly how to sample from (\ref{output-perturbation:noise-distribution}).
We are given dimension $d$, $L_2$-sensitivity $\Delta$ and privacy parameter $\varepsilon$.
In the first step, we sample a uniform vector in the unit ball, say $\bf v$
(this can be done, for example, by a trick described in~\cite{sample-uniform-random-unit-vector}).
In the second step we sample a magnitude $l$ from Gamma distribution $\Gamma(d, \Delta/\varepsilon)$,
which can be done, for example, via standard Python API (np.random.gamma).
Finally we output $\kappa = l{\bf v}$. The same algorithm is used in~\cite{CMS11}.
}

\begin{figure*}[!htb]
  \centering
  \subfloat[$b=50$]{
    \includegraphics[width=0.24\columnwidth]
    {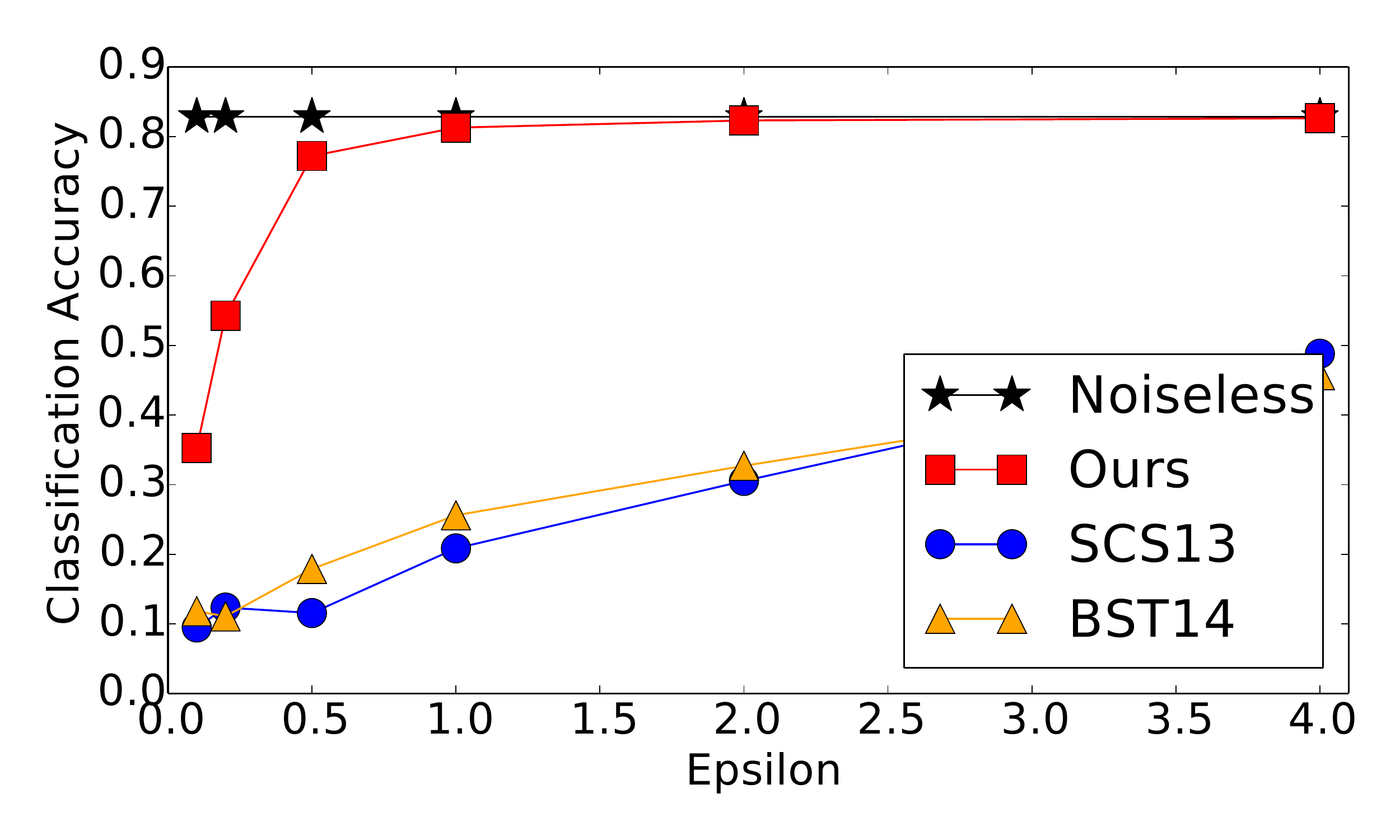}
  }
  \subfloat[$b=100$]{
    \includegraphics[width=0.24\columnwidth]
    {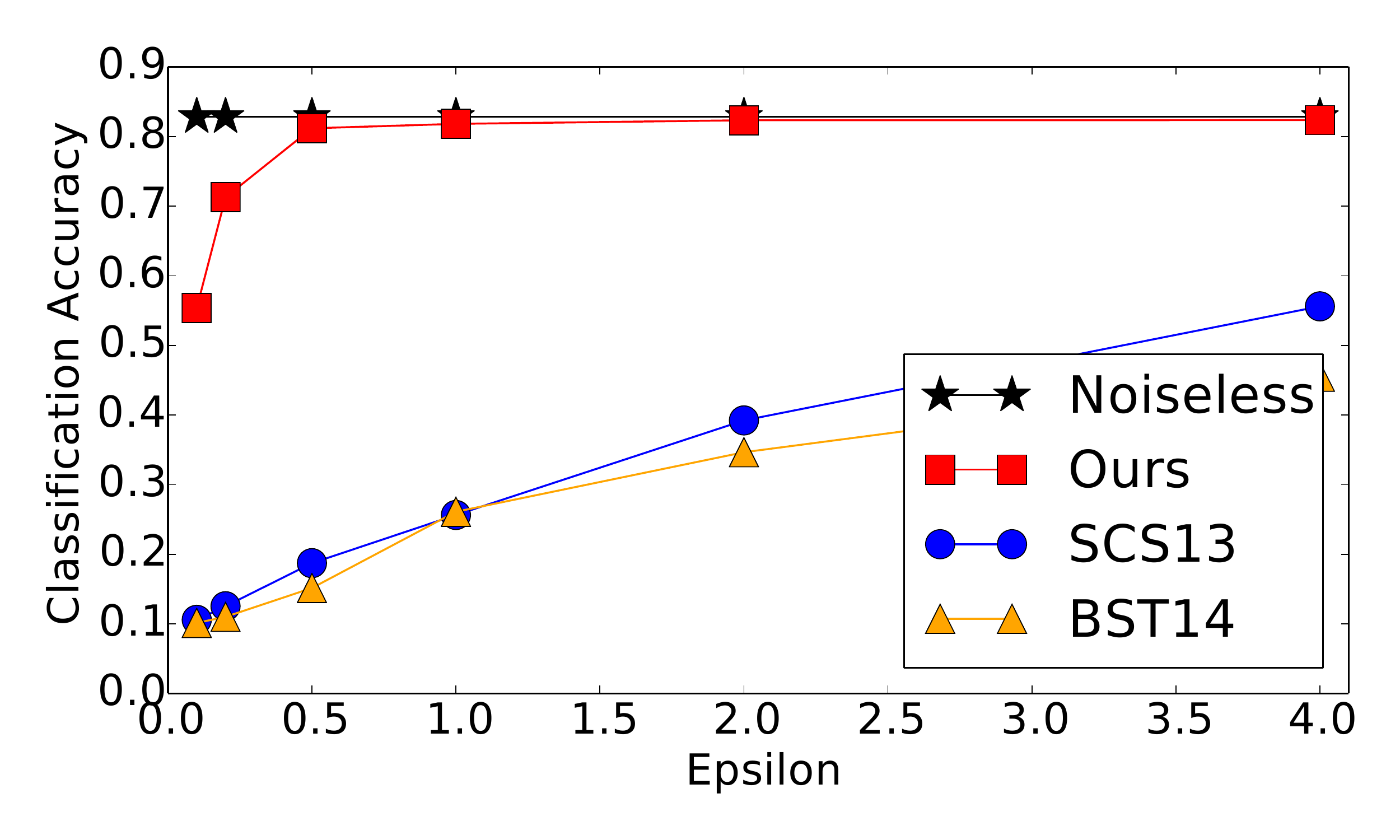}
  }
  \subfloat[$b=150$]{
    \includegraphics[width=0.24\columnwidth]
    {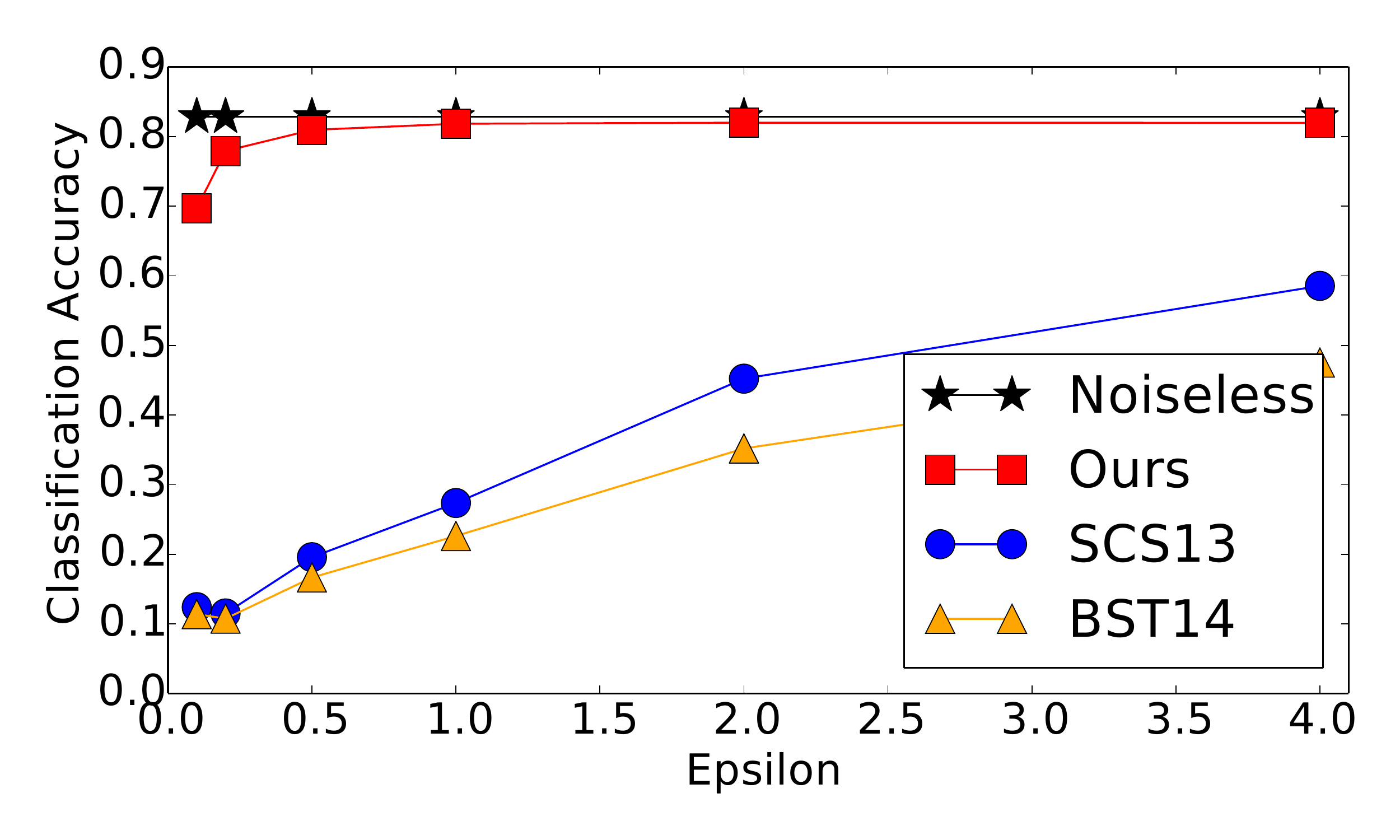}
  }
  \subfloat[$b=200$]{
    \includegraphics[width=0.24\columnwidth]
    {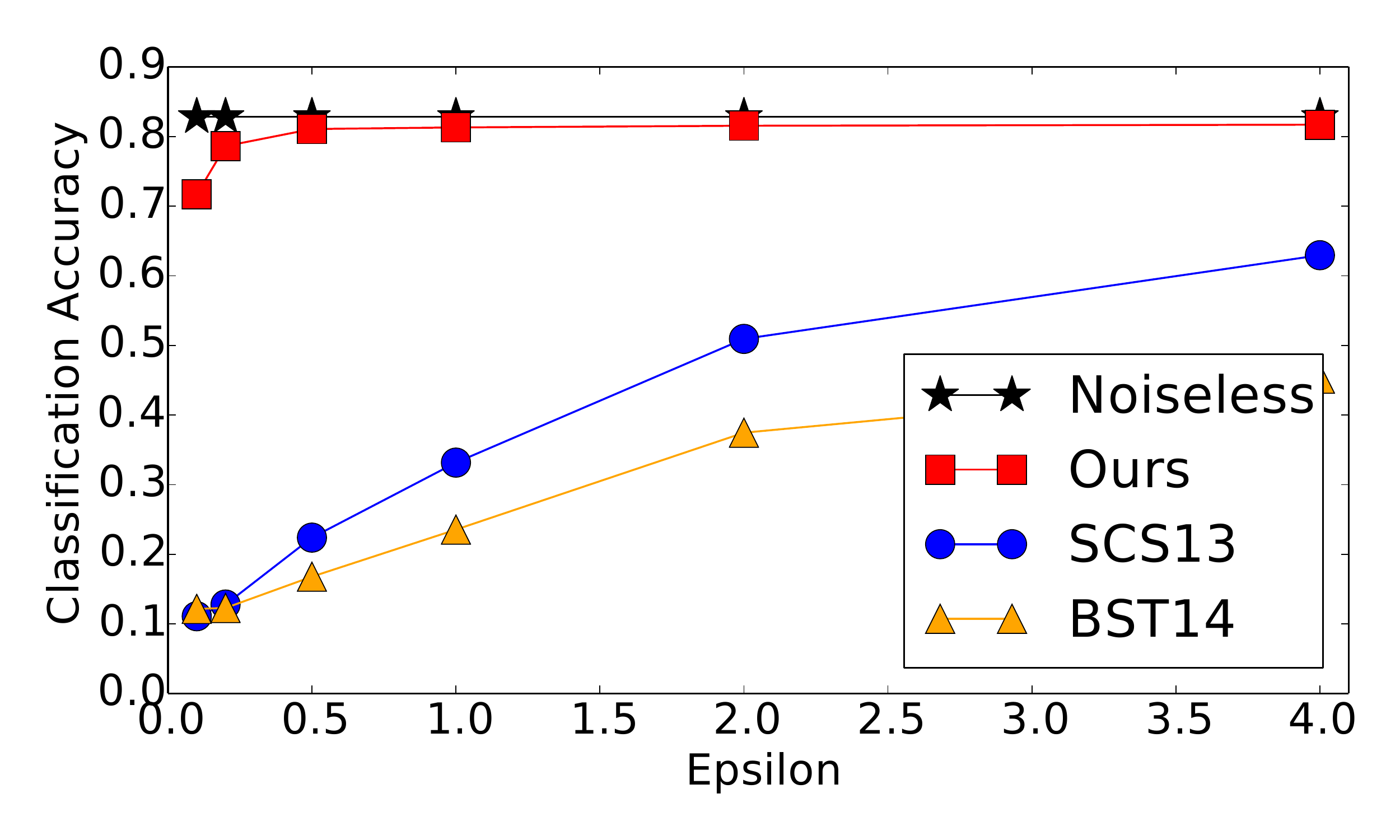}
  }
  \caption{
    {\bf Mini-batch Size vs. Accuracy: More Results.} We consider four mini-batch sizes $50, 100, 150, 200$.
  }
  \label{fig:accuracy-mbsize:mnist}
\end{figure*}

\section{BST14 with Constant Number of Epochs}
\label{sec:extended-bst14-algorithms}
\begin{algorithm}[!htp]
  \caption{Convex BST14 with Constant Epochs}
  \label{alg:convex-bst14-plus}
  \begin{algorithmic}[1]
    \Require{$\ell(\cdot, z)$ is convex for every $z$, $\eta \le 2/\beta$.}
    \Input{Data $S$, parameters $k, \varepsilon, \delta, d, L, R$}
    \Function{\sf\small ConvexBST14ConstNpass}{$S, k, \varepsilon, \delta, d, L, R$}
      \Let{$m$}{$|S|$}
      \Let{$T$}{$km$}
      \Let{$\delta_1$}{$\delta/km$}
      \Let{$\varepsilon_1$}{Solution of
        $\varepsilon=T\varepsilon_1(e^\varepsilon_1-1)
        + \sqrt{2T\ln(1/\delta_1)}\varepsilon_1$
      }
      \Let{$\varepsilon_2$}{$\min(1, m\varepsilon_1/2)$}
      \Let{$\sigma^2$}{$2\ln(1.25/\delta_1)/\varepsilon_2^2$}
      \Let{$w$}{$0$}
      \For{$t = 1, 2, \dots, T$}
        \State $i_t \sim [m]$ and let $(x_{i_t}, y_{i_t})$ be the data point.
        \State $z \sim {\cal N}(0, \sigma^2\iota I_d)$
        \Comment{$\iota = 1$ for logistic regression,
          and in general is the $L_2$-sensitivity localized to an iteration;
          $I_d$ is $d$-dimensional identity matrix.}
        \Let{$w$}{$\prod_{\cal W}\big( w
          - \eta_t(\nabla\ell(w; (x_{i_t}, y_{i_t}) + z) \big)$
          where $\eta_t = \frac{2R}{G\sqrt{t}}$ and $G = \sqrt{d\sigma^2 + b^2L^2}$.
        }
      \EndFor
      \State \Return{$w_T$}
    \EndFunction
  \end{algorithmic}
\end{algorithm}\newpage

\begin{algorithm}[!htp]
  \caption{Strongly Convex BST14 with Constant Epochs}
  \label{alg:strongly-convex-bst14-plus}
  \begin{algorithmic}[1]
    \Input{Data $S$, parameters $k, \varepsilon, \delta, d, L, R$}
    \Function{\sf\small StronglyConvexBST14ConstNpass}{$S, k, \varepsilon, \delta,
      d, L, R$}
      \Let{$m$}{$|S|$}
      \Let{$T$}{$km$}
      \Let{$\delta_1$}{$\delta/km$}
      \Let{$\varepsilon_1$}{Solution of
        $\varepsilon=T\varepsilon_1(e^\varepsilon_1-1)
        + \sqrt{2T\ln(1/\delta_1)}\varepsilon_1$
      }
      \Let{$\varepsilon_2$}{$\min(1, m\varepsilon_1/2)$}
      \Let{$\sigma^2$}{$2\ln(1.25/\delta_1)/\varepsilon_2^2$}
      \Let{$w$}{$0$}
      \For{$t = 1, 2, \dots, T$}
        \State $i_t \sim [m]$ and let $(x_{i_t}, y_{i_t})$ be the data point.
        \State $z \sim {\cal N}(0, \sigma^2\iota I_d)$
        \Let{$w$}{$\prod_{\cal W}\big( w
          - \eta_t(\nabla\ell(w; (x_{i_t}, y_{i_t}) + z) \big)$,
          $\eta_t = \frac{1}{\gamma t}$.
        }
      \EndFor
      \State \Return{$w$}
    \EndFunction
  \end{algorithmic}
\end{algorithm}


\end{document}